\documentclass[10pt]{article}

\usepackage[letterpaper,textwidth=5.5in,textheight=9in,top=1in,headheight=12pt,headsep=25pt,footskip=30pt]{geometry}
\usepackage[T1]{fontenc}
\usepackage[utf8]{inputenc}
\usepackage{lmodern}
\usepackage{microtype}
\usepackage{enumitem}
\usepackage{float}
\usepackage{xr-hyper}
\usepackage{longtable}
\usepackage{booktabs}
\usepackage{amsmath,amssymb,amsfonts,amsthm,bm}
\usepackage{mathtools}
\usepackage{bbm}
\usepackage{nicefrac}
\usepackage{graphicx}
\usepackage{wrapfig}
\usepackage{subcaption}
\usepackage{multirow}
\usepackage{makecell}
\usepackage[dvipsnames,table]{xcolor}
\usepackage[most]{tcolorbox}
\usepackage[ruled,vlined]{algorithm2e}
\usepackage{xspace}
\usepackage{tikz}
\usetikzlibrary{arrows.meta,positioning}
\usepackage{pifont}
\usepackage{titlesec}
\usepackage{tabularx}

\titlespacing*{\section}{0pt}{2.0ex plus .5ex minus .2ex}{0.8ex}
\titlespacing*{\subsection}{0pt}{1.6ex plus .4ex minus .2ex}{0.6ex}
\titlespacing*{\subsubsection}{0pt}{1.3ex plus .3ex minus .2ex}{0.5ex}
\setlist{nosep}

\definecolor{lightgray}{gray}{0.8}
\definecolor{boxborder}{HTML}{215F92}
\definecolor{boxback}{HTML}{E8F1F8}
\definecolor{dircolor}{HTML}{0072B2}
\definecolor{indcolor}{HTML}{E67E22}
\definecolor{totcolor}{HTML}{009E73}

\theoremstyle{plain}
\newtheorem{theorem}{Theorem}[section]
\newtheorem{proposition}[theorem]{Proposition}
\newtheorem{lemma}[theorem]{Lemma}
\newtheorem{corollary}[theorem]{Corollary}
\theoremstyle{definition}

\newtheorem{assumption}[theorem]{Assumption}
\theoremstyle{remark}
\newtheorem{remark}[theorem]{Remark}

\newtheorem{example}{Example}[section]

\newcommand{\fna}{\text{FNA}}

\newcommand{\E}{\mathbb{E}}

\newcommand*\diff{\mathop{}\!\mathrm{d}}

\newcommand*\circledgreen[1]{%
\tikz[baseline=(char.base)]{\node[shape=circle,draw=ForestGreen!60,fill=ForestGreen!10,thick,inner sep=1pt,font=\upshape] (char) {\scriptsize\textsf{#1}};}}

\newcommand*\circledblue[1]{%
\tikz[baseline=(char.base)]{\node[shape=circle,draw=NavyBlue!60,fill=NavyBlue!10,thick,inner sep=1pt] (char) {\scriptsize\textsf{#1}};}}

\newcommand{\dir}[1]{\textcolor{dircolor}{#1}}
\newcommand{\ind}[1]{\textcolor{indcolor}{#1}}
\newcommand{\tot}[1]{\textcolor{totcolor}{#1}}
\newcommand{\cmark}{\textcolor{teal}{\ding{51}}}
\newcommand{\xmark}{\textcolor{red!70!black}{\ding{55}}}

\makeatletter
\def\maketag@@@#1{\hbox{\m@th\normalfont\normalsize#1}}
\makeatother

\usepackage[backend=biber,style=numeric,natbib=true,doi=false,isbn=false,url=false,eprint=false,sorting=none]{biblatex}
\usepackage[colorlinks=true,linkcolor=NavyBlue,citecolor=NavyBlue,urlcolor=NavyBlue]{hyperref}

\title{Path-specific harm decomposition: A partial identification framework}

\author{%
  Ruizi Yan$^{1}$,
  Dennis Frauen$^{2,3}$,
  Maresa Schr{\"o}der$^{2,3}$,
  Stefan Feuerriegel$^{2,3}$\\[0.5em]
  $^{1}$Department of Statistics, University of Oxford\\
  $^{2}$LMU Munich\\
  $^{3}$Munich Center for Machine Learning (MCML)\\[0.5em]
}
\date{}

\begin{document}

\maketitle

\begin{abstract}
A central goal when designing treatment policies is often to \textit{``do no harm''}, that is, to avoid interventions that improve average outcomes while worsening outcomes for some individuals. A widely used notion for harm is the fraction of negatively affected (FNA), defined as the probability that an intervention decreases an individual's outcome. However, in many applications, treatments operate through mediators, and a single ``total'' FNA can obscure whether harm arises primarily through direct pathways or indirect (mediator-induced) pathways. In this work, we introduce a path-specific analogue of the FNA. For this, we disentangle total harm into direct and indirect harm in causal mediation settings. However, these quantities depend on joint distributions of potential outcomes that are not point-identified even in randomised controlled trials. As a remedy, we develop a novel partial identification framework for direct and indirect FNA. In our framework, 
we (i)~derive sharp Makarov bounds for the FNA, and (ii)~propose a semiparametrically efficient estimator with valid confidence intervals for these bounds under mild margin conditions. We demonstrate our framework across various numerical experiments. To the best of our knowledge, we are the first to study path-specific decomposition of causal harm and to develop an orthogonal inference framework for its analysis.
\end{abstract}

\section{Introduction}

The principle of \textit{``do no harm''} guides decision-making in fields like medicine~\citep{Lilienfeld2007PsychologicalTT,beauchamp2019principles,page2012fourprinciples} and social policy~\citep{sandvik2017donoharm}. In many applications, treatment policies are evaluated based on their average effects, yet an intervention that improves outcomes on average may still harm a subset of individuals~\citep{kravitz2004evidence,iomc2024precision}.

\textbf{Illustrative examples.} $\bullet$\,Consider a cancer treatment such as chemotherapy for lung cancer that improves survival on average but exposes
some patients to severe toxicity; for instance, in the KEYNOTE-189 trial, pembrolizumab plus chemotherapy improved survival in metastatic non-squamous non-small-cell lung cancer, but grade-3-or-higher adverse events occurred in $67.2\%$ of patients in the combination arm compared with $65.8\%$ in the placebo group \citep{gandhi2018pembrolizumab}. Here, optimizing treatment decisions based solely on average outcomes may therefore overlook patients who are harmed~\citep{Weberpals2025cmlinoncology}. $\bullet$\,Consider an education policy that improves average test scores but fails to reach or even disadvantage certain subgroups due to unequal access or differences in how the policy is implemented~\citep{oxman2024adverseeducation}. These examples highlight that optimizing policies based on overall averages can mask harm to a nontrivial fraction of individuals.

Prior work has formalized the above principle of ``do no harm'' through a causal notion of harm ~\citep{richens2022counterfactualharm,vaskov2024noharmRL,Straitouri2024controllingCF}. An individual is said to be harmed if their outcome under treatment is worse than under the alternative treatment. Hence, harm can be measured at the population level through the \textit{\textbf{fraction negatively affected (FNA)}}, which is defined as the probability that an intervention worsens an individual’s outcome~\citep{Kallus2022FNA_sharp_bounds,pena2023boundingprobabilitiesbenefitharm}.

\emph{Why FNA rather than a conditional average effect (CATE)?} An average effect compares marginal mean outcomes under the two interventions. In contrast, FNA asks a different question: what fraction of the \emph{same} individuals are made worse off by the intervention? Conditioning on an observed subgroup $Z=z$ can reveal heterogeneous average effects through a CATE, but it remains a comparison of two marginal means within the subgroup and does not identify the within-unit harm event.

Existing work on the FNA focuses on a single notion of \textit{total} harm \citep{Kallus2022FNA_sharp_bounds}, which captures whether an intervention worsens outcomes overall, but it does not distinguish \textit{how} harm arises. In particular, it does not distinguish between harm occurring through direct or indirect (mediator-induced) pathways. As a result, a single aggregate measure of harm may obscure important heterogeneity in how individuals are affected.

\emph{Why path-specific harm matters:} Understanding path-specific harm is important because it reveals whether harm arises through direct effects of a treatment or indirectly through mediating mechanisms. Treatment effects often operate through both pathways. For example, \begin{wrapfigure}{r}{0.4\textwidth}
\vspace{-10pt}
\centering
\begin{tikzpicture}[
    dag/.style={draw, circle, minimum size=0.5cm, inner sep=1pt, font=\scriptsize},
    ex/.style={draw=none, font=\scriptsize, align=center},
    >=latex
]

\node[dag] (X) at (0,1.8) {$X$};
\node[dag] (A) at (-1.5,0) {$A$};
\node[dag] (Y) at (1.5,0) {$Y$};
\node[dag] (M) at (0,0.8) {$M$};

\node[ex, above=1pt of X] {(age, heart rate, etc.)};
\node[ex, below=1pt of A] {(cancer therapy)};
\node[ex, below=1pt of Y] {(overall life quality)};
\node[ex, right=0.1pt of M] {(fatigue, anemia, etc.)};

\draw[->, thick, color=dircolor] (A) -- (Y);

\draw[->, thick, color=indcolor] (A) to[bend left=12] (M);
\draw[->, thick, color=indcolor] (M) to[bend left=12] (Y);

\draw[->, thin, gray!60] (X) -- (A);
\draw[->, thin, gray!60] (X) -- (Y);
\draw[->, thin, gray!60] (X) -- (M);

\end{tikzpicture}
\vspace{-6pt}
\caption{\footnotesize
Causal diagram with treatment $A$, mediator $M$, outcome $Y$, and covariates $X$.
Blue denotes the direct path $\dir{A\!\to\!Y}$ (\dir{direct} effect of cancer therapy), and magenta the indirect path
$\ind{A\!\to\!M\!\to\!Y}$ (likely \ind{indirect} harmful impact to the outcome through the adverse side effects).} 
\label{fig:casualdag}
\vspace{-10pt}
\end{wrapfigure}
a cancer treatment may improve survival through its therapeutic effect while simultaneously harming patients through severe side effects (Figure~\ref{fig:casualdag}).
Distinguishing these pathways helps identify the sources of harm and how they differ across individuals. More generally, different pathways can induce opposing effects, so that focusing only on total harm may mask pathway-specific risks that are crucial for decision-making.

\textbf{Our framework:} Here, we develop path-specific analogues of the FNA to capture direct and indirect sources of harm. Formally, we define direct and indirect FNA using nested potential outcomes, where the indirect component involves cross-world quantities of the form $Y(a, M(a'))$ for $a\neq a'$. These quantities are inherently unobservable, even in randomised controlled trials (RCTs), because they depend on joint distributions of potential outcomes. As a result, path-specific FNA is not point identifiable, which makes both the formalization and estimation non-trivial.

To address this, we develop a partial identification framework. Figure~\ref{fig:overview} provides an overview. Formally, 
we define path-specific FNA through comparisons of nested
potential outcomes and derive sharp Makarov bounds. The bounds characterize the tightest possible range of each FNA given the identified marginals. Then, we develop semiparametrically efficient, orthogonal estimators for these bounds. Our estimators are flexible and can be combined with arbitrary machine learning models (e.g., neural networks) for nuisance estimation, and the estimators further introduce debiasing to control estimation error. We demonstrate across a range of numerical experiments that our proposed estimators substantially improve over na{\"i}ve plug-in approaches, especially in finite samples.

\textbf{Contributions:} Our main contributions are three-fold:\footnote{Code is available at https://github.com/ruiziyan9/path-specific-causal-harm. 
} \circledblue{1}~We define path-specific analogues of FNA. \circledblue{2}~We develop a partial identification framework and, for this, derive conditional sharp bounds for the marginal (in)direct FNA. \circledblue{3}~We develop semiparametrically efficient estimators for the (in)direct FNA bounds.

\begin{figure}[htbp]
    \centering
    \includegraphics[width=\linewidth]{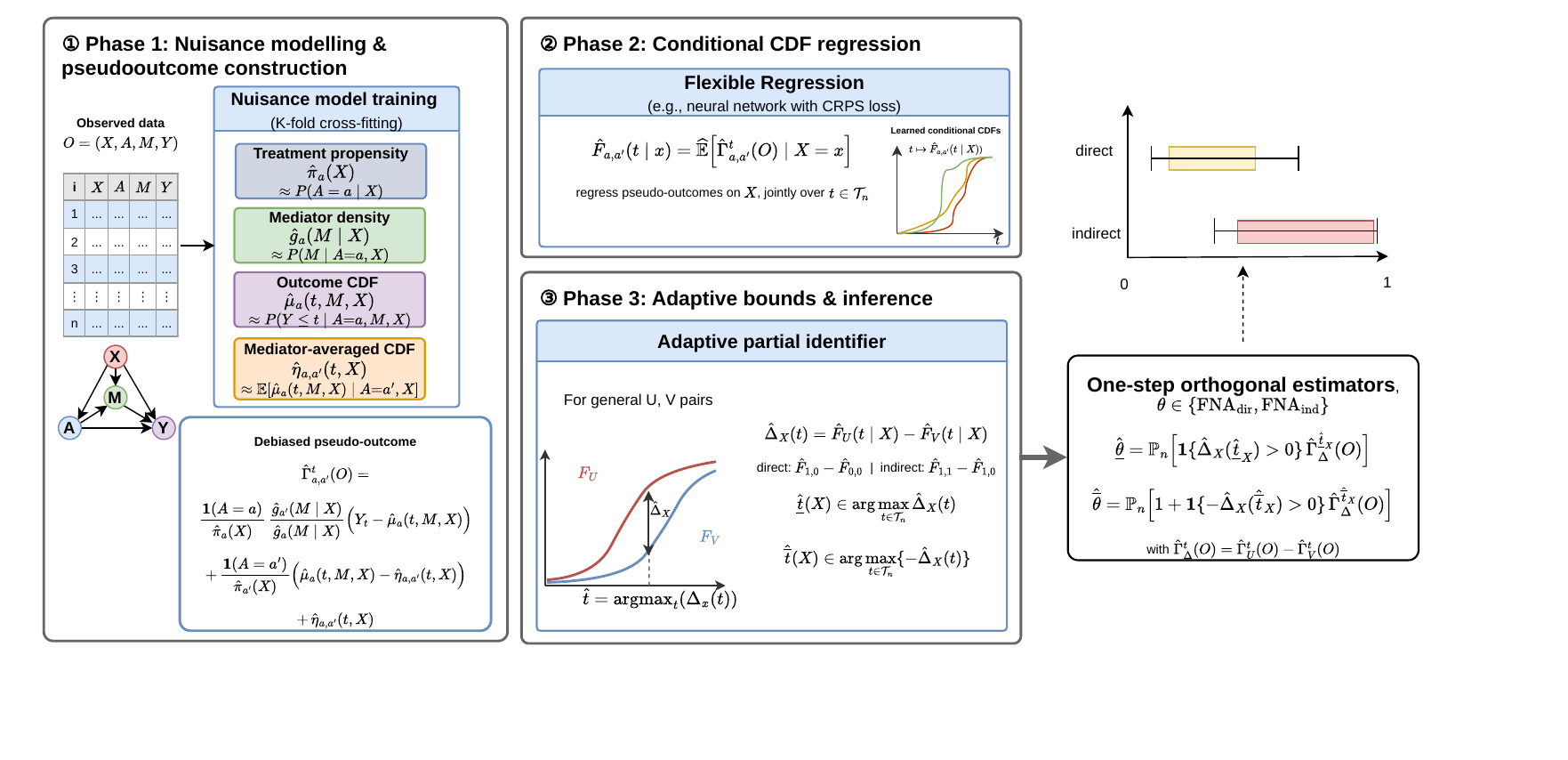}
    \vspace{-1.5cm}
    \caption{\textbf{Overview} of our semiparametric framework for estimating path-specific FNA (i.e., bounds on the direct, indirect, and total FNA).}
    \label{fig:overview}
    \vspace{-0.5cm}
\end{figure}

\section{Related Work}

Below, we review key literature streams; an extended related work is in Appendix~\ref{app:extendedrw}.

$\bullet$\,\textbf{Causal mediation analysis.} Causal mediation analysis decomposes treatment effects into direct and indirect effects \citep{imai2010mediationgeneral}. More generally, path-specific effects can formalize different causal pathways using nested counterfactuals such as $Y(a,M(a'))$ \citep{avin2005identifiability}. Later works have developed robust estimators and semi-parametric efficiency theory for this setting \citep{tchetgen2012semiparametric,Zheng2012TMLE,Farbmacher2022mediationdml}. $\Rightarrow$ \emph{However, this literature stream focuses on average effects, while we study individual-level harm.}

$\bullet$\,\textbf{Causal fairness.} Causal fairness is related to notions of harm, but with important differences. Causal fairness studies \textit{group-level} disparities across causal pathways \citep{kilbertus2017avoiding,zhang2018causalfairness,kusner2017counterfactual,imai2023principalfairness}. Some works also focus on path-specific notions of fairness \citep{chikahara2021fair,liu2023path,yao2023pathspecificfairapplication}. $\Rightarrow$ \emph{However, fairness targets disparities at the group level, while harm in the FNA notion measures adverse effect at the individual level.\footnote{We study whether a treatment makes the same individual worse off than under the alternative, and then aggregate this individual-level event into an FNA.} Further, FNA depends on unobserved joint potential outcomes and is therefore only partially identifiable in our setting, unlike most fairness criteria.}

$\bullet$\,\textbf{Harm notions.} Harm has long been studied in philosophy, where it is defined via \emph{actual causality}: an action harms an individual if it causes a worse outcome in a specific realized scenario~\citep{halpern2016actualcausality} and is thus often studied through utility-based comparisons~\citep{beckers2022actualcausalharm,beckers2023quantifyingharm}. More broadly, the literature can be grouped into three streams. (1) \textbf{Interventional} notions define harm through average effects, e.g., negative ATE~\citep{sarvet2025perspectives}. (2) \textbf{Counterfactual} notions, such as the probability of necessity, are located in layer~3 of Pearl's causal ladder~\citep{pearl1995ladder, Pearl1999probcausation} and assess, in a retrospective sense, whether a realized negative outcome was caused by the treatment~\citep{richens2022counterfactualharm,Straitouri2024controllingCF}. (3) \textbf{FNA} measures, in a prospective sense, how often individuals are made worse off, captured by the fraction negatively affected~\citep{Mueller2022individualeffect}. $\Rightarrow$ \emph{Our work is located in stream (3).}

Several recent works study the FNA as a measure of individual-level harm \citep{Kallus2022FNA_sharp_bounds, BenMichael2024assymmetric}. These works develop sharp bounds and robust estimators under partial identification, since the joint distribution of potential outcomes is unobserved even in randomised experiments \citep{heckman1997heterogeneity, Fan2010covariatessharpbounds}. $\Rightarrow$ \emph{However, existing work focuses on \textbf{total} harm and does \textbf{not} distinguish how harm arises across different causal pathways. Our work builds on this line of research and, for the first time, decomposes FNA into direct and indirect components.}

\begin{table}
\centering
\caption{\textbf{Overview of related work on modelling harm.} 
}
\label{tab:related_work_harm}
\footnotesize
\resizebox{\textwidth}{!}{%
\begin{tabular}{llllcccc}
\toprule
\textbf{Work} & \textbf{Harm notion} & \textbf{Level} & \textbf{Path-specific} & \textbf{Identification} & \textbf{Orthogonal} & \textbf{Cov.-assisted} \\
\midrule
\citet{beckers2023quantifyingharm}           & Actual causality  & event & \xmark & --- & \xmark & \xmark \\
\midrule
\citet{sarvet2025perspectives}              & Negative ATE                                & pop.\ & \xmark & point   & \xmark & \xmark \\
\midrule
\citet{richens2022counterfactualharm}       & Probability of necessity     & pop.\ & \cmark & partial & \xmark & \xmark \\
\citet{heckman1997heterogeneity}            & Total FNA                   & pop.\ & \xmark & partial & \xmark & \xmark \\
\citet{Fan2010covariatessharpbounds}        & Total FNA                   & pop.\ & \xmark & partial & \xmark & \cmark \\
\citet{Kallus2022FNA_sharp_bounds}          & Total FNA                   & pop.\ & \xmark & partial & \cmark & \cmark \\
\citet{BenMichael2024assymmetric}           & Total FNA                           & pop.\ & \xmark & partial & \cmark & \xmark \\
\midrule
\citet{pearl2001direct}                     & Negative NDE \& NIE             & pop.\ & \cmark & point   & \xmark & \xmark \\
\citet{imai2010mediationgeneral}            & Negative NDE \& NIE             & pop.\ & \cmark & point   & \xmark & \xmark \\
\citet{tchetgen2012semiparametric}          & Negative NDE \& NIE          & pop.\ & \cmark & point   & \cmark & \xmark \\
\citet{Farbmacher2022mediationdml}          & Negative NDE \& NIE                   & pop.\ & \cmark & point   & \cmark & \xmark \\
\midrule
\textbf{Our work}  & \textbf{Direct \& indirect FNA} & \textbf{pop.} & \cmark & partial & \cmark & \cmark \\
\bottomrule
\multicolumn{7}{p{1.4\textwidth}}{
\emph{Level}: whether the harm notion applies to a specific realized event (\emph{event}) or a population (\emph{pop.}).
\emph{Path-specific}: decomposes harm into direct and indirect components via a mediator. 
\emph{Identification}: \emph{point} = point-identified from observed data under standard assumptions; \emph{partial} = only bounds are available; \emph{---} = notion applies to a single past event, so population-level identification is not the relevant question.
\emph{Orthogonal}: Neyman-orthogonal / multiply robust estimator with valid ML inference. 
\emph{Cov.-assisted}: tighter bounds obtained by conditioning on covariates.
\emph{NDE/NIE}: natural direct/indirect effect.
}
\end{tabular}}%
\vspace{-0.4cm}
\end{table}

\textbf{Research gap.} To the best of our knowledge, we are the first to study path-specific harm and propose orthogonal learners for estimating Makarov bounds for direct and indirect FNA.

\section{Problem setup}

\textbf{Notation.} We use capital letters  (e.g., $Y$) to denote random variables and lower-case letters  (e.g., $y$) to denote their realizations. Expectations are taken with respect to the underlying data-generating distribution unless otherwise specified. Let $\pi_a(x) := \Pr(A=a\mid X=x)$ denote the treatment propensity score and let $g_a(m\mid x) := f_{M\mid A=a,X}(m\mid x)$ denote the mediation propensity.
For any threshold $t\in\mathbb R$, we define the indicator outcome $Y_t := \mathbbm 1\{Y\le t\}$ and the conditional outcome expectation $\mu_a(t,m,x) := \mathbb E\!\big[Y_t \mid A=a,M=m,X=x\big]$.
Define the mediator-averaged conditional cumulative distribution functional (CDF)
\begin{equation}
\eta_{a,a'}(t,x) := \int \mu_a(t,m,x)\, g_{a'}(m\mid x)\, dm,
\end{equation}
(where the integral is a sum if $M$ is discrete). The functional $\eta_{a,a'}(t,x)$ represents the conditional CDF of the nested potential outcome $Y(a,M(a'))$ given covariates $X=x$. The corresponding marginal nested CDF is
\begin{equation}
F_{a,a'}(t) := \Pr\!\big(Y(a,M(a'))\le t\big).
\end{equation}

\textbf{Observed data.} 
We observe i.i.d.\ samples $O=(X,A,M,Y)$, under the causal diagram from Fig~\ref{fig:casualdag}, where $A\in\{0,1\}$ is a binary treatment, $M$ is a mediator, $Y$ is an outcome, and $X$ denotes baseline covariates. Throughout, we interpret ``negative'' as \emph{lower} outcomes being worse; i.e., harm corresponds to a decrease in $Y$.

\textbf{Nested potential outcomes.}
We formulate the problem based on the Neyman-Rubin potential outcome (PO) framework~\citep{Rubin2025PO}. For each unit, define a collection of \emph{potential outcomes} $\{Y(a) : a \in \mathcal{A}\}$, where $Y(a)$ denotes the outcome that would be observed if, possibly contrary to fact, the treatment were set to $A = a$.
The observed outcome satisfies the \emph{consistency} relation, $Y = Y(A)$.
In the presence of a mediator $M$, define mediator potential outcomes 
$\{M(a) : a \in \mathcal{A}\}$ and nested potential outcomes 
$\{Y(a,m) : a \in \mathcal{A}, m \in \mathcal{M}\}$, where $Y(a,m)$ denotes 
the outcome under treatment $a$ and mediator set to $m$. The nested potential 
outcome is defined as $Y(a, M(a'))$.

\begin{tcolorbox}[
    colback=boxback, 
    colframe=boxborder, 
    arc=8pt,           
    outer arc=8pt, 
    boxrule=0.8pt,     
    left=10pt,         
    right=10pt,        
    top=5pt,          
    bottom=5pt        
]
\textbf{\dir{Direct}, \ind{indirect}, and \tot{total} FNA.} Each FNA corresponds to the probability that one nested potential outcome is smaller than another, given by
\begin{align}
\dir{\fna_{\mathrm{dir}}} 
&:= \dir{\Pr\!\big\{Y(1,M(0)) < Y(0,M(0))\big\}}, \\
\ind{\fna_{\mathrm{ind}}} 
&:= \ind{\Pr\!\big\{Y(1,M(1)) < Y(1,M(0))\big\}}, \\
\tot{\fna_{\mathrm{tot}}} 
&:= \tot{\Pr\!\big\{Y(1,M(1)) < Y(0,M(0))\big\}}.
\end{align}
\end{tcolorbox}
For intuition, consider a new drug with treatment $A$, treatment-induced toxicity as mediator $M$, and overall health as outcome $Y$, where larger $Y$ is better. Then $Y(1,M(0))<Y(0,M(0))$ means that the drug would harm the patient even if toxicity were held at the level it would have taken under standard care. In contrast, $Y(1,M(1))<Y(1,M(0))$ means that the drug-induced change in toxicity itself worsens the patient's health.

Note that the FNAs are not linear and thus nonadditive, that is, $\tot{\fna_{\mathrm{tot}} }\neq \dir{\fna_{\mathrm{dir}}} +\ind{\fna_{\mathrm{ind}} }$. To see why, consider the two-step counterfactual trajectory $Y(0,M(0))\rightarrow Y(1,M(0))\rightarrow Y(1,M(1))$. Direct FNA asks whether the first step is harmful, indirect FNA asks whether the second step is harmful, whereas total FNA compares only the initial and final outcomes. Thus, although the total FNA remains an important summary of overall harm, it does not reveal whether harms along one pathway are offset by benefits along another. Path-specific FNAs therefore complement the total FNA by providing a mechanism-level view of harm. A further discussion on the motivation and interpretation of the FNA parameters is provided in Appendix~\ref{app:fna-motivation-details}.

\begin{example}[Zero total FNA with non-zero path-specific FNA]
    Consider a population consisting of two equally sized subgroups. Assume $Y(0,M(0)) = 1, Y(1,M(0)) = 0, Y(1,M(1)) = 1$ for Group 1 and $Y(0,M(0)) = 0, Y(1,M(0)) = 1, Y(1,M(1)) = 0$ for Group 2. For Group 1, the \dir{direct} pathway is harmful since 
\(
Y(1,M(0)) < Y(0,M(0)),
\)
while the \ind{indirect} pathway is beneficial since 
\(
Y(1,M(1)) > Y(1,M(0)).
\)
For Group 2, the \ind{indirect} pathway is harmful while the \dir{direct} pathway is beneficial. Overall, $\tot{\fna_{\mathrm{tot}} = 0}$ which is not simply the sum $\dir{\fna_{\mathrm{dir}}}+ \ind{\fna_{\mathrm{ind}}} =1$.
Even though no individual is harmed under the total intervention, both the direct and indirect pathways exhibit non-zero harm. 
Hence, simply looking at \tot{$\fna_{\text{tot}}$} would ignore the different roles of \dir{direct} and \ind{indirect} pathways. 
\end{example}

\textbf{Objective.} The objective is to quantify the population fraction of \emph{individually harmed} units in terms of FNA. One challenge is that we never directly observe $Y(a,M(a'))$ where $a$ and $a'$ are counterfactual. Moreover, as the FNAs rely on the joint potential outcome distribution which is never observable even in RCTs, they are only partially identifiable. We give sharp bounds and then propose an orthogonal learner to estimate the bounds.

\section{Sharp covariate-assisted bounds for path-specific FNA}

In this section, we give the partial identification results underlying our path-specific harm analysis. We first focus on the identification of the conditional marginal distributions of the nested potential in Section~\ref{subsec:identifiability}. Then, we derive the Makarov bounds for path-specific FNAs in Section~\ref{subsec:bounds}.

\subsection{Identifiability of the nested marginals} \label{subsec:identifiability}

We begin by clarifying what can and cannot be identified from observed data. Each FNA is of the form $\Pr(U<V)$ for some pair $(U,V)$ with identified marginals but unknown joint dependence. Due to the fundamental problem of causal inference, the exact value of $\Pr(U<V)$ is not identifiable. However, while the joint distribution is unknown, the \emph{marginal} distributions of the nested potential outcomes can be identified under standard assumptions. These identified marginals will serve as the key inputs for constructing sharp bounds in the next subsection.

We make the standard identifiability assumptions for causal inference~\citep{robins1992identifiability, pearl2001direct}: (1)~\textbf{Consistency:} If $A=a$ then $M=M(a)$ and $Y=Y(a,M(a))$; (2)~\textbf{Sequential ignorability:} For all $a,a'\in\{0,1\}$, 
    $ \{Y(a',m),M(a)\} \perp A \mid X$ and  
    $Y(a',m) \perp M (a)\mid (A=a,X)$;
and (3)~\textbf{Overlap:} There exists $\epsilon>0$ such that $\epsilon \le \pi_1(X)\le 1-\epsilon$ a.s., and the mediator densities satisfy $g_a(m\mid X)>0$ whenever $g_{a'}(m\mid X)>0$ for the relevant $(a,a')$. Under these assumptions, the conditional CDFs of nested outcomes are identified by the mediation g-formula~\citep{imai2010mediationgeneral,robins1992identifiability}:
\begin{equation}
\eta_{a,a'}(t,X)
=\int \mu_a(t,m,X)\, g_{a'}(m\mid X)\, \diff m.
\label{eq:identified_marginal}
\end{equation}
Averaging  Eq.~\eqref{eq:identified_marginal} over $X$ yields the marginal CDFs: $F_{a,a'}(t) = \E[\eta_{a,a'}(t,X)] $.

\subsection{Derivation of sharp bounds for path-specific FNA}\label{subsec:bounds} 

The nested marginal potential outcome distributions can be identified, but the key challenge is that each FNA involves a comparison between two nested potential outcomes, for which the joint dependence is unknown. For two general random variables $U$ and $V$ with known marginals, the sharp bounds of $\Pr(U<V)$ arise from considering the most optimistic and pessimistic dependence structures between $U$ and $V$. Let $F_{U}$ and $F_{V}$ denote the CDFs of $U$ and $V$, respectively, and let us define $\Delta(t):=F_{U}(t)-F_{V}(t)$. The sharp Makarov bounds 
\citep{Makarov1982, Frank1987BestpossibleBF, heckman1997heterogeneity} characterize the tightest possible range of 
$\Pr(U<V)$ over all joint distributions consistent with these marginals: 
\begin{equation}
\underline{\theta} = \Big[\sup_{t\in\mathbb R}\Delta(t)\Big]_+,
\qquad
\overline{\theta} = 1 - \Big[\sup_{t\in\mathbb R}(-\Delta(t))\Big]_+,
\label{eq:makarov_sup_form}
\end{equation}
where $[z]_+ := \max\{z,0\}$.\footnote{For strict inequalities and discrete outcomes, left-limits may be used; we omit these technicalities for brevity.}. Plugging in the appropriately identified marginal CDFs into~\eqref{eq:makarov_sup_form} gives the sharp bounds for FNAs if only the unconditional marginals $F_{a,a'}(t)$ are available. However, in our setting, we observe additional covariates $X$ that provide more information about the data and thus can tighten the bounds \citep{FIRPO2019tightenbounds,Fan2010covariatessharpbounds}. Indeed, since $f\mapsto [\sup_t f(t)]_+$ is convex, Jensen's inequality implies
\begin{equation}
\mathbb E\left[
\Big[
\sup_t \Delta_X(t)
\Big]_+
\right]
\ge
\Big[
\sup_t \mathbb E\{\Delta_X(t)\}
\Big]_+
=
\Big[
\sup_t \{F_U(t)-F_V(t)\}
\Big]_+.
\end{equation}
Hence, the conditional lower bound is at least as large as the marginal lower bound, and the conditional upper bound is no larger than the marginal upper bound. So, instead of first averaging $\eta_{a,a'}(t,X)$ to obtain the marginal $F_{a,a'}(t)$ for Makarov bounds, we apply the Makarov bounds conditional on $X$ \emph{before} averaging over the distribution of $X$. This ordering preserves covariate-level information and yields tighter bounds, which leads to our first main result in the following.
\begin{tcolorbox}[
    colback=boxback, 
    colframe=boxborder, 
    arc=8pt,           
    outer arc=8pt, 
    boxrule=0.8pt,     
    left=10pt,         
    right=10pt,        
    top=5pt,          
    bottom=0.5pt        
]

\textbf{Sharp bounds for FNAs.} The sharp bounds for (in)direct FNA are given by
\begingroup
\fontsize{8}{11}\selectfont
\begin{equation}\label{eqn:fnabounds}
\begin{aligned}
\underline{\fna}_{\dir{\mathrm{dir}}}
&= \E\Big[\Big(\sup_{t\in\mathbb R}( F_{1,0}(t \mid X)-F_{0,0}(t \mid X))\Big)_+ \Big],
\overline{\fna}_{\dir{\mathrm{dir}}}
= 1 - \E\Big[\Big(\sup_{t\in\mathbb R}( F_{0,0}(t \mid X)-F_{1,0}(t \mid X))\Big)_+\Big] \\
\underline{\fna}_{\ind{\mathrm{ind}}}
&= \E\Big[\Big(\sup_{t\in\mathbb R}( F_{1,1}(t \mid X)-F_{1,0}(t \mid X))\Big)_+\Big], 
\overline{\fna}_{\ind{\mathrm{ind}}}
= 1 - \E\Big[\Big(\sup_{t\in\mathbb R}( F_{1,0}(t \mid X)-F_{1,1}(t \mid X))\Big)_+\Big]
\end{aligned}
\end{equation}
\endgroup
\end{tcolorbox}

\begin{remark}
    These bounds are \emph{sharp}, in the sense that every value in $[\underline{\theta},\overline{\theta}]$ is attainable by some joint distribution with the specified marginals~\citep{Makarov1982}. We provide a derivation in Appendix~\ref{app:sharp_bounds}.
\end{remark}

Since efficient inference for total FNA bounds is closely related to existing results, we focus below on the direct and indirect cases.

\section{Estimation and inference}\label{sec:estimation}

\textbf{Motivation: plug-in bias.}
A na{\"i}ve approach would estimate marginal CDFs and apply marginal Makarov bounds. However, this discards covariate information and yields unnecessarily wide bounds. A naturally better approach estimates the identified marginals $F_{a,a'}(t|X)$ by plug-in estimators of Eq.~\eqref{eq:identified_marginal}, and then plugs these into the supremum map from Eq.~\eqref{eqn:fnabounds}. However, the composition of nonparametric nuisance estimation and a non-smooth supremum functional can yield non-negligible finite-sample bias which is often called plug-in bias \citep{Farrell2015,ChernozhukovEtAl2018}.

To address plug-in bias, our proposed estimation involves two key improvements over the na{\"i}ve plug-in approach. First, at the identification level, we exploit covariate information to obtain tighter bounds by applying the Makarov construction conditionally on $X$ before averaging. This idea is related to classical results showing that conditioning sharpens bounds in partially identified problems~\citep{Fan2010covariatessharpbounds}. While conceptually straightforward, such conditioning introduces additional estimation challenges, as it requires learning conditional CDFs and evaluating a non-smooth supremum functional. This leads to our second improvement. By employing cross-fitting and standard double machine learning (DML), we develop a two-stage orthogonal learning procedure. The first stage constructs debiased pseudo-outcomes for which the conditional expectation equals the target CDF, and the second stage regress the debiased pseudo-outcomes on the covariates to learn these functions. Below, we first derive the efficient influence functions (EIFs) for problem (Section~\ref{sec:EIFs}) and then present an orthogonal estimator (Section~\ref{sec:orthogonal_estimator}).

\subsection{EIFs for Makarov bounds of direct and indirect FNA}
\label{sec:EIFs}

In the following, we state the target parameters and derive EIFs for the bounds. 

\emph{Notation.} For each pair $(a,a')$, we define the uncentred orthogonal score
\begin{equation}\label{eqn:orthogonalscore}
\begin{aligned}
\Gamma_{a,a'}^t(O)
&=
\frac{\mathbbm 1(A=a)}{\pi_a(X)}
\frac{g_{a'}(M\mid X)}{g_a(M\mid X)}
\big(Y_t-\mu_a(t,M,X)\big) \\
&\quad
+ \frac{\mathbbm 1(A=a')}{\pi_{a'}(X)}
\big(\mu_a(t,M,X)-\eta_{a,a'}(t,X)\big)
+ \eta_{a,a'}(t,X),
\end{aligned}
\end{equation}
where $Y_t := \mathbbm 1\{Y \le t\}$. By construction, we have $\mathbb E\!\left[\Gamma_{a,a'}^t(O)\mid X\right]=F_{a,a'}(t\mid X)$. That is, $\Gamma_{a,a'}^{\,t}(O)$ is a \emph{debiased pseudo-outcome}: a noisy function of the full observation $O$ whose conditional expectation given $X$ recovers the \emph{conditional} CDF $F_{a,a'}(t\mid X)=\eta_{a,a'}(t,X)$.

For notational convenience, let $(U,V)$ denote the relevant ordered pair of nested potential outcomes; for direct FNA, $(U,V)=\bigl(Y(1,M(0)),\,Y(0,M(0))\bigr)$ and for indirect FNA, $(U,V)=\bigl(Y(1,M(1)),\,Y(1,M(0))\bigr)$. Using the appropriate $(a,a')$ pairs, we  define the
\emph{pointwise CDF-difference score}
\begin{equation}
\label{eqn:deltascore}
\Gamma_\Delta^{\,t}(O)
\;:=\;
\Gamma_U^{\,t}(O) - \Gamma_V^{\,t}(O),
\end{equation}
where $\Gamma_U^{\,t}$ and $\Gamma_V^{\,t}$ are instances
of Eq.~\eqref{eqn:orthogonalscore}.
By linearity of conditional expectation,
\begin{equation}
\label{eq:delta_score_mean}
\mathbb E\!\bigl[\Gamma_\Delta^{\,t}(O)\mid X\bigr]
\;=\;
F_U(t\mid X) - F_V(t\mid X)
\;=\;
\Delta_X(t).
\end{equation}
Note that $\Gamma_\Delta^{\,t}(O)$ is a function of the full observation $O$,
not a conditional object; conditioning on $X$ is only invoked to identify
its mean~\eqref{eq:delta_score_mean}.

To streamline notation, let $\theta \in \{ \dir{\fna_{dir}}, \ind{\fna_{ind}}\}$ denote a generic FNA parameter. We use $\underline{\theta}$ and $\overline{\theta}$ to denote the lower and upper bounds. Then, the target bounds can be written as
\begin{equation}
\underline{\theta}
=
\E\left[\left(\sup_{t\in\mathbb R}\Delta_X(t)\right)_+\right],
\qquad
\overline{\theta} =1-\E\left[\left(\sup_{t\in\mathbb R}\{-\Delta_X(t)\}\right)_+\right].
\label{eq:conditional_makarov_bounds_general}
\end{equation}

For our EIF result, we assume the usual regularity for supremum functionals: the covariate-specific maximisers of $\Delta_X(t)$ and $-\Delta_X(t)$ are locally well separated, and the positive-part boundary is avoided, i.e., $\Pr\{\sup_t\Delta_X(t)=0\}=\Pr\{\sup_t-\Delta_X(t)=0\}=0$. We later refer to it as the margin condition. 
\begin{theorem}[EIFs for conditional direct/indirect FNA bounds]
\label{thm:eif_bounds}
Consider either 
$\Delta_{\mathrm{dir},X}(t)=F_{1,0}(t\mid X)-F_{0,0}(t\mid X)$
or
$\Delta_{\mathrm{ind},X}(t)=F_{1,1}(t\mid X)-F_{1,0}(t\mid X)$. Under the conditional margin condition, the conditional Makarov bounds in Eq.~\eqref{eq:conditional_makarov_bounds_general} are pathwise differentiable, with EIFs
\begin{align}
\phi_{\underline{\theta}}(O)
&=
\mathbbm 1\{\sup_{t}\Delta_X(t)>0\}\,
\Gamma_{\Delta}^{\,\underline{t}_X}(O)
-\underline{\theta},
\\
\phi_{\overline{\theta}}(O)
&=
1+
\mathbbm 1\{\sup_{t}(-\Delta_X(t))>0\}\,
\Gamma_{\Delta}^{\,\overline{t}_X}(O)
-\overline{\theta},
\end{align}
where $\underline{t}_X \in \arg\max_{t\in\mathbb R}\Delta_X(t)$ and $\overline{t}_X \in \arg\max_{t\in\mathbb R}\{-\Delta_X(t)\}$, and $\Gamma_{\Delta}^{\cdot}$ is defined in Eq.~\eqref{eqn:deltascore}.
\end{theorem}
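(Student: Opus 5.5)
We treat each bound as a two-layer functional: the map $\Delta_X(\cdot)\mapsto(\sup_t\Delta_X(t))_+$ is applied covariate-wise and the result is averaged over $X$. The plan is to compute the pathwise derivative along a regular one-dimensional parametric submodel $P_\varepsilon$ with score $s(O)=s_X(X)+s_A(A\mid X)+s_M(M\mid A,X)+s_Y(Y\mid M,A,X)$, and then to read off the EIF as the unique mean-zero element of the (nonparametric) tangent space that represents the derivative. By the chain rule,
\begin{equation*}
\frac{d}{d\varepsilon}\underline{\theta}(P_\varepsilon)\Big|_{\varepsilon=0}
=\E\big[\big((\sup_t\Delta_X(t))_+-\underline{\theta}\big)s_X(X)\big]
+\E\Big[\frac{d}{d\varepsilon}\big(\sup_t\Delta_{X,\varepsilon}(t)\big)_+\Big|_{\varepsilon=0}\Big].
\end{equation*}
The first term accounts for the change in the marginal law of $X$ and produces the contribution $(\sup_t\Delta_X(t))_+-\underline\theta$ to the EIF.

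For the second term we use two envelope-type facts, both supplied by the margin condition. First, because $\Pr\{\sup_t\Delta_X(t)=0\}=0$, the positive part is differentiable almost surely. Its derivative equals $\mathbbm 1\{\sup_t\Delta_X(t)>0\}$ times the derivative of the supremum. Second, because the maximiser $\underline t_X$ is locally well separated, a Danskin/envelope argument gives
\begin{equation*}
\frac{d}{d\varepsilon}\sup_t\Delta_{X,\varepsilon}(t)=\frac{d}{d\varepsilon}\Delta_{X,\varepsilon}(\underline t_X).
\end{equation*}
This holds because the perturbed maximiser converges to $\underline t_X$ and the first-order effect of moving the maximiser vanishes. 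The problem is thereby reduced to differentiating the fixed-$t$ functionals $\E[w(X)\,\eta_{a,a'}(t_X,X)]$ with the weight $w(X)=\mathbbm 1\{\sup_t\Delta_X(t)>0\}$ held fixed.

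For the fixed-$t$ functionals we invoke the standard mediation g-formula EIF (Tchetgen Tchetgen and Shpitser), applied conditionally on $X$. The conditional-on-$X$ part of the derivative of $\eta_{a,a'}(t,X)$ is represented by $\Gamma_{a,a'}^t(O)-\eta_{a,a'}(t,X)$. This is verified by differentiating $\int\mu_a(t,m,x)g_{a'}(m\mid x)\,dm$:
\begin{itemize}
\item perturbing $\mu_a$ yields the first, inverse-weighted residual term of Eq.~\eqref{eqn:orthogonalscore}, via the density-ratio and $\mathbbm 1(A=a)/\pi_a$ reweighting;
\item perturbing $g_{a'}$ yields the second term;
\item the scores $s_A$ integrate to zero.
\end{itemize}
Taking the difference over the $(a,a')$ pairs gives $w(X)\{\Gamma_\Delta^{\underline t_X}(O)-\Delta_X(\underline t_X)\}$. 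Adding the $s_X$ part, and using $w(X)\Delta_X(\underline t_X)=(\sup_t\Delta_X(t))_+$, collapses the expression to $\mathbbm 1\{\sup_t\Delta_X(t)>0\}\Gamma_\Delta^{\underline t_X}(O)-\underline\theta$. Since the model is nonparametric, this influence function is the EIF.

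The upper bound follows by the same argument applied to $-\Delta_X$. The score is $-\Gamma_\Delta$, which enters with a minus sign through $1-\E[\cdot]$, and the two minus signs combine into the stated form. We will note that the displayed $\phi_{\overline\theta}$ should be read with the sign convention of this substitution, and will check that it has mean zero.

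The main obstacle is the envelope step. We must justify exchanging $d/d\varepsilon$ with the supremum uniformly enough to pass the derivative under the expectation over $X$. This requires showing that $\sup_t|\Delta_{X,\varepsilon}-\Delta_X|=O(\varepsilon)$, that the argmax is consistent, and that the remainder is $o(\varepsilon)$ in $L_1(P_X)$. We would first try a second-order expansion under the well-separatedness assumption, bounding the remainder by the squared sup-norm perturbation, and then use dominated convergence.
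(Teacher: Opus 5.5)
Your proposal is correct and follows essentially the same route as the paper. The envelope theorem together with the non-kink condition reduces each bound to the fixed-threshold functional $\E[\mathbbm 1\{\sup_t\Delta_X(t)>0\}\,\Delta_X(\underline t_X)]$, whose influence function is then read off from the mediation g-formula score $\Gamma_\Delta^{t}$. Your explicit score-decomposition check and your attention to uniformity in the Danskin step are more careful than the paper, which simply invokes the envelope theorem and its orthogonal pseudo-outcome lemma.

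No sign reinterpretation is needed for the upper bound. Since $\overline\theta=1+\E[\mathbbm 1\{\sup_t(-\Delta_X(t))>0\}\,\Delta_X(\overline t_X)]$, the displayed $\phi_{\overline\theta}$ with $+\Gamma_\Delta^{\overline t_X}$ is exactly what your substitution produces.
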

\vspace{-0.3cm}
\begin{proof}
See Appendix~\ref{app:eif}.
\end{proof}
\vspace{-0.3cm}

Intuitively, the EIF evaluates the pointwise orthogonal score at the covariate-specific value of $t$ where the CDF difference is maximized. Crucially, the score $\Gamma_\Delta^{\,t}(O)$ already encodes
the relevant conditional CDFs through the nuisance functions
$(\pi_a, g_a, \mu_a, \eta_{a,a'})$;
no additional conditioning step is required.
The indicator $\mathbbm 1\{\sup_t\Delta_X(t)>0\}$ acts as a
\emph{covariate-specific switch}: it ensures the lower bound
contribution from a unit is zeroed out whenever the conditional
CDF difference has no positive part.

\vspace{-0.1cm}
\subsection{Two-stage orthogonal estimators for FNA bounds}
\label{sec:orthogonal_estimator}
\vspace{-0.1cm}

To estimate the EIFs from Theorem~\ref{thm:eif_bounds}, observe that the EIFs depend on three objects: (a) the nuisance functions $(\pi_a, g_a, \mu_a)$ entering~$\Gamma_{a,a'}^{\,t}$; (b) the conditional CDFs $F_{a,a'}(t\mid X)$ and the maximisers $\overline{\underline{t}}_X$; and (c) the data-adaptive maximisers themselves. The first component is standard: we can obtain cross-fitted estimates of the orthogonal score  $\widehat\Gamma_{a,a'}^{\,t}(O)$ by plugging nuisance estimates into Eq.~\eqref{eqn:orthogonalscore}. However, the main challenge lies in estimating the \emph{conditional CDFs} and the associated maximisers in a way that preserves the orthogonality required for valid inference. Next, we describe (1) how to estimate the conditional CDFs and (2) how to construct orthogonal estimators for the FNA bounds.

\textbf{(1) Estimate conditional CDFs as functions of $x$.}
$\widehat\Gamma_{a,a'}^t(O)$ serves as a debiased pseudo-outcome for $F_{a,a'}(t\mid X)$. Importantly, the pseudo-outcome $\widehat\Gamma_{a,a'}^t(O)$ itself is \emph{not} an estimate of $F_{a,a'}(t\mid X)$, but rather a noisy observation whose conditional 
expectation equals this quantity. However, for the covariate-assisted bounds, we require the conditional mapping  $t \mapsto F_{a,a'}(t\mid X)$ for each value of $X$.
We recover this function by regressing the pseudo-outcomes on $X$: 
\begin{equation}\label{eq:second_stage}
\widehat F_{a,a'}(t\mid x)\;:=\;\widehat{\mathbb E}\!\bigl[\widehat\Gamma_{a,a'}^{\,t}(O)\mid X=x\bigr],
\end{equation}

This second-stage regression yields a smooth estimate of the function $t \mapsto F_{a,a'}(t\mid X)$ while preserving the orthogonality property of the first-stage pseudo-outcomes. In principle, the regression can be implemented using a variety of flexible learners. In our experiments, we implement~\eqref{eq:second_stage} with a neural network trained under the CRPS loss~\citep{gneiting2007strictly}, which directly targets the full conditional CDF as a function of $t$ with monotonicity enforced as a hard architectural constraint.

\textbf{(2) Construct orthogonal estimators for bounds.}
Given estimates of the conditional CDFs, the FNA bounds are defined through pointwise differences. For a generic pair \((U,V)\), let $\widehat\Delta_X(t):=\widehat F_U(t\mid X)-\widehat F_V(t\mid X)$. We approximate the supremum over $t$ using a finite grid $\mathcal T_n$, and obtain data-adaptive maximisers:
\begin{equation*}
\hat {\underline{t}}(X)\in\arg\max_{t\in\mathcal T_n}\widehat\Delta_X(t),
\qquad
\hat {\overline{t}}(X)\in\arg\max_{t\in\mathcal T_n}\bigl\{-\widehat\Delta_X(t)\bigr\}.
\end{equation*}
These maximisers identify the thresholds at which the lower and upper bounds are attained for each covariate value.

We now state our estimators for the FNA bounds. Let $\widehat\Gamma_\Delta^{\,t}(O):=\widehat\Gamma_U^{\,t}(O)-\widehat\Gamma_V^{\,t}(O)$, which is the empirical analogue of the corresponding quantities defined in the previous subsection. Plugging the data-adaptive maximisers into the EIF yields the orthogonal (one-step) estimators.

\begin{tcolorbox}[
    colback=boxback, 
    colframe=boxborder, 
    arc=8pt,           
    outer arc=8pt, 
    boxrule=0.8pt,     
    left=10pt,         
    right=10pt,        
    top=5pt,          
    bottom=5pt        
]
\textbf{Orthogonal estimators for FNA bounds.} For $\theta \in \{\fna_{dir}, \fna_{ind}\}$, the orthogonal (one-step) estimators are:
\begin{align}
\hspace{-0.3cm}
\widehat{\underline{\theta}}
&=
\mathbb P_n\!\Bigl[
\mathbbm 1\bigl\{\widehat\Delta_X(\hat {\underline{t}}_X)>0\bigr\}\,
\widehat\Gamma_\Delta^{\,\hat {\underline{t}}_X}(O)
\Bigr],
\quad
\widehat{\overline{\theta}}
&=
\mathbb P_n\!\Bigl[
1+
\mathbbm 1\bigl\{-\widehat\Delta_X(\hat {\overline{t}}_X)>0\bigr\}\,
\widehat\Gamma_\Delta^{\,\hat {\overline{t}}_X}(O)
\Bigr].
\end{align}
\end{tcolorbox}

Intuitively, our above estimators proceed by (i)~ locating the threshold where harm is worst, and (ii)~evaluating this worst-case discrepancy using a debiased score. This construction allows us to control for bias arising from both nuisance estimation \emph{and} the supremum operator.

Given the estimates of the bounds, the estimated influence functions are
\begin{align}
\widehat\phi_{\underline{\theta}}(O)
&=
\mathbbm 1\bigl\{\widehat\Delta_X(\hat {\underline{t}}_X)>0\bigr\}\,
\widehat\Gamma_\Delta^{\,\hat {\underline{t}}_X}(O)
-\widehat{\underline{\theta}},
\label{eq:eif_est_lower}
\\
\widehat\phi_{{\overline{\theta}}}(O)
&=
1
+\mathbbm 1\bigl\{-\widehat\Delta_X(\hat {\overline{t}}_X)>0\bigr\}\,
\widehat\Gamma_\Delta^{\,\hat {\overline{t}}_X}(O)
-\widehat{\overline{\theta}}.
\label{eq:eif_est_upper}
\end{align}

\textbf{Valid confidence intervals (CIs).} Next, we describe how to obtain valid CIs and conservative sets for the estimated bounds. 

\begin{assumption}[Regularity for one-step inference]
\label{ass:main_dml_regular}
The nuisance estimators are cross-fitted and satisfy the usual overlap, consistency,
and second-order product-rate conditions for the orthogonal scores
$\Gamma_\Delta^t$. In addition, the threshold grid and the estimated maximisers are
stable in the sense that replacing the population maximisers by the data-adaptive
maximisers changes the target value and the corresponding empirical score only by
$o_p(n^{-1/2})$. The influence functions have finite, nonzero variance, and the
estimated influence functions are $L_2(P)$ consistent.
\end{assumption}

The regularity condition above is high-level. In
Appendix~\ref{app:asymptotic_normality} we state primitive sufficient conditions.

\begin{theorem}[Efficient inference for conditional direct/indirect FNA bounds]
\label{thm:dml_clt}
Let $\widehat{\underline\theta}$ and $\widehat{\overline\theta}$ be the cross-fitted
one-step estimators of the lower and upper conditional FNA bounds
$\underline\theta$ and $\overline\theta$. Suppose Assumption~\ref{ass:main_dml_regular}
holds. Then, $
\sqrt{n}\big(\widehat{\overline{\underline{\theta}}}-{\overline{\underline{\theta}}}\big)
=
\frac{1}{\sqrt{n}}\sum_{i=1}^n
\phi_{\overline{\underline{\theta}}}(O_i)
+
o_p(1)$.
Consequently,
$\frac{\widehat{\overline{\underline{\theta}}}-{\overline{\underline{\theta}}}}
{\widehat{\mathrm{se}}({\overline{\underline{\theta}}})}
\ \overset{d}{\longrightarrow}\ 
\mathcal N(0,1)$, 
with the estimated standard errors $
\widehat{\mathrm{se}}(\widehat{\overline{\underline{\theta}}})
:=
\sqrt{\mathbb V_n(\widehat\phi_{\overline{\underline{\theta}}})/n}$.

Furthermore, \(\widehat{\overline{\underline{\theta}}}\) is asymptotically efficient.
A \((1-\alpha)\) Wald interval for each bound is
\begin{equation}
\overline{\underline{\mathrm{CI}}}_{1-\alpha}
=
\left[
\widehat{\overline{\underline{\theta}}}
\pm
z_{1-\alpha/2}\,
\widehat{\mathrm{se}}(\widehat{\overline{\underline{\theta}}})
\right]\cap[0,1].
\end{equation}
A conservative confidence set for the partially identified FNA parameter 
\(\theta\in[\underline{\theta},\overline{\theta}]\) is
\begin{equation}
\left[
\max\{0,\ \widehat{\underline{\theta}}-z_{1-\alpha/2}\widehat{\mathrm{se}}(\widehat{\underline{\theta}})\},
\ 
\min\{1,\ \widehat{\overline{\theta}}+z_{1-\alpha/2}\widehat{\mathrm{se}}(\widehat{\overline{\theta}})\}
\right].
\end{equation}
\end{theorem}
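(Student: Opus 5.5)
The plan is the standard one-step/DML argument. I will write the estimator's error as an empirical-process term, a drift term and a second-order remainder, and show the last two are $o_p(n^{-1/2})$ under Assumption~\ref{ass:main_dml_regular}. I treat the lower bound; the upper bound is identical with $-\Delta_X$ in place of $\Delta_X$ and the constant $1$ carried along.

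Let $\psi(O;\eta,\tau)=\mathbbm 1\{\Delta_X(\tau_X)>0\}\Gamma_\Delta^{\tau_X}(O)$. Here the nuisance $\eta=(\pi_a,g_a,\mu_a,F_{a,a'})$ and $\tau$ is a threshold map. By cross-fitting, $\widehat{\underline\theta}=\mathbb P_n\psi(O;\hat\eta,\hat\tau)$, and I decompose
\begin{equation*}
\widehat{\underline\theta}-\underline\theta=(\mathbb P_n-P)\psi(\cdot;\eta,\underline t)+(\mathbb P_n-P)\{\psi(\cdot;\hat\eta,\hat\tau)-\psi(\cdot;\eta,\underline t)\}+\{P\psi(\cdot;\hat\eta,\hat\tau)-\underline\theta\}.
\end{equation*}
The first term equals $\mathbb P_n\phi_{\underline\theta}$ by Theorem~\ref{thm:eif_bounds}, since $P\psi(\cdot;\eta,\underline t)=\underline\theta$. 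For the second term I work fold by fold, conditioning on the training sample. It is then a centred mean of i.i.d.\ terms with conditional variance bounded by $\|\psi(\hat\eta,\hat\tau)-\psi(\eta,\underline t)\|_{L_2}^2$, which is $o_p(1)$ by the $L_2$ consistency and stability parts of the assumption. Chebyshev's inequality therefore gives $o_p(n^{-1/2})$ without any Donsker condition.

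The third term is the main work, and I split it at the intermediate point $P\psi(\cdot;\eta,\hat\tau)$. First, for any fixed $\tau$, $E[\Gamma_\Delta^{t}(O;\hat\eta)\mid X]-\Delta_X(t)$ is a mediation-formula remainder. Writing it out and iterating expectations over $M$ given $(A,X)$ and then $A$ given $X$, the first-order errors cancel by construction of the score. What remains are product terms of the form $(\hat\pi_a-\pi_a)(\hat\mu_a-\mu_a)$, $(\hat g_{a'}/\hat g_a-g_{a'}/g_a)(\hat\mu_a-\mu_a)$ and $(\hat\pi_{a'}-\pi_{a'})(\hat\eta-\eta)$. Overlap and the product-rate conditions make these $o_p(n^{-1/2})$ uniformly in $t$ on the grid. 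Second, I must handle the indicator and the threshold, which give $E[\mathbbm 1\{\hat\Delta_X(\hat\tau_X)>0\}\Delta_X(\hat\tau_X)]-E[(\sup_t\Delta_X(t))_+]$. I would argue that this is second order:
\begin{itemize}
\item The indicator part is nonzero only when $\hat\Delta$ and $\Delta$ disagree in sign near zero. Under the margin condition $\Pr\{\sup_t\Delta_X(t)=0\}=0$, strengthened to a margin rate, this contributes at most $\|\hat\Delta-\Delta\|_\infty^{1+\alpha}$.
\item The argmax part: because $\underline t_X$ is a well-separated maximiser, $\Delta_X(\underline t_X)-\Delta_X(\hat\tau_X)\le 2\sup_t|\hat\Delta_X-\Delta_X|$ plus grid error. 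A quadratic-curvature argument near the maximiser sharpens this to a squared rate.
\end{itemize}
This is exactly where the assumption's statement that replacing population maximisers by data-adaptive ones changes the target by $o_p(n^{-1/2})$ is invoked. I expect this step to be the main obstacle, since the sup functional is non-smooth. If only a first-order bound is available, I would fall back on the high-level assumption directly.

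With the linear expansion in hand, the Lindeberg–Lévy CLT gives asymptotic normality with variance $\mathrm{Var}(\phi_{\underline\theta})$, which is finite and nonzero by assumption. I then need the variance estimate to converge: $\mathbb V_n(\hat\phi)\to\mathrm{Var}(\phi)$ in probability follows from the $L_2(P)$ consistency of $\hat\phi$ together with the law of large numbers. Slutsky's lemma then yields the studentized standard normal limit. Efficiency is immediate because the estimator is asymptotically linear with influence function equal to the EIF from Theorem~\ref{thm:eif_bounds}. The Wald intervals follow, and intersecting with $[0,1]$ can only improve coverage because the true bounds lie in $[0,1]$. Finally, since $\underline\theta\le\theta\le\overline\theta$, the union-style interval covers $\theta$ whenever both one-sided events hold. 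Coverage is at least $1-\alpha$ asymptotically, because each endpoint's interval misses in the relevant direction with probability at most $\alpha/2$.
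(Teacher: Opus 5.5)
Your proof is correct and uses the same ingredients as the paper's argument: the EIF average, an orthogonality-driven product remainder, a sign-switch term and a maximiser value-loss term, followed by the CLT, variance consistency and Slutsky. You organise them differently, though.

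\begin{itemize}
\item \textbf{The paper's split.} It decomposes the error component by component into $R_{1n}$ (nuisance), $R_{2n}$ (switch) and $R_{3n}$ (maximiser shift). Each carries its own stochastic fluctuation, and the paper must impose a separate stochastic-equicontinuity condition, (C) of Assumption~\ref{ass:max_stability}, to control the fluctuation in $R_{3n}$.
\item \textbf{Your split.} You peel off one cross-fitted empirical-process term at the start and kill it by conditional Chebyshev. The key observation is that $L_2(P)$ consistency of $\widehat\phi_{\underline\theta}$ bounds the conditional variance of $\psi(\cdot;\hat\eta,\hat\tau)-\psi(\cdot;\eta,\underline t)$.
\item \textbf{What this buys.} Condition (C) and the stochastic parts of $R_{1n}$ and $R_{2n}$ become consequences of sample splitting rather than additional assumptions. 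What remains is a purely deterministic drift. You split it into the mediation product remainder, which is needed in sup-over-$t$ norm because $\hat\tau_X$ varies with $X$, and the indicator/argmax terms.
\end{itemize}

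Two caveats apply to that last step.

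\begin{itemize}
\item \textbf{The curvature sharpening does not follow from sup-norm control.} Suppose $\Delta_X(\underline t_X)-\Delta_X(t)\approx c(t-\underline t_X)^2$ locally. Place a narrow error bump of height $2\epsilon$ at distance $\sqrt{\epsilon/c}$ from $\underline t_X$. This moves the argmax of $\widehat\Delta_X$ there and produces a value loss of order $\epsilon$. So a squared rate requires control of the $t$-derivative of $\widehat\Delta_X-\Delta_X$, not just its sup norm.
\item \textbf{The switch bound uses an extra hypothesis.} Your bound $\|\widehat\Delta-\Delta\|_\infty^{1+\alpha}$ relies on a margin-rate condition that is not among the stated hypotheses. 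The main-text assumption only excludes the kink with probability one. The paper instead simply assumes switch stability, its condition (B).
\end{itemize}

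You already fall back on the high-level stability clause for the value loss, so do the same for the switch term. Read the ``target value'' in Assumption~\ref{ass:main_dml_regular} as $\mathbb E[\mathbbm 1\{\widehat\Delta_X(\hat\tau_X)>0\}\Delta_X(\hat\tau_X)]$. With that reading, your argument goes through under the theorem's stated hypothesis.
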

This result provides principled uncertainty quantification for harm. In particular, we obtain conservative sets for the fraction of individuals harmed by an intervention which provides a measure for worst-case FNA.

\section{Experiments}
\label{sec:experiments}

The main objective of our numerical experiments is to empirically validate our theoretical finding. Our experiments are inspired by standard practice in the partial identification literature~\citep{schroder2024causal,Kallus2022FNA_sharp_bounds}: \circledgreen{i} to show that our proposed orthogonal estimator has better finite-sample performance relative to plug-in estimators, and \circledgreen{ii} to demonstrate that our orthogonal estimator yields valid confidence intervals for the FNA bounds.

\textbf{Data-generating process (DGP).}
We simulate data with sample size $n=5000$ from a nonlinear causal mediation model with a continuous covariate $X$, binary treatment $A$, binary mediator $M$, and continuous outcome $Y$. Treatment and mediator assignment follow logistic models, while the outcome model has heterogeneous direct and mediator effects that vary smoothly with $X$. Full details of the DGP are given in Appendix~\ref{app:dgp}. 

\textbf{Estimators.}
We benchmark our proposed orthogonal estimator (\textbf{orthogonal)} against the plug-in estimator (\textbf{plug-in}). Both estimators target the covariate-assisted Makarov bounds in Eq.~\eqref{eqn:fnabounds}. The plug-in estimator directly inserts estimated conditional CDFs into the bound functionals. In contrast, the orthogonal estimator first constructs cross-fitted debiased pseudo-outcomes for the nested CDFs and then learns the conditional maps \(F_{a,a'}(t\mid X)\) by second-stage regression. The true conditional CDFs are available in closed form and are used only to compute oracle benchmark bounds. To ensure a \emph{fair} comparison, both estimators use the same nuisance components, model classes, and tuning, so that any performance differences are solely due to the estimation strategy.
 The full implementation details are provided in Appendix~\ref{app:experimental_details}.

\textbf{Performance metrics.} We report several performance metrics to evaluate the estimators. For each pathway, (i)~\textbf{FNA} denotes the true fraction negatively affected, and \textbf{Oracle} denotes the oracle covariate-assisted bounds. For each estimator, (ii)~\textbf{Mean est.} reports the average estimated lower and upper bounds across simulations. (iii)~\textbf{Bias} is defined as the difference between the estimated and oracle bounds, reported separately for the lower and upper bound. (iv)~\textbf{Coverage} measures the empirical coverage of the oracle bounds by the corresponding conservative sets. Finally, (v)~\textbf{Width} denotes the average length of the estimated bounds. Ideally, a width comparable to the oracle width is desirable.

\textbf{Results.}
Table~\ref{tab:oracle_effects_fna} reports oracle average effects and true path-specific FNAs under the DGP. 
\begin{wraptable}[9]{r}{0.45\textwidth} 
\centering
\vspace{-0.16cm}
\caption{\textbf{Oracle average effects and true path-specific FNA.}}
\label{tab:oracle_effects_fna}
\footnotesize
\begin{tabular}{lll}
\toprule
Pathway & Average effect & True FNA \\
\midrule
Direct & \(\mathrm{NDE}=1.154\) & \(\fna_{\mathrm{dir}}=0.410\) \\
Indirect & \(\mathrm{NIE}=0.067\) & \(\fna_{\mathrm{ind}}=0.223\) \\
Total & \(\mathrm{ATE}=1.220\) & \(\fna_{\mathrm{tot}}=0.220\) \\
\bottomrule
\end{tabular}
\end{wraptable}
We make the following observations. Positive average effects can exist, even with nonzero path-specific FNA. For example, although the average direct effect is positive, $\mathrm{NDE}=1.154>0$, the direct FNA is also substantial. Similarly, the total average effect is positive, $\mathrm{ATE}=1.22>0$, while the total FNA remains nonzero. This illustrates the central motivation for path-specific FNA: \emph{average pathway effects alone do not quantify how often individuals are made worse off.}

Table~\ref{tab:estimation_results} compares the plug-in and orthogonal estimators for the lower and upper bounds of direct, indirect, and total FNA. Results are based on $300$ runs with different seeds. We make the following observations. (1)~The orthogonal estimator has smaller bias than the plug-in estimator, especially for indirect and total FNA bounds. This confirms the effectiveness of our method. The improvement is particularly pronounced in the moderate-sample regime considered here, where plug-in estimation suffers from nuisance-estimation bias compounded by the non-smooth supremum operation in the Makarov functional. (2)~The orthogonal estimator achieves perfect empirical coverage for the estimated bounds, indicating reliable uncertainty quantification. (3)~The orthogonal estimator also produces substantially narrower intervals while maintaining coverage, reflecting tighter and more informative bounds. $\Rightarrow$ \textbf{Takeaway:} \emph{The results directly support our theoretical claims: our orthogonal estimator achieves \circledgreen{i} improved finite-sample performance over plug-in estimators, and \circledgreen{ii} valid confidence intervals for the FNA bounds.}

\begin{table}[ht]
\centering
\caption{\textbf{Finite-sample ($n=5000$) performance for estimating covariate-assisted FNA bounds.} Results: mean over 300 simulations. Bold indicates the better-performing estimator within each pathway.}
\label{tab:estimation_results}
\footnotesize
\setlength{\tabcolsep}{2pt}
\begin{tabular}{clccccccl}
\toprule
\textbf{$d$}
& \textbf{Pathway}
& \textbf{FNA}
& \textbf{Oracle}
& \textbf{Estimator}
& \textbf{Mean est.}
& \textbf{Bias}
& \(\mathbf{Coverage}\)
& \textbf{Width} \\
\midrule
\multirow{6}{*}{\rotatebox[origin=c]{90}{$d=3$}}
& \multirow{2}{*}{Direct}
& \multirow{2}{*}{\(0.410\)}
& \multirow{2}{*}{\([0.220,\,0.450]\)}
& Plug-in
& \([0.124,\,0.492]\)
& \((-0.096,\,+0.042)\)
& \({1.000}\)
& \(0.368\) \\
&
&
&
& \textbf{Orthogonal}
& \(\mathbf{[0.156,\,0.463]}\)
& \((\mathbf{-0.064},\,\mathbf{+0.013})\)
& \(\mathbf{1.000}\)
& \(\mathbf{0.307}\) \\
\addlinespace[1pt]
\cmidrule(l){2-9}
& \multirow{2}{*}{Indirect}
& \multirow{2}{*}{\(0.223\)}
& \multirow{2}{*}{\([0.177,\,0.797]\)}
& Plug-in
& \([0.037,\,0.954]\)
& \((-0.140,\,+0.157)\)
& \(1.000\)
& \(0.917\) \\
&  
& 
&
& \textbf{Orthogonal}
& \(\mathbf{[0.128,\, 0.850]}\)
& \((\mathbf{-0.049},\,\mathbf{+0.053})\)
& \(\mathbf{1.000}\)
& \(\mathbf{ 0.722}\) \\
\addlinespace[1pt]
\cmidrule(l){2-9}
& \multirow{2}{*}{Total}
& \multirow{2}{*}{\(0.220\)}
& \multirow{2}{*}{\([ 0.082,\,0.333]\)}
& Plug-in
& \([0.104,\,0.475]\)
& \((+0.022,\,+0.142)\)
& \(0.070\)
& \(0.371\) \\
&  
&
&
& \textbf{Orthogonal}
& \(\mathbf{[0.069,\,0.361]}\)
& \((\mathbf{-0.013},\,\mathbf{+0.028})\)
& \(\mathbf{1.000}\)
& \(\mathbf{0.292}\) \\
\midrule
\midrule
\multirow{6}{*}{\rotatebox[origin=c]{90}{$d=5$}}
& \multirow{2}{*}{Direct}
& \multirow{2}{*}{\(0.410\)}
& \multirow{2}{*}{\([0.221,\,0.450]\)}
& Plug-in
& \([0.116,\,0.495]\)
& \((-0.105,\,+0.045)\)
& \(1.000\)
& \(0.379\) \\
&
&
& 
& \textbf{Orthogonal}
& \(\mathbf{[0.149,\,0.462]}\)
& \(\mathbf{(-0.072,\,+0.012)}\)
& \(\mathbf{1.000}\)
& \(\mathbf{0.348}\) \\
\addlinespace[1pt]
\cmidrule(l){2-9}
& \multirow{2}{*}{Indirect}
& \multirow{2}{*}{\(0.223\)}
& \multirow{2}{*}{\([0.175,\,0.798]\)}
& Plug-in
& \([0.035,\,0.955]\)
& \((-0.140,\,+0.157)\)
& \(1.000\)
& \(0.920\) \\
&
&
&
& \textbf{Orthogonal}
& \(\mathbf{[0.113,\,0.864]}\)
& \(\mathbf{(-0.062,\,+0.066)}\)
& \(\mathbf{1.000}\)
& \(\mathbf{0.751}\) \\
\addlinespace[1pt]
\cmidrule(l){2-9}
& \multirow{2}{*}{Total}
& \multirow{2}{*}{\(0.220\)}
& \multirow{2}{*}{\([0.084,\,0.334]\)}
& Plug-in
& \([0.097,\,0.483]\)
& \((-0.013,\,+0.149)\)
& \(0.897\)
& \(0.386\) \\
&
&
&
& \textbf{Orthogonal}
& \(\mathbf{[0.066,\,0.375]}\)
& \(\mathbf{(-0.018,\,+0.041)}\)
& \(\mathbf{1.000}\)
& \(\mathbf{0.309}\) \\
\midrule
\midrule
\multirow{6}{*}{\rotatebox[origin=c]{90}{$d=10$}}
& \multirow{2}{*}{Direct}
& \multirow{2}{*}{\(0.410\)}
& \multirow{2}{*}{\([0.193,\,0.451]\)}
& Plug-in
& \([ 0.108,\,0.498]\)
& \((-0.085,\,+0.047)\)
& \(1.000\)
& \(0.390\) \\
&  
&
&
& \textbf{Orthogonal}
& \(\mathbf{[0.116,\,0.462]}\)
& \(\mathbf{(-0.077,\,+0.011)}\)
& \(\mathbf{1.000}\)
& \(\mathbf{0.346}\) \\
\addlinespace[1pt]
\cmidrule(l){2-9}
& \multirow{2}{*}{Indirect}
& \multirow{2}{*}{\(0.223\)}
& \multirow{2}{*}{\([0.175,\,0.805]\)}
& Plug-in
& \([0.030,\,0.961]\)
& \((-0.145,\,+0.156)\)
& \(1.000\)
& \(0.931\) \\
& 
& 
&
& \textbf{Orthogonal}
& \(\mathbf{[0.089,\, 0.883]}\)
& \(\mathbf{(-0.086,\,+0.078)}\)
& \(\mathbf{1.000}\)
& \(\mathbf{0.794}\) \\
\addlinespace[1pt]
\cmidrule(l){2-9}
& \multirow{2}{*}{Total}
& \multirow{2}{*}{\(0.220\)}
& \multirow{2}{*}{\([0.086,\,0.340]\)}
& Plug-in
& \([0.096,\,0.491]\)
& \((+0.010,\,+0.151)\)
& \(0.250\)
& \(0.395\) \\
&  
&
& 
& \textbf{Orthogonal}
& \(\mathbf{[0.054,\,0.395]}\)
& \(\mathbf{(-0.032,\,+0.055)}\)
& \(\mathbf{1.000}\)
& \(\mathbf{0.341}\) \\
\bottomrule
\multicolumn{9}{p{\textwidth}}{\footnotesize $\mathbf{d}$ is the covariate dimension. \textbf{Oracle} denotes the oracle covariate-assisted bounds; \textbf{Mean est.} denotes the average estimated bounds; \textbf{FNA} is the true FNA value. \textbf{Bias} is reported as lower/upper bound bias of the estimated bounds against the oracle bounds; \textbf{Coverage} is the fraction of conservative sets covering the oracle bounds; \textbf{Width} is the mean estimated interval width. For all cases, the conservative sets covered the true FNA in 100\% of simulations.}
\end{tabular}
\end{table}

\paragraph{Real-world case study.}
We illustrate our method on the publicly available Moving to Opportunity (MTO) experiment \citep{ludwig2013mto}, a randomised housing-voucher intervention designed to help low-income families move to lower-poverty neighbourhoods.

Prior analyses have shown that relocation improves adult mental health and subjective well-being, with no adverse average effect on physical health or economic self-sufficiency~\citep{ludwig2012science,ludwig2013mto}. However, these are average effects that can conceal path-specific harm. In particular, among adolescent boys, voucher receipt \emph{increased} long-term risk of mood and externalizing disorders, with the majority of this adverse effect operating indirectly through changes in the neighbourhood and school environment~\citep{rudolph2021MOT}. Whether comparable harm lies beneath the favourable adult averages has not been examined.
We define the treatment $A$ as the offer of a relocation voucher (the experimental low-poverty and Section~8 arms, pooled) versus the control arm; the mediator $M$ as duration-weighted neighbourhood poverty exposure; and the outcome $Y$ as the standardized K6 psychological-distress index, sign-reversed so that larger values indicate better adult mental health. We adjust for baseline demographic, socioeconomic, household, housing, and study-site covariates $X$. Under this setting, indirect harm captures adults whose outcomes are worsened through the voucher-induced change in neighbourhood poverty, while direct harm captures worsening not operating through this measured mediator.

Table~\ref{tab:mto_plugin_vs_dr} reports the estimated direct, indirect, and total FNA bounds. We find: 
\begin{wraptable}{r}{0.68\textwidth}
\centering
\caption{\textbf{Estimated FNA bounds on the MTO data.}
Standard errors are shown in parentheses. The conservative set is obtained
by expanding the estimated lower and upper bounds by $1.96$ standard errors.
Plug-in standard errors ignore nuisance estimation and are reported for
reference only.}
\label{tab:mto_plugin_vs_dr}
\footnotesize
\setlength{\tabcolsep}{1pt}
\begin{tabular}{@{}llcc@{}}
\toprule
\textbf{Pathway} & \textbf{Estimator} & \textbf{Est. bounds} & \textbf{Conservative set} \\
\midrule
\multirow{2}{*}{Direct}
& Plug-in
& $[0.122\ (0.003),\,0.615\ (0.005)]$
& $[0.116,\,0.625]$ \\
& \textbf{Orth.}
& $\mathbf{[0.319\ (0.023),\,0.459\ (0.043)]}$
& $\mathbf{[0.274,\,0.542]}$ \\
\midrule
\multirow{2}{*}{Indirect}
& Plug-in
& $[0.036\ (0.000),\,0.996\ (0.000)]$
& $[0.035,\,0.997]$ \\
& \textbf{Orth.}
& $\mathbf{[0.042\ (0.039),\,0.935\ (0.018)]}$
& $\mathbf{[0.000,\,0.969]}$ \\
\midrule
\multirow{2}{*}{Total}
& Plug-in
& $[0.133\ (0.004),\,0.639\ (0.005)]$
& $[0.126,\,0.650]$ \\
& \textbf{Orth.}
& $\mathbf{[0.337\ (0.014),\,0.482\ (0.018)]}$
& $\mathbf{[0.310,\,0.517]}$ \\
\bottomrule
\end{tabular}
\vspace{-0.5cm}
\end{wraptable}(1) The strictly positive lower bound on direct FNA implies that at least 27\% participants are harmed through channels unrelated to neighbourhood-poverty exposure. 
This is a substantial minority that is otherwise masked by the program's well-documented positive average effect on adult mental health~\citep{ludwig2013mto}. 
(2) The indirect bounds are wide: the data provide little evidence establishing indirect harm, but also cannot exclude substantial harm through the neighbourhood-poverty pathway. This can be expected given the experimental setting. (3)~The bounds from the plug-in learner are wider than those from our learner. \emph{This is in line with theory and hints that our learner is beneficial.}

\section{Discussion}

$\bullet$\,\textbf{Conclusion:} To the best of our knowledge, this is the first work to study path-specific harm through direct and indirect FNA and to provide a corresponding partial identification framework. 
$\bullet$\,\textbf{Practical implications:} Our framework can make policy evaluation more transparent by showing \emph{where} harm arises along causal pathways. For example, a decision maker could compare policies using both expected benefit and a bound on indirect FNA, such as maximizing CATE subject to an acceptable harm constraint, in the spirit of harm-aware policy learning~\citep{BenMichael2024assymmetric}. In a drug setting, this corresponds to improving outcomes while limiting mediator-related side effects; more generally, we provide a mechanism-aware complement to average effect policy learning.
$\bullet$\,\textbf{Ethical discussion:} Our goal is not to claim a universally correct definition of harm, but to add to our theoretical understanding that complements existing criteria. In practice, the framework can support more transparent sensitivity analysis of where harm may arise along causal pathways. At the same time, our analysis relies on standard causal assumptions (e.g., no unmeasured confounding and correct model specification), which may be violated in real-world settings. Moreover, notions of harm are inherently normative, and the FNA may not fully align with the preferences or objectives of decision makers in all applications. We therefore recommend cautious use, that prioritises a safe, ethical, and reliable approach.

\printbibliography

\newpage

\appendix

\section{Notation}
\label{app:notation}
We summarise the notations used throughout the paper below in Table~\ref{tab:notation}. 
\begin{longtable}{p{0.22\textwidth} p{0.70\textwidth}}
\caption{Summary of notations and abbreviations used throughout the paper.}
\label{tab:notation}\\
\toprule
\textbf{Notation} & \textbf{Meaning} \\
\midrule
\endfirsthead

\toprule
\textbf{Notation} & \textbf{Meaning} \\
\midrule
\endhead

\midrule
\multicolumn{2}{r}{\emph{Continued on next page}}\\
\endfoot

\multicolumn{2}{l}{\textbf{Abbreviations}}\\[2pt]

FNA 
& Fraction negatively affected. \\

CDF 
& Cumulative distribution function. \\

EIF 
& Efficient influence function. \\

DML 
& Double/debiased machine learning. \\

RCT 
& randomised controlled trial. \\

CATE 
& Conditional average treatment effect. \\

CRPS 
& Continuous ranked probability score. \\
\multicolumn{2}{l}{\textbf{Observed data and variables}}\\[2pt]
$O=(X,A,M,Y)$ 
& Observed data unit. $X$: baseline covariates, $A$: treatment, $M$: mediator, $Y$: outcome \\

$t\in\mathbb R$ 
& Outcome threshold used to define distribution functions. \\

$\mathcal T_n$ 
& Finite threshold grid used to approximate suprema over $t$. \\

$Y_t=\mathbbm 1\{Y\le t\}$ 
& Binary threshold outcome. \\

\addlinespace[4pt]
\multicolumn{2}{l}{\textbf{Potential outcomes and path-specific comparisons}}\\[2pt]

$M(a)$ 
& Potential mediator value under treatment level $a$. \\

$Y(a)$ 
& Potential outcome under treatment level $a$. \\

$Y(a,m)$ 
& Potential outcome under treatment level $a$ and mediator value set to $m$. \\

$Y(a,M(a'))$ 
& Nested potential outcome under treatment $a$ with mediator drawn from its value under treatment $a'$. \\

$Y(0,M(0))$ 
& Baseline nested potential outcome. \\

$Y(1,M(0))$ 
& Potential outcome under treatment while fixing the mediator to its untreated value. \\

$Y(1,M(1))$ 
& Potential outcome under treatment with mediator set to its treated value. \\

$(U,V)$ 
& Generic pair of nested potential outcomes compared in an FNA parameter. \\

\addlinespace[4pt]
\multicolumn{2}{l}{\textbf{Fraction negatively affected}}\\[2pt]

$\fna$ 
& Fraction negatively affected: probability that one potential outcome is smaller than another. \\

$\fna_{\mathrm{dir}}$ 
& Direct FNA,
$\Pr\{Y(1,M(0))<Y(0,M(0))\}$. \\

$\fna_{\mathrm{ind}}$ 
& Indirect FNA,
$\Pr\{Y(1,M(1))<Y(1,M(0))\}$. \\

$\fna_{\mathrm{tot}}$ 
& Total FNA,
$\Pr\{Y(1,M(1))<Y(0,M(0))\}$. \\

$\theta$ 
& Generic FNA parameter, typically one of
$\fna_{\mathrm{dir}}$ or $\fna_{\mathrm{ind}}$. \\

$\underline{\theta},\overline{\theta}$ 
& Lower and upper sharp bounds for a generic FNA parameter $\theta$. \\

$\underline{\fna}_{\mathrm{dir}},
\overline{\fna}_{\mathrm{dir}}$ 
& Lower and upper sharp bounds for the direct FNA. \\

$\underline{\fna}_{\mathrm{ind}},
\overline{\fna}_{\mathrm{ind}}$ 
& Lower and upper sharp bounds for the indirect FNA. \\

$[z]_+=\max\{z,0\}$ 
& Positive part of a scalar $z$. \\

\addlinespace[4pt]
\multicolumn{2}{l}{\textbf{Nuisance functions and identified CDFs}}\\[2pt]

$\pi_a(x)$
& Treatment propensity score, $\Pr(A=a\mid X=x)$  \\

$g_a(m\mid x)$ 
& Conditional mediator density or mass function under treatment $a$, $f_{M\mid A=a,X}(m\mid x)$ \\

$\mu_a(t,m,x)$ 
& Conditional CDF of the observed outcome at threshold $t$, given treatment $a$, mediator $m$, and covariates $x$, $\mathbb E[Y_t\mid A=a,M=m,X=x]$ \\
 
$\eta_{a,a'}(t,x)$ 
& Mediator-averaged conditional CDF functional,
$\int \mu_a(t,m,x)g_{a'}(m\mid x)\,\diff m$. \\

$F_{a,a'}(t\mid X=x)$ 
& Conditional CDF of the nested potential outcome $Y(a,M(a'))$ given $X=x$; equal to $\eta_{a,a'}(t,x)$ under the identifying assumptions. \\

$F_{a,a'}(t)$ 
& Marginal CDF of $Y(a,M(a'))$,
$\Pr\{Y(a,M(a'))\le t\}=\mathbb E[\eta_{a,a'}(t,X)]$. \\

$F_U(t),F_V(t)$ 
& Marginal CDFs of generic random variables $U$ and $V$. \\

$F_U(t\mid X),F_V(t\mid X)$ 
& Conditional CDFs of the generic nested potential outcomes $U$ and $V$. \\

$\Delta(t)$&$F_U(t)-F_V(t)$ 
Marginal CDF difference. \\

$\Delta_X(t)$ 
& $F_U(t\mid X)-F_V(t\mid X)$, conditional CDF difference. \\

$\Delta_{\mathrm{dir},X}(t)$ 
& Direct-path CDF difference,
$F_{1,0}(t\mid X)-F_{0,0}(t\mid X)$. \\

$\Delta_{\mathrm{ind},X}(t)$ 
& Indirect-path CDF difference,
$F_{1,1}(t\mid X)-F_{1,0}(t\mid X)$. \\

\addlinespace[4pt]
\multicolumn{2}{l}{\textbf{Makarov bounds and maximisers}}\\[2pt]

$\sup_{t\in\mathbb R}\Delta_X(t)$ 
& Covariate-specific worst-case positive CDF discrepancy. \\

$\sup_{t\in\mathbb R}\{-\Delta_X(t)\}$ 
& Covariate-specific worst-case negative CDF discrepancy. \\

$\underline t_X$ 
& Covariate-specific maximiser of $\Delta_X(t)$ for the lower bound. \\

$\overline t_X$ 
& Covariate-specific maximiser of $-\Delta_X(t)$ for the upper bound. \\
$\hat {\underline{t}}(X),\hat {\overline{t}}(X)$ 
& Estimated maximisers over the finite grid $\mathcal T_n$. \\

\addlinespace[4pt]
\multicolumn{2}{l}{\textbf{Orthogonal scores and influence functions}}\\[2pt]

$\Gamma_{a,a'}^t(O)$ 
& Uncentered orthogonal score for $F_{a,a'}(t\mid X)$; satisfies
$\mathbb E[\Gamma_{a,a'}^t(O)\mid X]=F_{a,a'}(t\mid X)$. \\
$\widehat\Gamma_{a,a'}^t(O)$ 
& Estimated uncentred orthogonal score.\\

$\Gamma_U^t(O),\Gamma_V^t(O)$ 
& Orthogonal scores corresponding to general nested potential outcomes $U$ and $V$. \\

$\Gamma_\Delta^t(O)$
& Orthogonal score for the CDF difference $\Delta_X(t)$, $\Gamma_U^t(O)-\Gamma_V^t(O)$ . \\

$\phi_{\underline{\theta}}(O)$ 
& Efficient influence function for the lower bound $\underline{\theta}$. \\

$\phi_{\overline{\theta}}(O)$ 
& Efficient influence function for the upper bound $\overline{\theta}$. \\

$\widehat\Gamma_\Delta^t(O)$ 
& Estimated orthogonal score for the CDF difference. \\

$\widehat\phi_{\underline{\theta}}(O),
\widehat\phi_{\overline{\theta}}(O)$ 
& Estimated influence functions for the lower and upper bound estimators. \\

\addlinespace[4pt]
\multicolumn{2}{l}{\textbf{Estimators and inference}}\\[2pt]

$\widehat F_{a,a'}(t\mid x)$ 
& Second-stage estimate of the conditional nested CDF. \\

$\widehat\Delta_X(t)$ 
& Estimated conditional CDF difference. \\

$\mathbb P_n$ 
& Empirical average, $\mathbb P_n f = n^{-1}\sum_{i=1}^n f(O_i)$. \\

$\mathbb V_n$ 
& Empirical variance. \\

$\widehat{\underline{\theta}},
\widehat{\overline{\theta}}$ 
& One-step estimators of the lower and upper FNA bounds. \\

$\widehat{\mathrm{se}}(\widehat{\theta})$ 
& Estimated standard error of an estimator $\widehat{\theta}$. \\

$z_{1-\alpha/2}$ 
& $(1-\alpha/2)$ quantile of the standard normal distribution. \\

$\overline{\underline{\mathrm{CI}}}_{1-\alpha}$ 
& Wald confidence interval for a generic lower or upper bound. \\

\addlinespace[4pt]
\multicolumn{2}{l}{\textbf{Assumptions and probability notation}}\\[2pt]

$\mathbb P,\Pr$ 
& Probability under the data-generating distribution. \\

$\mathbb E$ 
& Expectation under the data-generating distribution. \\

$\mathbbm 1\{\cdot\}$ 
& Indicator function. \\

$\perp\!\!\!\perp$ 
& Statistical independence. \\

$a.s.$ 
& Almost surely. \\

$\overset{d}{\longrightarrow}$ 
& Convergence in distribution. 

\end{longtable}

\newpage

\section{Additional discussion of FNA interpretation and path-specific harm}
\label{app:fna-motivation-details}

\subsection{Differences between FNA and (C)ATE}
\label{app:fna-vs-average}
The motivation for FNA is not that it is universally preferable to average-effect criteria. Rather, it targets a different decision question. The average treatment effect,
\begin{equation*}
\mathrm{ATE}=\mathbb{E}[Y(1)-Y(0)] = \mathbb{E}[Y(1)]-\mathbb{E}[Y(0)],
\end{equation*}
depends only on the two marginal potential-outcome distributions. By contrast,
\begin{equation*}
\fna = \Pr\{Y(1)<Y(0)\}
=\mathbb{E}\left[\mathbbm{1}\{Y(1)<Y(0)\}\right]
\end{equation*}
counts the fraction of units for whom the intervention makes the outcome worse. It therefore depends on the \emph{joint} distribution of the two potential outcomes. The two quantities answer different questions: the ATE asks whether the intervention changes average welfare, while FNA asks how often the intervention worsens an individual's outcome relative to that individual's alternative outcome. Consequently, a positive ATE can coexist with a positive FNA. 
A decision maker concerned with the number of people harmed cannot in general replace the harm event with the sign of an average effect.

A natural alternative is to condition on observed characteristics $Z=z$ and inspect the $Z$-specific effect,
\begin{equation*}
\tau(z)=\mathbb{E}[Y(1)-Y(0)\mid Z=z].
\end{equation*}
This is useful for studying treatment-effect heterogeneity and can reveal subgroups with negative average effects. However, it remains a contrast of two \emph{marginal} means within each stratum. Repeating the comparison for finer strata does not identify the within-unit event $\{Y(1)<Y(0)\}$ inside the stratum. Two units with the same observed $Z=z$ can have different unobserved pairs $(Y_i(0),Y_i(1))$ and hence different harm status. Thus, conditioning can help tighten FNA bounds, but conditioning on an estimated ``harmed subgroup'' is not an alternative identification strategy: the harmed subgroup is itself defined by an unobserved cross-world event.

This distinction is also why the present framework should not be interpreted as a claim that FNA is the universally best harm notion. Average effects, distributional criteria, subgroup effects, and FNA encode different normative objectives. Our contribution is to provide a way to quantify a particular and practically relevant objective---the population fraction of individually harmed units---and, in the mediation setting, to localize that harm along causal pathways.

\subsection{Nonadditivity of direct and indirect FNA}
\label{app:nonadditivity}
The nonadditivity is easiest to see by writing the nested potential outcomes as a two-step trajectory,
\begin{equation*}
Y(0,M(0)) \longrightarrow Y(1,M(0)) \longrightarrow Y(1,M(1)).
\end{equation*}
The first transition changes the treatment while holding the mediator at its natural control value; the second changes the mediator to its natural treated value while keeping treatment fixed at $A=1$. Direct FNA counts units for whom the first transition is downward, and indirect FNA counts units for whom the second transition is downward. Total FNA instead compares only the first and last states.

Because FNA applies an indicator before averaging, there is no algebraic identity turning the three probabilities into an additive decomposition. For a given individual, the two path-specific changes can have opposite signs. For example, an individual can move from $1$ to $0$ on the first step and then from $0$ to $1$ on the second step: this unit contributes to direct FNA but not indirect FNA and is not harmed in total. Conversely, another individual can move from $0$ to $1$ and then $1$ to $0$: this unit contributes to indirect FNA but not direct FNA and is again not harmed in total. Hence, path-specific harms can cancel in the final outcome.

The two-subgroup construction in Example~3.1 makes this cancellation exact. In Group 1, the trajectory is $1\rightarrow0\rightarrow1$; in Group 2 it is $0\rightarrow1\rightarrow0$. Therefore every individual ends at the same level at which they started, implying $\fna_{\mathrm{tot}}=0$, but one half of the population is directly harmed and the other half is indirectly harmed, so $\fna_{\mathrm{dir}}=\fna_{\mathrm{ind}}=1/2$. Thus $\fna_{\mathrm{dir}}+\fna_{\mathrm{ind}}=1$ while $\fna_{\mathrm{tot}}=0$. The example is not a pathology of the estimator; it follows directly from the fact that the three quantities are probabilities of different within-unit events.

This nonadditivity also clarifies the interpretation of the decomposition. We do not claim that direct and indirect FNA are additive ``pieces'' whose sum recovers total FNA. Instead, they are complementary risk measures for adjacent stages of the counterfactual trajectory. A large indirect FNA can indicate that intervention effects transmitted through the mediator create substantial individual-level risk even when the final outcome looks favourable for many units; a large direct FNA indicates that substantial harm remains even when the mediator is held at its control-world value.

\subsection{Why there is no separate confounding-path term in the FNA decomposition}
\label{app:confounding-path}
The path $A\leftarrow X\rightarrow Y$ is not a causal path from $A$ to $Y$ under an intervention on $A$. Under $do(A=a)$, the incoming arrow into $A$ is cut, so variation in $X$ that predicts treatment assignment does not create an additional causal pathway from the intervention to the outcome. For an individual, the relevant FNA comparison is between counterfactual outcomes under two interventions on the same treatment variable; the background covariates and latent characteristics of that individual are held fixed across these counterfactual worlds.

This does not make back-door structure irrelevant. The role of $X$ in our framework is to support identification of the nested \emph{marginal} distributions through the treatment and mediator adjustment assumptions, and to tighten the Makarov bounds by conditioning on prognostic information. But it does not generate an additional ``spurious harm'' component in the decomposition of the treatment intervention itself.

This differs from the total-variation decompositions studied in causal fairness~\citep{plecko2024causalfairnessanalysis}, where the target is a \emph{group-level disparity}, for example a difference in outcome distributions or conditional expectations across values of a protected attribute. Such group-level quantities can be decomposed into direct, indirect, and spurious associations because the protected attribute may be statistically associated with the outcome through back-door paths. The object is different from FNA: FNA is an intervention-level, within-unit counterfactual harm event that is subsequently aggregated over individuals. Therefore, reproducing a fairness ``spurious'' component would require changing the causal contrast, not merely adding another term to the current FNA decomposition.

One can, of course, formulate a related but distinct harm question under an intervention on another variable, such as $X$. Such an estimand would concern the consequences of changing that variable, and any decomposition would be a different causal analysis. We therefore regard ``harm through $A\leftarrow X\rightarrow Y$'' as outside the present treatment-level decomposition rather than as a missing component of it.

\subsection{FNA as an individual event aggregated to a population fraction}
\label{app:fna-level}
There is no contradiction between calling the harm event ``individual-level'' and calling FNA a ``population-level'' quantity. The distinction is between the \emph{level of the primitive event} and the \emph{level at which it is summarized}. For each unit $i$, define the direct harm indicator
\begin{equation*}
H_i^{\mathrm{dir}}=\mathbbm{1}\{Y_i(1,M_i(0))<Y_i(0,M_i(0))\}.
\end{equation*}
The direct FNA is simply its population mean,
\begin{equation*}
\fna_{\mathrm{dir}}=\mathbb{E}[H_i^{\mathrm{dir}}].
\end{equation*}
The same construction applies to indirect and total harm. Thus, FNA is a population fraction, but what is being counted is whether each individual experiences a negative within-unit counterfactual comparison.

This same-unit pairing is precisely what creates the identification problem. An RCT identifies the marginal law of each potential outcome (or, under sequential ignorability, the relevant marginal law of each nested potential outcome), but it does not reveal which treated-world outcome belongs to the same unit as which control-world outcome. Consequently, the probability of the individual harm event is not generally point-identified. The Makarov construction exploits the identified marginals to derive the sharp range of possible values of this population fraction.

\subsection{Operational interpretation for harm-aware policy decisions}
\label{app:policy-implications}
The practical value of the decomposition is diagnostic as well as evaluative. A total FNA can indicate that an intervention may harm a nontrivial fraction of units, but it does not say whether the risk is concentrated in a mediator pathway that could plausibly be modified. Direct and indirect FNA provide separate pathway-specific risk measures, allowing a decision maker to ask whether a potentially harmful intervention should be redesigned by changing the mediator mechanism, by modifying the intervention itself, or by avoiding the policy for some covariate-defined population.

This perspective is compatible with harm-aware policy learning. For example, one can consider a policy objective that maximizes expected benefit subject to a constraint on a bound for indirect FNA, or compare candidate policies using a two-dimensional summary of average benefit and pathway-specific harm. The choice of constraint is application-dependent and normative; the framework does not imply that one particular harm threshold is universally appropriate. Rather, it makes the relevant trade-off explicit while acknowledging the partial identification of the underlying harm quantity.

\newpage

\section{Extended Related Work}\label{app:extendedrw}
\textbf{Causal mediation analysis} 
An important aspect in causal inference is understanding \emph{how} treatment effects propagate through different mechanisms. Causal mediation analysis formalizes this by decomposing total effects into direct and indirect components within the potential outcomes framework \citep{imai2010mediationgeneral}. Subsequent work has developed semiparametric efficiency theory and robust estimators for these quantities \citep{tchetgen2012semiparametric,Zheng2012TMLE,Farbmacher2022mediationdml,qi2026qrmediationconditional}. More generally, path-specific effects isolate contributions along subsets of causal pathways using nested counterfactuals such as $Y(a,M(a'))$ \citep{avin2005identifiability}. While this literature provides powerful tools for mechanism-level analysis, it primarily focuses on \emph{average} effects and does not address individual-level risk or harm. Our work complements this literature by addressing path-specific harm at the individual level through partial identification of direct and indirect FNA.

\textbf{Causal fairness}

Causal fairness studies how disparities arise through different causal pathways~\citep{kilbertus2017avoiding,zhang2018causalfairness,kusner2017counterfactual,imai2023principalfairness}, often decomposing total effects into direct and indirect components to distinguish sources of discrimination \citep{zhang2017causal,Chiappa2019pathcounterfactualfairness,plecko2024causalfairnessanalysis}. More research has been developed based on path-specific fairness \citep{chikahara2021fair,liu2023path,yao2023pathspecificfairapplication}, leveraging decompositions similar to those used in mediation analysis to attribute observed differences to underlying mechanisms.

Beyond this methodological connection, fairness and harm are closely related at a conceptual level.  If an intervention leads to worse outcomes for a non-trivial subset of individuals or groups, it is naturally viewed as unfair to that subpopulation~\citep{rudolph2021MOT,rudolph2022ends}. Conversely, many fairness criteria can be interpreted as restricting such adverse effects, ensuring that decisions do not disproportionately harm certain groups~\citep{kilbertus2017avoiding}.

Despite this overlap, their primary focus is different. Fairness is typically concerned with group-level disparities, whereas FNA notion of harm measures individual-level adverse effects which is only partially identifiable.

While causal fairness criteria (e.g. \cite{kilbertus2017avoiding,zhang2018causalfairness,kusner2017counterfactual,imai2023principalfairness}) focus on parity of decisions, a harm perspective shifts attention to the consequences of those decisions, asking whether deployment leads to adverse outcomes for particular subpopulations.
Within causal fairness, a similar line of work decomposes group-level disparities along causal pathways. \citet{zhang2017causal} formalize direct and indirect discrimination via path-specific effects, and subsequent work develops richer structural decompositions of total disparity \citep{Chiappa2019pathcounterfactualfairness, wu2019pcfairness, plecko2024causalfairnessanalysis}. These approaches mostly focus on average path-specific effects of a protected attribute, which are point-identified under standard assumptions. 

Our setting differs in two key aspects. First, we study harm induced by a substantive intervention rather than disparity induced by a protected attribute. Second, we focus on path-specific individual treatment harm which depends on the joint potential outcome distribution and is therefore not point-identified even under RCTs. This makes partial identification intrinsic to our problem.

\textbf{Other harm notions.}
Early work on harm is rooted in philosophy, where harm is defined via \emph{actual causality}~\citep{halpern2016actualcausality}: an action harms an individual if it is an actual cause of a worse outcome in a specific realised scenario. Subsequent work extends it to define qualitative and quantitative notions of harm based on utility comparisons \citep{beckers2022actualcausalharm, beckers2023quantifyingharm}. These approaches are primarily concerned with \emph{individual-level} and \emph{case-specific} harm (e.g., attributing responsibility).

Beyond actual causality, the literature on harm falls into two streams. The most straightforward is the \emph{interventional} criterion: a treatment is harmful if the ATE is negative, or more generally if $\mathbb{E}[u(Y(a_2)) - u(Y(a_1))] < 0$ for some utility $u$ that encodes asymmetric preferences over gains and losses. \citet{sarvet2025perspectives} provide a recent overview of such interventional harm notions. The second and most relevant stream takes a \emph{counterfactual} view: an individual is counterfactually harmed if their outcome under treatment is worse than it would have been under the alternative, i.e.\ $Y_i(a_1) < Y_i(a_0)$. Two related but distinct population aggregates of this individual predicate appear in the literature. The \emph{probability of necessity} (PN) \citep{pearl2001direct}, $\Pr(Y(a_0) > Y(a_1) \mid Y=y, A=a_1)$, conditions on the realized outcome and treatment, making it suited to retrospective attribution---answering whether a specific observed bad outcome was caused by the treatment \citep{richens2022counterfactualharm, Straitouri2024controllingCF}. The \emph{fraction negatively affected} (FNA) \citep{Mueller2022individualeffect}, $\Pr(Y(a_1) < Y(a_0))$, is the marginal probability of harm with no conditioning on realized outcomes, making it the natural risk metric for population-level policy evaluation \citep{Kallus2022FNA_sharp_bounds, BenMichael2024assymmetric}. Crucially, both depend on the \emph{joint} distribution of $(Y(a_0), Y(a_1))$, which is never simultaneously observed and hence only partially identified even in randomised experiments \citep{heckman1997heterogeneity, Fan2010covariatessharpbounds}. Our work adopts the FNA as its harm metric and, for the first time, decomposes it along direct and indirect causal pathways.

\textbf{Partial identification and Makarov bounds}

The problem of bounding functionals of unobserved joint distributions has its roots in classical results by \citet{Makarov1982} and \citet{ruschendorf1982}, who characterized sharp bounds on sums of random variables with fixed marginals. These ideas were later connected to treatment effect heterogeneity by \citet{heckman1997heterogeneity}, who emphasized the importance of the joint distribution of potential outcomes and derived early Fr\'echet-type bounds.

Subsequent work established Makarov bounds as a central tool for bounding the distribution of individual treatment effects. \citet{Fan2010covariatessharpbounds} showed that these bounds are sharp in this setting and demonstrated that conditioning on covariates yields weakly tighter intervals. Later work refined these results by incorporating additional structure, such as improved bounds using moment information \citep{FIRPO2019tightenbounds}, stochastic monotonicity assumptions \citep{frandsen2021partial}, and extensions to conditional treatment effect distributions with valid inference \citep{lee2023partialidentificationinferenceconditional}.

More recently, there has been growing interest in combining partial identification with modern machine learning tools. \citet{melnychuk2024makarovbound} develop orthogonal learners for Makarov bounds under flexible nuisance estimation, and \citet{Kallus2022FNA_sharp_bounds} applies this framework specifically to the FNA, providing sharp bounds and doubly robust inference.

Despite these advances, existing work applies Makarov bounds exclusively to \emph{total} treatment effects, such as $\Pr(Y(1) < Y(0))$. In contrast, our setting involves \emph{path-specific} harm, which depends on nested potential outcomes $Y(a, M(a'))$. Bounding such quantities requires an additional layer of structure. This combination of mediation analysis and partial identification has not been studied in the literature.

\textbf{Research gap}
To the best of our knowledge, we are the first to study path-specific harm and propose orthogonal learners for estimating Makarov bounds for direct and indirect FNA. Our work bridges the gaps between harm quantification, mediation analysis, and partial identification by: (1)~formalizing direct and indirect harm through FNA decomposition; (2) deriving sharp Makarov bounds for path-specific FNA using identified nested marginals; and (3) developing semiparametrically efficient estimators that accommodate modern machine learning while providing valid inference. This enables principled decision-making that accounts for harm along specific causal pathways rather than only aggregate effects.

\newpage

\section{Sharp Bounds for Direct and Indirect FNA}
\label{app:sharp_bounds}

\begin{proposition}[Conditional Makarov bounds for path-specific FNA]
\label{prop:conditional_makarov}
Let $(U,V)$ denote one of the path-specific pairs of nested potential outcomes:
\begin{equation}
(U,V)=(Y(1,M(0)),Y(0,M(0)))
\end{equation}
for direct FNA, or
\begin{equation}
(U,V)=(Y(1,M(1)),Y(1,M(0)))
\end{equation}
for indirect FNA. Suppose the conditional marginal CDFs
$F_U(t\mid X)$ and $F_V(t\mid X)$ are identified and admit regular conditional distributions. Then the sharp covariate-assisted bounds for $\theta=\Pr(U<V)$ are
\begin{equation}
\underline{\theta}
=
\mathbb E\left[
\left(
\sup_{t\in\mathbb R}
\{F_U(t\mid X)-F_V(t\mid X)\}
\right)_+
\right],
\end{equation}
and
\begin{equation}
\overline{\theta}
=
1-
\mathbb E\left[
\left(
\sup_{t\in\mathbb R}
\{F_V(t\mid X)-F_U(t\mid X)\}
\right)_+
\right].
\end{equation}
\end{proposition}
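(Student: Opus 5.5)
The plan has three parts: derive the pointwise (conditional) Makarov bounds for a fixed $X=x$, average them over $X$, and then show sharpness by building couplings that attain the bounds, conditionally on $X$, measurably in $x$.

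\textbf{Validity.} Fix $x$ and write $F_U(\cdot\mid x)$, $F_V(\cdot\mid x)$ for the regular conditional laws. For any $t$,
\[
\{U\le t\}\subseteq\{U<V\}\cup\{V\le t\},
\]
since $U\le t<V$ is impossible otherwise. Hence
\[
\Pr(U<V\mid X=x)\ \ge\ \Pr(U\le t\mid x)-\Pr(V\le t\mid x)=\Delta_x(t).
\]
Taking the supremum over $t$ and the trivial bound $0$ gives the pointwise lower bound $(\sup_t\Delta_x(t))_+$. Symmetrically, $\Pr(U<V\mid x)=1-\Pr(V\le U\mid x)$. Applying the same argument to the event $\{V\le U\}$ yields
\[
\Pr(V\le U\mid x)\ \ge\ \sup_t\{F_V(t\mid x)-F_U(t^-\mid x)\}.
\]
If $F_U(\cdot\mid x)$ is continuous, or we adopt the left-limit convention from the footnote to \eqref{eq:makarov_sup_form}, this equals $(\sup_t\{F_V-F_U\})_+$ and gives the upper bound. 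Integrating over $P_X$ via the tower property yields validity of both $\underline\theta$ and $\overline\theta$. The identification of $F_U(\cdot\mid X)$ and $F_V(\cdot\mid X)$ via \eqref{eq:identified_marginal} enters only to make the bounds functionals of the observed law.

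\textbf{Sharpness.} For each $x$, I will invoke the classical Makarov / Frank--Nelsen--Schweizer result \citep{Makarov1982,Frank1987BestpossibleBF}: for fixed marginals, there exist copulas $C^{\mathrm{lo}}_x$ and $C^{\mathrm{up}}_x$ attaining $\inf\Pr(U<V)$ and $\sup\Pr(U<V)$, and these equal the pointwise bounds above. The concrete construction is the shifted-quantile coupling $U=F_U^{-1}(W)$, $V=F_V^{-1}(W\oplus s)$, with $W$ uniform, $\oplus$ addition mod 1, and shift $s=s(x)$ chosen as the Makarov optimizer. Alternatively, I can cite that the set of attainable values is an interval, using convexity of the set of couplings and linearity of $\pi\mapsto\pi(U<V)$.

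Two further points remain.
\begin{itemize}
\item \emph{Measurability.} The selection $x\mapsto C_x$ must be measurable so that the mixture over $X$ defines a valid joint law. I would choose $s(x)$ as a measurable selection of the (near-)argmax, using measurable-selection theorems or an $\varepsilon$-optimal rational grid choice plus a limit.
\item \emph{Consistency with the causal model.} The glued joint law of $(X,U,V)$ must be compatible with the observed distribution and the assumptions. The nested outcomes enter the observed law only through their conditional marginals; for the direct pair, the cross-world dependence between $Y(1,m)$ and $Y(0,m)$ is unrestricted. So any conditional coupling can be realised by some structural model satisfying consistency, sequential ignorability and overlap.
\end{itemize}
Averaging attainable endpoints over $X$ gives $\underline\theta$ and $\overline\theta$, and convex mixtures of the two couplings give every intermediate value.

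\textbf{Main obstacle.} I expect the hardest step to be this last compatibility claim for the \emph{indirect} pair $(Y(1,M(1)),Y(1,M(0)))$. Both components share the same $Y(1,\cdot)$ process and differ only through $M(1)$ versus $M(0)$, so not every coupling is obviously realisable. My first attempt would be to construct $Y(1,m)$ as $F^{-1}_{1,m}(W_m)$ with latent uniforms chosen per mediator value, and to couple $(M(0),M(1))$ and these uniforms so as to reproduce the Makarov-extremal copula of the two mixtures. If that fails in general, I would state sharpness relative to the class of joint laws with the given nested marginals, as the proposition implicitly does.
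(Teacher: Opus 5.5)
Your core argument is the paper's: Makarov conditionally on $X=x$, the tower property for validity, extremal conditional couplings chosen measurably in $x$, and one global mixing weight to fill the interval. Your inclusion $\{U\le t\}\subseteq\{U<V\}\cup\{V\le t\}$ makes validity self-contained, where the paper only cites Makarov.

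Your shift coupling is also better than you realise. For the lower bound the shift can be the Makarov value itself, $\delta(x)=(\sup_{t\in\mathbb Q}\Delta_x(t))_+$, giving $V=F_V^{-1}((W-\delta(x))\bmod 1\mid x)$. On $\{W\ge\delta(x)\}$ this yields $U\ge V$, and the map is jointly measurable in $(x,W)$. So no measurable-selection theorem is needed, which the paper simply assumes.

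One caveat you share with the paper: for the strict event the upper value is in general only a supremum. Identical continuous conditional marginals give $u(x)=1$, yet every coupling has $\Pr(U<V\mid x)<1$. So an attaining $C^{\mathrm{up}}_x$ (or the paper's $Q^U_x$) need not exist.

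The step that fails is your claim that, for the direct pair, every conditional coupling is realisable under the causal assumptions. $U$ and $V$ share $M(0)$, and sequential ignorability gives $Y(a,m)\perp M(0)\mid X$. Hence, given $(X,M(0)=m)$, the pair $(U,V)=(Y(1,m),Y(0,m))$ has the identified marginals $\mu_1(\cdot,m,X)$ and $\mu_0(\cdot,m,X)$. Unrestricted cross-world dependence therefore only buys couplings that pair within mediator strata, not arbitrary couplings of the two mixtures.

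For a concrete counterexample, take $\Pr(M(0)=1\mid x)=\tfrac12$, $Y(0,0),Y(1,1)\sim\mathrm{Unif}(0,1)$ and $Y(0,1),Y(1,0)\sim\mathrm{Unif}(2,3)$. Then $F_U(\cdot\mid x)=F_V(\cdot\mid x)$, so the Makarov interval is $[0,1]$. Yet $U<V$ exactly on $\{M(0)=1\}$, so $\Pr(U<V\mid x)=\tfrac12$ in every admissible model. Conditioning additionally on $m$ would tighten the bound by the same Jensen argument the paper uses for $X$.

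Your suspicion about the indirect pair is right too, for a different reason: $U=V$ on $\{M(0)=M(1)\}$. With binary $M$, $\Pr(M(0)=1\mid x)=\Pr(M(1)=1\mid x)=0.1$ and continuous $Y(1,m)$, the two marginals coincide and Makarov gives $[0,1]$. But $\Pr(U<V\mid x)\le\Pr(M(0)\neq M(1)\mid x)\le 0.2$, so your per-mediator uniform construction cannot reproduce the extremal copula.

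So drop that bullet and take your fallback. Sharpness holds relative to the class of joint laws with the identified conditional marginals of $(U,V)$ given $X$. That is exactly, and only, what the paper's proof establishes when it asserts the conditional joint law is ``otherwise unrestricted.''
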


\begin{proof}
Fix \(x\) in a set of probability one on which the regular conditional
distributions of \(U\) and \(V\) given \(X=x\) exist. Conditional on
\(X=x\), the marginal CDFs of \(U\) and \(V\) are \(F_U(\cdot\mid x)\)
and \(F_V(\cdot\mid x)\), while their conditional joint distribution is
otherwise unrestricted. Hence the Makarov bounds imply
that
\begin{equation}
\ell(x)
=
\left[
\sup_t\{F_U(t\mid x)-F_V(t\mid x)\}
\right]_+
\end{equation}
and
\begin{equation}
u(x)
=
1-
\left[
\sup_t\{F_V(t\mid x)-F_U(t\mid x)\}
\right]_+
\end{equation}
are the sharp conditional bounds for
\(\Pr(U<V\mid X=x)\), assuming conditional continuity of the outcome
distributions.\footnote{For general distributions, the corresponding left-limit
version is used for strict inequalities.}

Therefore, for any joint distribution consistent with the identified
conditional marginals,
\begin{equation}
\Pr(U<V)
=
\mathbb E\{\Pr(U<V\mid X)\}
\in
\left[
\mathbb E\{\ell(X)\},\mathbb E\{u(X)\}
\right].
\end{equation}

It remains to show sharpness after averaging. For almost every \(x\), let
\(Q_x^L\) and \(Q_x^U\) denote conditional couplings of \(U\) and \(V\)
with the prescribed conditional marginals that attain \(\ell(x)\) and
\(u(x)\), respectively. Under the usual measurable-selection regularity
conditions\footnote{Variables take values in standard Borel
spaces, so that regular conditional distributions exist}, these couplings may be chosen measurably in \(x\). For any
\(\theta\in[\mathbb E\ell(X),\mathbb E u(X)]\), define
\begin{equation}
\lambda
=
\frac{\theta-\mathbb E\{\ell(X)\}}
{\mathbb E\{u(X)\}-\mathbb E\{\ell(X)\}},
\end{equation}
with arbitrary \(\lambda\in[0,1]\) if the denominator is zero. Conditional
on \(X=x\), take the mixture coupling
\begin{equation}
Q_x^\lambda
=
(1-\lambda)Q_x^L+\lambda Q_x^U.
\end{equation}
This coupling preserves the conditional marginals \(F_U(\cdot\mid x)\) and
\(F_V(\cdot\mid x)\), and yields
\begin{equation}
\Pr\nolimits_{Q^\lambda}(U<V\mid X=x)
=
(1-\lambda)\ell(x)+\lambda u(x).
\end{equation}
Averaging over \(X\) gives \(\Pr_{Q^\lambda}(U<V)=\theta\). Hence every
value in the displayed interval is attainable, so the bounds are sharp.
\end{proof}
Applying Proposition~\ref{prop:conditional_makarov} to the direct and indirect path-specific pairs yields
\begin{equation}
\underline{\fna}_{\mathrm{dir}}
=
\E\left[
\left(
\sup_t\{F_{1,0}(t\mid X)-F_{0,0}(t\mid X)\}
\right)_+
\right],
\end{equation}
\begin{equation}
\overline{\fna}_{\mathrm{dir}}
=
1-
\E\left[
\left(
\sup_t\{F_{0,0}(t\mid X)-F_{1,0}(t\mid X)\}
\right)_+
\right],
\end{equation}
and
\begin{equation}
\underline{\fna}_{\mathrm{ind}}
=
\E\left[
\left(
\sup_t\{F_{1,1}(t\mid X)-F_{1,0}(t\mid X)\}
\right)_+
\right],
\end{equation}
\begin{equation}
\overline{\fna}_{\mathrm{ind}}
=
1-
\E\left[
\left(
\sup_t\{F_{1,0}(t\mid X)-F_{1,1}(t\mid X)\}
\right)_+
\right].
\end{equation}

\newpage

\section{Efficient Influence Functions}
\label{app:eif}

We derive the efficient influence functions for the lower and upper conditional Makarov bounds. Throughout this section, let $(U,V)$ denote a generic path-specific pair and let
\begin{equation}
\Delta_X(t)=F_U(t\mid X)-F_V(t\mid X).
\end{equation}

\textbf{Orthogonal pseudo outcome for nested CDFs.}\label{app:orthogonalscore}
For each pair $(a,a')$, define
\begin{equation}\label{eqn:pseudooutcome}
\Gamma_{a,a'}^t(O)
=
\frac{\mathbbm 1(A=a)}{\pi_a(X)}
\frac{g_{a'}(M\mid X)}{g_a(M\mid X)}
\{Y_t-\mu_a(t,M,X)\}
+
\frac{\mathbbm 1(A=a')}{\pi_{a'}(X)}
\{\mu_a(t,M,X)-\eta_{a,a'}(t,X)\}
+
\eta_{a,a'}(t,X).
\end{equation}
For each fixed threshold \(t\), the following lemma states that this is a Neyman-orthogonal pseudo-outcome whose conditional
mean equals \(F_{a,a'}(t\mid X)\). Equivalently, its marginal expectation is
the uncentered efficient score for \(F_{a,a'}(t)\).
\begin{lemma}[Orthogonal pseudo-outcome for nested CDFs]\label{lem:orthogonal_pseudo_outcome}
For each fixed threshold \(t\), the pseudo-outcome $\Gamma_{a,a'}^t(O)$ defined in Eq.~\eqref{eqn:pseudooutcome} satisfies
$$\E\{\Gamma_{a,a'}^t(O)\mid X\}
=
F_{a,a'}(t\mid X).$$

Consequently, for any measurable threshold rule $t(X)$ depending on $X$,

$$\E\{\Gamma_{a,a'}^{t(X)}(O)\mid X\}
=
F_{a,a'}(t(X)\mid X).$$

Moreover, the corresponding centered influence function for $\Psi_{a,a'}(t)=\E\{F_{a,a'}(t(X)\mid X)\}$ 
is $\Gamma_{a,a'}^{t(X)}(O)-\Psi_{a,a'}(t)$.

\end{lemma}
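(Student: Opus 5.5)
The plan is to verify the conditional-mean identity by iterated expectations, term by term, and then to obtain the influence-function statement from the standard mediation-functional EIF computation.

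\textbf{Conditional mean.} Fix $t$ and condition on $X$.

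For the first term, condition further on $(A,M,X)$. On $\{A=a\}$ we have $\E[Y_t\mid A=a,M,X]=\mu_a(t,M,X)$, so the residual $Y_t-\mu_a(t,M,X)$ has conditional mean zero. The first term therefore vanishes in conditional expectation given $X$; the weights $\pi_a^{-1}g_{a'}/g_a$ are $(A,M,X)$-measurable and finite by overlap.

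For the second term, condition on $(A,X)$. On $\{A=a'\}$, $M$ has density $g_{a'}(\cdot\mid X)$, so
\[
\E[\mu_a(t,M,X)\mid A=a',X]=\int \mu_a(t,m,X)\,g_{a'}(m\mid X)\,\diff m=\eta_{a,a'}(t,X).
\]
Hence $\E[\mathbbm 1(A=a')\{\mu_a-\eta_{a,a'}\}\mid X]=\pi_{a'}(X)\cdot 0=0$.

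The third term is $X$-measurable. So $\E[\Gamma^t_{a,a'}\mid X]=\eta_{a,a'}(t,X)$, which equals $F_{a,a'}(t\mid X)$ by the mediation g-formula~\eqref{eq:identified_marginal} under consistency, sequential ignorability and overlap.

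\textbf{Threshold rules.} For a measurable rule $t(X)$, the same computation goes through pointwise in $x$, because $t(X)$ is $X$-measurable and can be frozen inside every conditional expectation given $X$. Formally, I would use the joint measurability of $(t,o)\mapsto\Gamma^t_{a,a'}(o)$; each piece is measurable in $t$ since $Y_t$ is an indicator and $\mu_a$, $\eta_{a,a'}$ are CDF-valued. Then
\[
\E[\Gamma^{t(X)}\mid X=x]=\E[\Gamma^{s}\mid X=x]\big|_{s=t(x)}
\]
by a regular-conditional-distribution argument (the substitution/freezing lemma).

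\textbf{Influence function.} Take $\Psi=\E[\eta_{a,a'}(t(X),X)]$ with the rule $t(\cdot)$ held fixed, and differentiate along a regular parametric submodel $P_\varepsilon$ with score $s=s_X+s_{A\mid X}+s_{M\mid A,X}+s_{Y\mid M,A,X}$. I would compute $\partial_\varepsilon\Psi$ as the sum of the contributions from perturbing $p(x)$, $g_{a'}$ and $\mu_a$:
\begin{itemize}
\item the $p(x)$ part gives $\eta_{a,a'}-\Psi$;
\item the $g_{a'}$ part gives $\frac{\mathbbm 1(A=a')}{\pi_{a'}}(\mu_a-\eta_{a,a'})$;
\item the $\mu_a$ part is $\E\int \partial_\varepsilon\mu_a\, g_{a'}\,\diff m$. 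Rewriting it as an expectation over $\{A=a\}$ with the change-of-measure weight $g_{a'}/(\pi_a g_a)$ gives the first term of $\Gamma$.
\end{itemize}
This is the classical Tchetgen Tchetgen--Shpitser mediation EIF applied to the binary outcome $Y_t$. It is known for fixed $t$, and it extends to $t(X)$ because the functional is evaluated pointwise in $x$. Summing the three pieces shows that $\Gamma^{t(X)}-\Psi$ is the pathwise derivative's representer and is mean zero. Neyman orthogonality then follows from the standard product-of-errors second-order remainder.

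\textbf{Main obstacle.} The delicate step is the $\mu_a$ contribution, where overlap of $g_a$ relative to $g_{a'}$ is needed to justify the density-ratio change of measure. A secondary point is making precise that $t(\cdot)$ is treated as fixed rather than as a functional of $P$. The paper's later use, where $t(\cdot)$ is the argmax, relies on the envelope/margin argument from Theorem~\ref{thm:eif_bounds}, which is not needed here.
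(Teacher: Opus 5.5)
Your proof is correct. Your treatment of the conditional-mean identity and of a measurable rule $t(X)$ matches the paper's term-by-term argument; your joint-measurability/freezing remark is a little more careful than the paper's ``evaluate pointwise at $t=\tau(X)$''. The real difference is the influence-function part. The paper never differentiates $\Psi_{a,a'}$ along submodels. Instead it verifies Neyman orthogonality directly, conditionally on $X=x$: it perturbs $(\pi_a,\pi_{a'},g_a,g_{a'},\mu_a)$ inside the moment map $\Psi_x(\bar\nu)$ and checks that $T_1'(0)=-\int h_{\mu_a}g_{a'}\,\diff m$, $T_2'(0)=\int h_{\mu_a}g_{a'}\,\diff m-\dot\eta$ and $T_3'(0)=\dot\eta$ cancel. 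In the paper the influence-function claim is therefore only implicit, and this step is not written out: the mean identity holds under every $P$, so the chain rule plus orthogonality gives $\partial_\varepsilon\Psi_{a,a'}=\E[(\Gamma^{t(X)}_{a,a'}-\Psi_{a,a'})s]$. Your score-factorisation computation runs the other way. It makes the gradient explicit and gives efficiency in the nonparametric model, but orthogonality is left as an appeal to a remainder bound. When you write it up, do two things. First, say why the cross terms vanish: the three pieces are a mean-zero function of $X$, a term with mean zero given $(A,X)$, and a term with mean zero given $(A,M,X)$, so each is orthogonal to the score components it does not target. Second, carry out the remainder conditionally on $X$, because pointwise-in-$x$ orthogonality is what the second-stage pseudo-outcome regression uses. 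For fixed estimates with $\hat\eta_{a,a'}=\int\hat\mu_a\hat g_{a'}\,\diff m$ one gets $\E[\widehat\Gamma^t_{a,a'}\mid X]-\eta_{a,a'}(t,X)=\int\bigl(\tfrac{\pi_a}{\hat\pi_a}\tfrac{\hat g_{a'}}{\hat g_a}g_a-g_{a'}\bigr)(\mu_a-\hat\mu_a)\,\diff m+\bigl(\tfrac{\pi_{a'}}{\hat\pi_{a'}}-1\bigr)\int\hat\mu_a(g_{a'}-\hat g_{a'})\,\diff m$. This is a sum of products of nuisance errors, which recovers the paper's orthogonality statement immediately.
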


\begin{proof}
We first prove the identity for a fixed threshold \(t\in\mathcal T\). Taking
conditional expectation given \(X\), the first term in Eq.~\eqref{eqn:pseudooutcome}satisfies
$$\E\!\left[
\frac{\mathbbm 1(A=a)}{\pi_a(X)}
\frac{g_{a'}(M\mid X)}{g_a(M\mid X)}
\{Y_t-\mu_a(t,M,X)\}
\;\middle|\; X
\right]
=0,$$
because $\E\{Y_t-\mu_a(t,M,X)\mid X,M,A=a\}=0$. Indeed, conditioning further on \((X,M,A=a)\) makes the residual mean zero, and
the factor in front is measurable with respect to \((X,M,A)\).

For the second term,
\[
\E\!\left[
\frac{\mathbbm 1(A=a')}{\pi_{a'}(X)}
\{\mu_a(t,M,X)-\eta_{a,a'}(t,X)\}
\;\middle|\; X
\right]
=
\int
\{\mu_a(t,m,X)-\eta_{a,a'}(t,X)\}
g_{a'}(m\mid X)\,dm.
\]
By definition,
\[
\eta_{a,a'}(t,X)
=
\int \mu_a(t,m,X)g_{a'}(m\mid X)\,dm,
\]
so the last term equals \(0\). Therefore,
\[
\E\{\Gamma_{a,a'}^t(O)\mid X\}
=
\eta_{a,a'}(t,X)
=
F_{a,a'}(t\mid X).
\]

Now let \(t(X):\mathcal X\to\mathcal T\) be any measurable threshold rule.
Since \(t(X)\) is measurable with respect to \(X\), we may evaluate the
preceding identity pointwise at \(t=\tau(X)\). Thus,
\[
\E\{\Gamma_{a,a'}^{\tau(X)}(O)\mid X\}
=
F_{a,a'}(\tau(X)\mid X).
\]
We now verify Neyman orthogonality. Fix \(t,a,a'\), and condition throughout on \(X=x\). Let $\nu=(\pi_a,\pi_{a'},g_a,g_{a'},\mu_a)$
be the nuisance functions. For a generic nuisance value
\(\bar\nu=(\bar\pi_a,\bar\pi_{a'},\bar g_a,\bar g_{a'},\bar\mu_a)\), define
$\bar\eta_{a,a'}(t,x)
=
\int \bar\mu_a(t,m,x)\bar g_{a'}(m\mid x)\,d\lambda(m)$.

Consider the conditional moment map
\[
\Psi_x(\bar\nu)
=
\E\left[
\frac{\mathbbm 1(A=a)}{\bar\pi_a(x)}
\frac{\bar g_{a'}(M\mid x)}{\bar g_a(M\mid x)}
\{Y_t-\bar\mu_a(t,M,x)\}
\;\middle|\; X=x
\right]
\]
\[
\quad+
\E\left[
\frac{\mathbbm 1(A=a')}{\bar\pi_{a'}(x)}
\{\bar\mu_a(t,M,x)-\bar\eta_{a,a'}(t,x)\}
\;\middle|\; X=x
\right]
+
\bar\eta_{a,a'}(t,x).
\]
At the truth \(\nu_0=(\pi_a,\pi_{a'},g_a,g_{a'},\mu_a)\),
\(\Psi_x(\nu_0)=\eta_{a,a'}(t,x)\), because
\[
\E\{Y_t-\mu_a(t,M,X)\mid A=a,M,X\}=0
\]
and
\[
\E\{\mu_a(t,M,x)-\eta_{a,a'}(t,x)\mid A=a',X=x\}
=
\int\{\mu_a(t,m,x)-\eta_{a,a'}(t,x)\}g_{a'}(m\mid x)d\lambda(m)
=0.
\]

Let \(\nu_\varepsilon\) be a regular one-dimensional nuisance path through the
truth with tangent directions
\[
h_{\pi_a}(x)
=
\left.\frac{d}{d\varepsilon}\pi_{a,\varepsilon}(x)\right|_{\varepsilon=0},
\quad
h_{g_a}(m\mid x)
=
\left.\frac{d}{d\varepsilon}g_{a,\varepsilon}(m\mid x)\right|_{\varepsilon=0},
\quad
h_\mu(t,m,x)
=
\left.\frac{d}{d\varepsilon}\mu_{a,\varepsilon}(t,m,x)\right|_{\varepsilon=0}.
\] 
Because \(g_{a,\varepsilon}(\cdot\mid x)\) is a conditional density, its tangent
direction satisfies~\footnote{All integrals over \(m\) are with respect to a dominating measure 
\(\lambda\). For continuous \(M\), this is Lebesgue measure. For binary \(M\), 
\(\lambda\) is counting measure on \(\{0,1\}\), so the integral becomes 
\(\sum_{m\in\{0,1\}}\). In that case, the mediator tangent direction satisfies 
\(\sum_{m\in\{0,1\}}h_{g_b}(m\mid x)=0\).}
\[
\int h_{g_a}(m\mid x)\,dm=0. \qquad a\in\{a,a'\}.
\]
For binary treatment, the treatment-law perturbations satisfy $h_{\pi_0}(x)+h_{\pi_1}(x)=0$.

Write
\[
\pi_{a,\varepsilon}(x)
=
\pi_a(x)+\varepsilon h_{\pi_a}(x)+o(\varepsilon),
\qquad
\pi_{a',\varepsilon}(x)
=
\pi_{a'}(x)+\varepsilon h_{\pi_{a'}}(x)+o(\varepsilon),
\]
\[
g_{a,\varepsilon}(m\mid x)
=
g_a(m\mid x)+\varepsilon h_{g_a}(m\mid x)+o(\varepsilon),
\qquad
g_{a',\varepsilon}(m\mid x)
=
g_{a'}(m\mid x)+\varepsilon h_{g_{a'}}(m\mid x)+o(\varepsilon),
\]
and
\[
\mu_{a,\varepsilon}(t,m,x)
=
\mu_a(t,m,x)+\varepsilon h_{\mu_a}(t,m,x)+o(\varepsilon).
\]
Along the path,
\[
\eta_{\varepsilon}
:=
\eta_{a,a',\varepsilon}(t,x)
=
\int
\mu_{a,\varepsilon}(t,m,x)g_{a',\varepsilon}(m\mid x)d\lambda(m),
\]
so
\[
\dot\eta
:=
\left.\frac{d}{d\varepsilon}\eta_{\varepsilon}\right|_{\varepsilon=0}
=
\int h_{\mu_a}(t,m,x)g_{a'}(m\mid x)d\lambda(m)
+
\int \mu_a(t,m,x)h_{g_{a'}}(m\mid x)d\lambda(m).
\]

We need to show
\[
\left.
\frac{d}{d\varepsilon}
\{\Psi_x(\nu_\varepsilon)-\eta_{a,a'}(t,x)\}
\right|_{\varepsilon=0}
=0.
\]
Since \(\eta_{a,a'}(t,x)\) is fixed at the truth, it suffices to differentiate
\(\Psi_x(\nu_\varepsilon)\). 
Decompose $\Psi_x(\nu_\varepsilon) = T_1(\varepsilon)+T_2(\varepsilon)+T_3(\varepsilon)$ where $$T_1(\varepsilon)
=
\frac{\pi_a(x)}{\pi_{a,\varepsilon}(x)}
\int
\frac{g_{a',\varepsilon}(m\mid x)}{g_{a,\varepsilon}(m\mid x)}
\{\mu_a(t,m,x)-\mu_{a,\varepsilon}(t,m,x)\}
g_a(m\mid x)d\lambda(m),$$
\[
T_2(\varepsilon)
=
\frac{\pi_{a'}(x)}{\pi_{a',\varepsilon}(x)}
\int
\{\mu_{a,\varepsilon}(t,m,x)-\eta_\varepsilon\}
g_{a'}(m\mid x)d\lambda(m),
\]
and 
\[T_3(\varepsilon)=\eta_\varepsilon(t,x).\] 

\underline{For $T_1$:} At $\varepsilon=0$, the integrand in $T_1$ equals $\frac{g_{a'}(m\mid x)}{g_{a}(m\mid x)}
\{\mu_a(t,m,x)-\mu_{a}(t,m,x)\}
g_a(m\mid x)=0$, so the first-order contributions from perturbing $\pi_{a,\varepsilon}, g_{a,\varepsilon} and g_{a',\varepsilon}$ all multiply this zero residual and vanish. Only the perturbation to $\mu_a{,\varepsilon}$ survives, which gives
\[
T_1'(0)
=
-\int h_{\mu_a}(t,m,x)g_{a'}(m\mid x)d\lambda(m).
\]

\underline{For $T_2$:} Since $g_{a'}$ in $T_2$ is the unperturbed density, the only perturbed quantities are the propensity prefactor and the integrand $\mu_{a,\varepsilon}-\eta_{\varepsilon}$. At $\varepsilon=0$,
\[
\int\{\mu_a(t,m,x)-\eta_{a,a'}(t,x)\}g_{a'}(m\mid x)d\lambda(m)=0,
\]
the derivative of the prefactor \(\pi_{a'}(x)/\pi_{a',\varepsilon}(x)\) multiplies zero and contributes
nothing at first order. Differentiating the integrand directly, we get 
$$\int h_{\mu_a}(t,m,x)\,g_{a'}(m\mid x)\,d\lambda(m)
-\dot\eta,$$ where the $-\dot\eta$ comes from differentiating $-\eta_\varepsilon$ inside the integrand.
Hence
\[
T_2'(0)
=
\int h_{\mu_a}(t,m,x)g_{a'}(m\mid x)d\lambda(m)
-
\dot\eta.
\]
\underline{For $T_3$:}
$T_3'(0)=\dot\eta=\int h_{\mu_a}(t,m,x)g_{a'}(m\mid x)d\lambda(m)+\int \mu_a(t,m,x)h_{g_{a'}}(m\mid x)d\lambda(m)$.
This accounts for perturbations in both $\mu_{a,\varepsilon}$ and $g_{a',\varepsilon}$ within $\eta_{\varepsilon}$. 

Combining the three derivatives,
\[
\left.
\frac{d}{d\varepsilon}\Psi_x(\nu_\varepsilon)
\right|_{\varepsilon=0}
=
-\int h_{\mu_a}g_{a'}d\lambda
+
\left\{\int h_{\mu_a}g_{a'}d\lambda-\dot\eta\right\}
+
\dot\eta
=0.
\]
Therefore,
\[
\left.
\frac{d}{d\varepsilon}
\{\Psi_x(\nu_\varepsilon)-\eta_{a,a'}(t,x)\}
\right|_{\varepsilon=0}
=0,
\]
which establishes Neyman orthogonality.

\end{proof}

\begin{corollary}[Pseudo-outcome identity for the CDF contrast]
\label{cor:delta-pseudo-outcome}
For a generic pair \((U,V)\), define
\[
\Gamma_\Delta^t(O)=\Gamma_U^t(O)-\Gamma_V^t(O).
\]
Then, for any measurable threshold rule \(\tau(X)\),
\[
\E\{\Gamma_\Delta^{\tau(X)}(O)\mid X\}
=
\Delta_X(\tau(X)).
\]
\end{corollary}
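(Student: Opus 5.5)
The corollary follows from Lemma~\ref{lem:orthogonal_pseudo_outcome} by linearity of conditional expectation; the only care needed is in evaluating at a data-dependent threshold. I would proceed in three steps.

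\textbf{Step 1: fixed threshold.} Fix a deterministic $t$. By definition $\Gamma_\Delta^t(O)=\Gamma_U^t(O)-\Gamma_V^t(O)$, where each term is an instance of Eq.~\eqref{eqn:pseudooutcome} with the appropriate pair:
\begin{itemize}
\item for the direct pair, $(a,a')=(1,0)$ for $U$ and $(0,0)$ for $V$;
\item for the indirect pair, $(1,1)$ for $U$ and $(1,0)$ for $V$.
\end{itemize}
Each score is integrable, since overlap bounds the weights and $Y_t,\mu_a,\eta_{a,a'}\in[0,1]$. Linearity of conditional expectation and the first identity of Lemma~\ref{lem:orthogonal_pseudo_outcome} then give
\[
\E\{\Gamma_\Delta^t(O)\mid X\}=F_U(t\mid X)-F_V(t\mid X)=\Delta_X(t).
\]

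\textbf{Step 2: random threshold.} Let $\tau(X)$ be a measurable threshold rule. The second statement of Lemma~\ref{lem:orthogonal_pseudo_outcome} already covers thresholds of the form $\tau(X)$ for each of $\Gamma_U$ and $\Gamma_V$ separately. Subtracting those two identities, again by linearity, yields
\[
\E\{\Gamma_\Delta^{\tau(X)}(O)\mid X\}=\Delta_X(\tau(X)).
\]

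\textbf{Step 3: justifying the substitution.} The only delicate point is plugging the $X$-measurable random threshold into an identity proved for fixed $t$. I would handle it as follows. Write $G(t,o)=\Gamma_\Delta^t(o)$, which is jointly measurable in $(t,o)$ because $Y_t=\mathbbm 1\{Y\le t\}$ and the nuisances are measurable in $(t,m,x)$.

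Let $\kappa_x$ denote the regular conditional distribution of $O$ given $X=x$. Then, almost surely,
\[
\E\{G(\tau(X),O)\mid X=x\}=\int G(\tau(x),o)\,\kappa_x(do).
\]
This is the standard freezing or substitution lemma for conditional expectations, valid since $G$ is bounded and $\tau(X)$ is $\sigma(X)$-measurable. The right-hand side equals $\Delta_x(\tau(x))$ by Step 1, applied for every fixed $t$ at once. Applying Step 1 for all $t$ simultaneously, rather than for a single $t$ on a $t$-dependent null set, is legitimate because the identity in Lemma~\ref{lem:orthogonal_pseudo_outcome} holds pointwise in $x$ through the integral representation of $\eta_{a,a'}$.

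This substitution step is the main, though mild, obstacle; everything else is immediate.
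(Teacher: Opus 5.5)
Your proposal is correct and follows the paper's argument. The corollary comes from subtracting the two random-threshold identities of Lemma~\ref{lem:orthogonal_pseudo_outcome} by linearity of conditional expectation, and your Step~3 makes rigorous the substitution $t=\tau(X)$ that the paper handles in one line. One small correction: overlap as assumed does not bound $g_{a'}/g_a$, so $G$ need not be bounded, but it has a $t$-free integrable envelope (since $\E[\mathbbm 1(A=a)\pi_a(X)^{-1}g_{a'}(M\mid X)/g_a(M\mid X)\mid X]\le 1$), and that is all the freezing lemma requires.
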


\subsection{Proof of Theorem~\ref{thm:eif_bounds}}
The orthogonality above holds pointwise in \(t\). Since the FNA bounds involve
a supremum over \(t\), the next theorem additionally requires regularity of
the supremum map and stability of the estimated maximisers.

\begin{assumption}[Margin and non-kink condition]
\label{ass:margin}
For $b\in\{L,U\}$, let $h_L(z)=z$ and $ h_U(z)=-z$, and define $t_b(X)\in\arg\max_{t\in\mathbb R} h_b\{\Delta_X(t)\}$.

For $P_X$-almost every $X=x$, the maximiser $t_b(x)$ is unique. Moreover, there exist neighborhoods $\mathcal N_b(x)$ and constants $c>0$, $\kappa>0$ such that, for all $t\in\mathcal N_b(x)$,
\begin{equation}
h_b\{\Delta_x(t_b(x))\}-h_b\{\Delta_x(t)\}
\ge c |t-t_b(x)|^\kappa .
\end{equation}
Finally, the positive-part kink is avoided:
\begin{equation}
\Pr\left\{\sup_t \Delta_X(t)=0\right\}=0,
\qquad
\Pr\left\{\sup_t -\Delta_X(t)=0\right\}=0.
\end{equation}
\end{assumption}

\begin{theorem}[EIFs and efficiency for conditional direct/indirect FNA bounds]
Consider either 
$\Delta_{\mathrm{dir},X}(t)=F_{1,0}(t\mid X)-F_{0,0}(t\mid X)$
or
$\Delta_{\mathrm{ind},X}(t)=F_{1,1}(t\mid X)-F_{1,0}(t\mid X)$. Under the margin and non-kink condition~\ref{ass:margin}, the conditional Makarov bounds in \eqref{eq:conditional_makarov_bounds_general} are pathwise differentiable, with efficient influence functions
\begin{align}
\phi_{\underline{\theta}}(O)
&=
\mathbbm 1\{\sup_{t}\Delta_X(t)>0\}\,
\Gamma_{\Delta}^{\,\underline{t}_X}(O)
-\underline{\theta},
\\
\phi_{\overline{\theta}}(O)
&=
1+
\mathbbm 1\{\sup_{t}(-\Delta_X(t))>0\}\,
\Gamma_{\Delta}^{\,\overline{t}_X}(O)
-\overline{\theta},
\end{align}
where $\underline{t}_X \in \arg\max_{t\in\mathbb R}\Delta_X(t)$ and $\overline{t}_X \in \arg\max_{t\in\mathbb R}\{-\Delta_X(t)\}$, and $\Gamma_{\Delta}^{\cdot}$ is defined in~\eqref{eqn:deltascore}.

\end{theorem}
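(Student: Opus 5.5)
We derive the EIF by computing the pathwise derivative of $\underline\theta=\E[(\sup_t\Delta_X(t))_+]$ along a regular parametric submodel $P_\varepsilon$ with score $s(O)=s_X(X)+s_{A\mid X}+s_{M\mid A,X}+s_{Y\mid M,A,X}$. We then exhibit a mean-zero function $\phi$ with $\frac{d}{d\varepsilon}\underline\theta(P_\varepsilon)|_{0}=\E[\phi(O)s(O)]$. The model is nonparametric, so the tangent space is saturated and any such $\phi$ is automatically the EIF. The upper bound follows by the same argument applied to $h_U(z)=-z$. Writing $1-\E[(\sup_t\{-\Delta_X\})_+]$ and noting $-\Gamma_\Delta^t=\Gamma_V^t-\Gamma_U^t$ gives the stated form; the sign conventions need to be checked carefully at the end.

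\textbf{Step 1 (reduction to a fixed threshold).} Write $\underline\theta(P_\varepsilon)=\int \psi_\varepsilon(x)\,dP_{X,\varepsilon}(x)$ with $\psi_\varepsilon(x)=(\sup_t\Delta_{x,\varepsilon}(t))_+$. By the product rule, the derivative splits into two pieces. The first is $\E[\psi_0(X)s_X(X)]$, which produces the term $\psi_0(X)-\underline\theta$. The second is $\E[\dot\psi(X)]$, where $\dot\psi(x)=\frac{d}{d\varepsilon}\psi_\varepsilon(x)|_0$.

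Under Assumption~\ref{ass:margin}, the kink at zero is avoided almost surely. Hence $\dot\psi(x)=\mathbbm 1\{\sup_t\Delta_x(t)>0\}\cdot\frac{d}{d\varepsilon}\sup_t\Delta_{x,\varepsilon}(t)|_0$.

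For the supremum we use a Danskin/envelope argument. The maximiser $\underline t_x$ is unique and well separated with growth exponent $\kappa$. The perturbation $\Delta_{x,\varepsilon}-\Delta_x$ is $O(\varepsilon)$ uniformly in $t$, assuming the submodel is such that CDF perturbations are uniform in $t$, which I impose on the submodel class. The two facts together give that the perturbed maximiser stays within $O(\varepsilon^{1/\kappa})$ of $\underline t_x$ in $\mathcal N_L(x)$, and that
\[
\sup_t\Delta_{x,\varepsilon}(t)=\Delta_{x,\varepsilon}(\underline t_x)+o(\varepsilon).
\]
The upper bound on the sup follows by bounding $\Delta_{x,\varepsilon}(t)-\Delta_{x,\varepsilon}(\underline t_x)\le \Delta_x(t)-\Delta_x(\underline t_x)+|\text{remainder difference}|$. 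This requires equicontinuity in $t$ of the derivative $\dot\Delta_x(t)$ near $\underline t_x$, so that the remainder difference is $o(\varepsilon)$. I expect this step to be the main obstacle: one must justify uniform (in $t$ and, for dominated convergence, in $x$) differentiability of $\varepsilon\mapsto\Delta_{x,\varepsilon}(t)$. I would first try to assume bounded scores and derive the bound from the explicit integral representation of $\eta_{a,a'}$, and then apply dominated convergence to exchange $d/d\varepsilon$ with $\E_X$.

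\textbf{Step 2 (fixed-threshold pathwise derivative).} We have reduced the problem to differentiating $\E\big[\mathbbm 1\{\sup\Delta_X>0\}\,\Delta_{X,\varepsilon}(\tau(X))\big]$ for the fixed measurable rule $\tau(X)=\underline t_X$, with the marginal of $X$ held fixed. Lemma~\ref{lem:orthogonal_pseudo_outcome} already identifies the influence function of $\Psi_{a,a'}=\E\{F_{a,a'}(\tau(X)\mid X)\}$ as $\Gamma^{\tau(X)}_{a,a'}(O)-\Psi_{a,a'}$. Concretely, this means the conditional part of the derivative equals $\E[(\Gamma^{\tau(X)}_{a,a'}(O)-\eta_{a,a'}(\tau(X),X))\,s(O)]$. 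This can be verified by the standard mediation-formula calculation: differentiate $\int\mu_a g_{a'}$ along the $Y\mid M,A,X$ and $M\mid A,X$ scores and recognise the two weighted residual terms of \eqref{eqn:pseudooutcome}.

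The indicator is $X$-measurable, so it passes through these conditional expectations. Taking the difference over $(U,V)$, by Corollary~\ref{cor:delta-pseudo-outcome}, the second piece of Step 1 equals
\[
\E\big[\mathbbm 1\{\sup_t\Delta_X(t)>0\}\{\Gamma_\Delta^{\underline t_X}(O)-\Delta_X(\underline t_X)\}s(O)\big].
\]

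\textbf{Step 3 (assembly).} Add the $X$-marginal term from Step 1. On the event $\{\sup_t\Delta_X>0\}$ we have $\psi_0(X)=\Delta_X(\underline t_X)$, so the $\Delta_X(\underline t_X)$ terms cancel. The result is
\[
\phi_{\underline\theta}=\mathbbm 1\{\sup_t\Delta_X(t)>0\}\Gamma_\Delta^{\underline t_X}(O)-\underline\theta,
\]
which is mean zero and in $L_2$ by overlap and boundedness of $Y_t$. This gives pathwise differentiability and, since the model is nonparametric, efficiency.
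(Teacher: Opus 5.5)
Your proposal is correct and follows the same route as the paper's proof. Both use an envelope argument at the unique maximiser $\underline t_X$, differentiability of the positive part via the non-kink condition, a reduction to the fixed-threshold functional whose influence function comes from Lemma~\ref{lem:orthogonal_pseudo_outcome} and Corollary~\ref{cor:delta-pseudo-outcome}, then centering and an appeal to the nonparametric (saturated) model for efficiency. You are more explicit than the paper, which simply asserts the envelope step, about the $X$-marginal score contribution and about the uniform-in-$t$ differentiability and continuity of $\dot\Delta_x$ needed to justify that step.
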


\begin{proof}
We prove the result for the lower bound. The upper bound is analogous. Recall the lower bound $\underline{\theta}
=
\E\left[
\left\{\sup_{t\in\mathcal T}\Delta_X(t)\right\}_+
\right]$.

Define
\[
m_L(X)=\sup_{t\in\mathcal T}\Delta_X(t),
\qquad
\underline t_X\in\arg\max_{t\in\mathcal T}\Delta_X(t).
\]
Then
\[
\underline\theta
=
\E\{m_L(X)_+\}.
\]
By Assumption~\ref{ass:margin}, the maximiser \(\underline t_X\) is unique and
well separated for \(P_X\)-almost every \(X\). Hence the envelope theorem gives
that the first-order perturbation of \(m_L(X)\) is obtained by evaluating the
first-order perturbation of \(\Delta_X(t)\) at \(t=\underline t_X\). Moreover,
by the non-kink condition,
\[
\Pr\{m_L(X)=0\}=0,
\]
so the positive-part map is differentiable at \(m_L(X)\) almost surely.

Consequently, the first-order perturbation of the lower bound is the same as
the first-order perturbation of the fixed-threshold parameter
\[
\E\left[
\mathbbm 1\{m_L(X)>0\}
\Delta_X(\underline t_X)
\right].
\]
Equivalently,
\[
\underline\theta
=
\E\left[
\mathbbm 1\{m_L(X)>0\}
\Delta_X(\underline t_X)
\right].
\]

By Corollary~\ref{cor:delta-pseudo-outcome}, applied with
\(\tau(X)=\underline t_X\),
\[
\E\{\Gamma_\Delta^{\underline t_X}(O)\mid X\}
=
\Delta_X(\underline t_X).
\]
Therefore an uncentered influence-function representation for the lower bound is
\[
\mathbbm 1\{m_L(X)>0\}
\Gamma_\Delta^{\underline t_X}(O).
\]
Centering gives
\[
\phi_{\underline\theta}(O)
=
\mathbbm 1\{m_L(X)>0\}
\Gamma_\Delta^{\underline t_X}(O)
-
\underline\theta.
\]
Indeed,
\[
\E\{\phi_{\underline\theta}(O)\}
=
\E\left[
\mathbbm 1\{m_L(X)>0\}
\Delta_X(\underline t_X)
\right]
-
\underline\theta
=
0.
\]
Thus, for every regular parametric submodel with score \(s\in L_0^2(P)\),
\[
\left.
\frac{d}{d\epsilon}
\underline\theta(P_\epsilon)
\right|_{\epsilon=0}
=
\E\{\phi_{\underline\theta}(O)s(O)\}.
\]

For the upper bound, define
\[
m_U(X)=\sup_{t\in\mathcal T}\{-\Delta_X(t)\},
\qquad
\overline t_X\in\arg\max_{t\in\mathcal T}\{-\Delta_X(t)\}.
\]
Then
\[
\overline\theta
=
1-\E\{m_U(X)_+\}
=
1+
\E\left[
\mathbbm 1\{m_U(X)>0\}
\Delta_X(\overline t_X)
\right].
\]
Applying the same argument gives
\[
\phi_{\overline\theta}(O)
=
1+
\mathbbm 1\{m_U(X)>0\}
\Gamma_\Delta^{\overline t_X}(O)
-
\overline\theta.
\]
The mean-zero property follows since
\[
\E\{\phi_{\overline\theta}(O)\}
=
1+
\E\left[
\mathbbm 1\{m_U(X)>0\}
\Delta_X(\overline t_X)
\right]
-
\overline\theta
=
0.
\]

Since $\phi_{\underline{\theta}}$ and 
$\phi_{\overline{\theta}}$ are mean zero, square-integrable, and represent the 
pathwise derivatives of $\underline{\theta}$ and $\overline{\theta}$ 
respectively along every regular parametric submodel, they are the unique 
elements of $L_0^2(P)$ satisfying this gradient representation, and hence are 
the efficient influence functions for $\underline{\theta}$ and 
$\overline{\theta}$ in the nonparametric model.

\end{proof}

\newpage

\section{Asymptotic Normality of the One-Step Estimators}\label{app:asymptotic_normality}
For DML inference, we assume standard cross-fitting and rate conditions ensuring second-order remainder control. Concretely, we require:
(i) consistent nuisance estimators with overlap-safe truncation,
(ii) $L_2(P)$ convergence rates such that products of relevant errors are $o_p(n^{-1/2})$,
and (iii) empirical process conditions compatible with cross-fitting. 
In addition, we state the following condition on top of Assumption~\ref{ass:margin}.
\begin{assumption}[Estimated maximiser stability]
\label{ass:max_stability}
Define ${\underline{h}}(z)=z$ and ${\overline{h}}(z)=-z,$
so that
\begin{equation}
  \underline{t}(X)\in\arg\max_t\, {\underline{h}}\{\Delta_X(t)\},
  \qquad
  \overline{t}(X)\in\arg\max_t\, {\overline{h}}\{\Delta_X(t)\}.
\end{equation}
Let $\hat{\underline{t}}(X)$ and $\hat{\overline{t}}(X)$ be the corresponding
data-adaptive maximisers of ${\underline{h}}\{\widehat\Delta_X(t)\}$ and
${\overline{h}}\{\widehat\Delta_X(t)\}$ over $t\in\mathcal T_n$, computed on an
auxiliary fold independent of the final score evaluation.
For $b\in\{\underline{\phantom{t}},\overline{\phantom{t}}\}$, assume:
\begin{enumerate}[label=(\Alph*)]
  \item \emph{Oracle value loss.}
        \begin{equation}
          \mathbb E\!\left[
            \left|
              h_b\{\Delta_X(\hat{t}_b(X))\}
              -h_b\{\Delta_X(t_b(X))\}
            \right|
          \right]=o_p(n^{-1/2}).
          \tag{A}\label{eq:value_loss}
        \end{equation}
  \item \emph{Score-weighted switch stability.}
        \begin{equation}
          \mathbb E\!\left[
            \left|
              \mathbbm 1\{h_b(\widehat\Delta_X(\hat{t}_b(X)))>0\}
              -\mathbbm 1\{h_b(\Delta_X(t_b(X)))>0\}
            \right|
            \sup_{t\in\mathcal T_n}|\Gamma_\Delta^{\,t}(O)|
          \right]=o_p(n^{-1/2}).
          \tag{B}\label{eq:switch_stability}
        \end{equation}
  \item \emph{Local stochastic equicontinuity.}
        \begin{equation}
          (\mathbb P_n-P_0)\!\left[
            \mathbbm 1\{h_b(\Delta_X(t_b(X)))>0\}
            \left\{
              \Gamma_\Delta^{\,\hat{t}_b(X)}(O)
              -\Gamma_\Delta^{\,t_b(X)}(O)
            \right\}
          \right]=o_p(n^{-1/2}).
          \tag{C}\label{eq:equicont}
        \end{equation}
\end{enumerate}
\end{assumption}

\begin{remark}[Why Assumption~\ref{ass:max_stability} is needed]
The estimator evaluates the orthogonal score at the estimated maximiser $\hat t_b(X)$: $\widehat\Gamma_\Delta^{\hat t_b(X)}(O)$. Even if the nuisance functions entering $\widehat\Gamma_\Delta$ were known, using $\hat t_b(X)$ instead of the oracle maximiser $t_b(X)$ would target
$\mathbb E\left[h_b\{\Delta_X(\hat t_b(X))\}\right]$
rather than $\mathbb E\left[h_b\{\Delta_X(t_b(X))\}\right]$.
Thus, the plug-in maximiser can induce a first-order bias unless the oracle value loss in~\eqref{eq:value_loss} is $o_p(n^{-1/2})$. Uniqueness of the maximiser alone is not enough; the estimated maximiser must be sufficiently stable for root-$n$ inference.
\end{remark}

\begin{remark}[Sufficient conditions for maximiser stability]
Assumption~\ref{ass:max_stability} is high-level but can be verified under simpler primitive conditions.

\emph{Finite-grid separated maximiser.}
Suppose $\mathcal T_n=\mathcal T$ is a fixed finite grid and, for $b\in\{\underline{\phantom{t}},\overline{\phantom{t}}\}$, the oracle grid maximiser is separated:
\begin{equation}
\inf_x
\left[
h_b\{\Delta_x(t_b(x))\}
-
\max_{t\in\mathcal T:t\neq t_b(x)}h_b\{\Delta_x(t)\}
\right]
\ge c_0>0.
\end{equation}
If
\begin{equation}
\sup_{x,t\in\mathcal T}
|\widehat\Delta_x(t)-\Delta_x(t)|=o_p(c_0),
\end{equation}
then $\hat t_b(X)=t_b(X)$ with probability tending to one. All three conditions follow immediately: \eqref{eq:value_loss} holds because the value loss is exactly zero with high probability; \eqref{eq:switch_stability} holds because the two indicators agree; and \eqref{eq:equicont} holds because the score difference is zero.

\emph{Continuous-threshold value-loss condition.}
Suppose the local margin condition in Assumption~\ref{ass:margin} holds and $\mathbb E|\hat t_b(X)-t_b(X)|^\kappa=o_p(n^{-1/2})$.
The margin condition then implies
$|h_b\{\Delta_X(\hat{t}_b)\}-h_b\{\Delta_X(t_b)\}|\lesssim|\hat{t}_b-t_b|^\kappa$,
so \eqref{eq:value_loss} follows. Conditions \eqref{eq:switch_stability} and \eqref{eq:equicont} follow under mild Lipschitz regularity of the score in $t$ near the oracle maximiser.
Note that $\kappa$-th moment convergence of the maximiser is strictly stronger
than ordinary consistency: the estimator can remain biased even when
$\hat{t}_b(X)\to t_b(X)$ in probability if convergence is insufficiently fast. This condition can hold when the maximiser is estimated sufficiently accurately, for example in low-dimensional or strongly smoothed settings.
\end{remark}

\begin{theorem}[DML inference for conditional direct/indirect FNA bounds]
Under the conditional margin assumption and standard DML rate conditions with cross-fitting,
\begin{equation}
\sqrt{n}\big(\widehat{\overline{\underline{\theta}}}-{\overline{\underline{\theta}}}\big)
=
\frac{1}{\sqrt{n}}\sum_{i=1}^n
\phi_{\overline{\underline{\theta}}}(O_i)
+
o_p(1).
\end{equation}
Consequently,
\begin{equation}
\frac{\widehat{\overline{\underline{\theta}}}-{\overline{\underline{\theta}}}}
{\widehat{\mathrm{se}}({\overline{\underline{\theta}}})}
\ \overset{d}{\longrightarrow}\ 
\mathcal N(0,1).
\end{equation}
Furthermore, \(\widehat{\overline{\underline{\theta}}}\) is asymptotically efficient.
A \((1-\alpha)\) Wald interval for each bound is
\begin{equation}
\overline{\underline{\mathrm{CI}}}_{1-\alpha}
=
\left[
\widehat{\overline{\underline{\theta}}}
\pm
z_{1-\alpha/2}\,
\widehat{\mathrm{se}}(\widehat{\overline{\underline{\theta}}})
\right]\cap[0,1].
\end{equation}
A conservative confidence set for the partially identified FNA parameter 
\(\theta\in[\underline{\theta},\overline{\theta}]\) is
\begin{equation}
\left[
\max\{0,\ \widehat{\underline{\theta}}-z_{1-\alpha/2}\widehat{\mathrm{se}}(\widehat{\underline{\theta}})\},
\ 
\min\{1,\ \widehat{\overline{\theta}}+z_{1-\alpha/2}\widehat{\mathrm{se}}(\widehat{\overline{\theta}})\}
\right].
\end{equation}
\end{theorem}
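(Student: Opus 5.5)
The plan is the standard one-step/DML expansion, carried out for the lower bound; the upper bound follows by replacing $h_L$ with $h_U$ and adding the constant $1$. Write $\widehat{\underline\theta}-\underline\theta$ as a sum of four terms:
\begin{align*}
\widehat{\underline\theta}-\underline\theta
&=(\mathbb P_n-P_0)\,\phi_{\underline\theta}
+(\mathbb P_n-P_0)\bigl[\widehat\psi-\psi\bigr]
+P_0\bigl[\widehat\psi\bigr]-\underline\theta-P_0\bigl[\psi\bigr]+\underline\theta\\
&\qquad+\bigl(P_0[\psi]-\underline\theta\bigr).
\end{align*}
Here $\psi(O)=\mathbbm 1\{\sup_t\Delta_X(t)>0\}\Gamma_\Delta^{\underline t_X}(O)$ is the uncentred EIF from Theorem~\ref{thm:eif_bounds}, and $\widehat\psi$ is its estimated counterpart evaluated at $\hat{\underline t}_X$ with the switch $\mathbbm 1\{\widehat\Delta_X(\hat{\underline t}_X)>0\}$. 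The last term vanishes because $P_0\psi=\underline\theta$, which follows from Corollary~\ref{cor:delta-pseudo-outcome} and the non-kink part of Assumption~\ref{ass:margin}. The first term is $n^{-1/2}$ times a sum of i.i.d.\ mean-zero, finite-variance terms, so it gives the claimed linear representation once the middle terms are shown to be $o_p(n^{-1/2})$.

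For the empirical-process term I would work fold by fold under cross-fitting. Conditional on the auxiliary folds, the nuisances and $\hat{\underline t}$ are fixed, so $\widehat\psi-\psi$ is a fixed function. By Chebyshev, $(\mathbb P_n-P_0)[\widehat\psi-\psi]=O_p(n^{-1/2}\|\widehat\psi-\psi\|_{L_2})$. I would then split $\widehat\psi-\psi$ into three pieces:
\begin{enumerate}[label=(\roman*)]
\item nuisance error in $\widehat\Gamma^{\hat t}_\Delta-\Gamma^{\hat t}_\Delta$, which is $o_p(1)$ in $L_2$ by consistency and overlap;
\item the switch difference, controlled by Assumption~\ref{ass:max_stability}\eqref{eq:switch_stability};
\item the threshold change $\Gamma^{\hat t}_\Delta-\Gamma^{t}_\Delta$, handled directly by \eqref{eq:equicont}.
\end{enumerate}

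The bias term $P_0\widehat\psi-\underline\theta$ is the main obstacle. Conditioning on $X$ and using Lemma~\ref{lem:orthogonal_pseudo_outcome} pointwise in $t=\hat{\underline t}_X$, I would write $P_0[\widehat\Gamma^{\hat t}_\Delta\mid X]=\Delta_X(\hat t_X)+R_2(X,\hat t_X)$. The remainder $R_2$ is the usual mediation second-order remainder, and the explicit algebra of Eq.~\eqref{eqn:orthogonalscore} gives it in product form:
\[
(\hat\pi_a^{-1}-\pi_a^{-1})(\hat\mu_a-\mu_a)\quad\text{and}\quad\bigl(\hat g_{a'}/\hat g_a-g_{a'}/g_a\bigr)(\hat\mu_a-\mu_a),
\]
together with a $(\hat\pi_{a'}^{-1}-\pi_{a'}^{-1})(\hat\eta-\eta)$-type term. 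These are $o_p(n^{-1/2})$ uniformly over the grid by the product-rate assumption.

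What remains of the bias is
\[
\E\bigl[\widehat{\mathbbm 1}\,\Delta_X(\hat t_X)-\mathbbm 1\,\Delta_X(t_X)\bigr],
\]
which I would split in two. The first part, $\mathbbm 1\{\cdot\}(\Delta_X(\hat t_X)-\Delta_X(t_X))$, is the oracle value loss and is $o_p(n^{-1/2})$ by \eqref{eq:value_loss}. The second part, $(\widehat{\mathbbm 1}-\mathbbm 1)\Delta_X(\hat t_X)$, is bounded by \eqref{eq:switch_stability}, since $|\Delta|\le\sup_t|E[\Gamma^t_\Delta\mid X]|$. The key point I would make explicit is that no first-order term in $\hat t-t$ appears. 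This is an envelope-type argument: $t_X$ maximises $\Delta_X$, so the value loss is of order $|\hat t-t|^\kappa$ by the margin condition, not of order $|\hat t-t|$.

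Finally, asymptotic normality follows from the CLT applied to $\mathbb P_n\phi_{\underline\theta}$ together with Slutsky. This uses $\mathbb V_n(\widehat\phi)\to\mathrm{Var}(\phi)>0$, which holds by the $L_2$-consistency of $\widehat\phi$ plus a law of large numbers. Efficiency holds because the estimator is asymptotically linear with influence function equal to the EIF of Theorem~\ref{thm:eif_bounds} in the nonparametric model. The Wald interval then has asymptotic coverage $1-\alpha$ for each bound, and intersecting with $[0,1]$ cannot reduce coverage since the bounds lie in $[0,1]$. The conservative set contains $[\underline\theta,\overline\theta]$ with probability at least $1-\alpha$ asymptotically, by a union bound over the two one-sided events, each of which fails with probability at most $\alpha/2$.
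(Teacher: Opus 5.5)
Your argument is correct and uses the same ingredients as the paper's proof: Neyman orthogonality with product rates for the nuisances, and conditions \eqref{eq:value_loss}--\eqref{eq:equicont} for the estimated maximiser and the switch. The bookkeeping, however, is genuinely different.

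The paper splits the remainder by \emph{source} into three empirical averages. These are a nuisance term $R_{1n}=\mathbb P_n[\widehat{\mathbbm 1}(\widehat\Gamma^{\hat{\underline t}}_\Delta-\Gamma^{\hat{\underline t}}_\Delta)]$, a switch term $R_{2n}$, and a maximiser-shift term $R_{3n}$. Only $R_{3n}$ is then separated into a population value-loss part and a centred part.

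You apply the population/centred split globally instead. You get one empirical-process term $(\mathbb P_n-P_0)[\widehat\psi-\psi]$ and one bias term. Under cross-fitting, with $\hat{\underline t}$ independent of the evaluation fold as Assumption~\ref{ass:max_stability} posits, the first term only needs $\|\widehat\psi-\psi\|_{L_2}\to 0$. The bias term comes from conditioning on $X$ with $\hat{\underline t}_X$ held fixed.

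The paper's version is shorter and ties each assumption to one remainder. Yours is sturdier exactly where the paper is loose:
\begin{itemize}
\item The paper bounds $|R_{1n}|$ by $\sup_{t\in\mathcal T_n}|\mathbb P_n[\widehat\Gamma^t_\Delta-\Gamma^t_\Delta]|$. That does not follow when the threshold and the switch vary across units. Your conditional computation instead reduces the nuisance effect to $\E\sup_t|R_2(X,t)|$ plus a centred empirical average of a function whose $L_2$ norm vanishes, and both are legitimately controlled.
\item Your explicit remainder exposes the $(\hat\pi_{a'}^{-1}-\pi_{a'}^{-1})(\hat\eta-\eta)$ cross term. This is a $\|\hat\pi_{a'}-\pi_{a'}\|\,\|\hat g_{a'}-g_{a'}\|$ product that is missing from the paper's list of rate conditions.
\item You justify coverage of the Wald intervals and the conservative set, which the paper only asserts.
\end{itemize}

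Two small points to tighten.
\begin{itemize}
\item Your Chebyshev step for the centred switch term needs $L_2$ smallness, but \eqref{eq:switch_stability} is an $L_1$ statement. Converting one into the other uses boundedness of $\Gamma^t_\Delta$, which comes from overlap or truncation, so state it.
\item Your envelope aside has the inequality backwards. Assumption~\ref{ass:margin} gives the lower bound $\Delta_X(\underline t_X)-\Delta_X(t)\ge c|t-\underline t_X|^\kappa$, so it cannot show the value loss is $O(|\hat t-t|^\kappa)$. An upper bound needs smoothness of $\Delta_X$ near $\underline t_X$, such as a vanishing derivative at an interior maximiser. The paper's sufficient-condition remark makes the same slip. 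Since you invoke \eqref{eq:value_loss} directly, the proof does not depend on it.
\end{itemize}
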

\begin{proof}
We give the proof for the lower bound. The upper bound follows by replacing $\Delta_X(t)$ with $-\Delta_X(t)$ and accounting for the leading constant $1$.

\textbf{Step 0: Decomposition.}
Let $\underline{\psi}(O)
=
\mathbbm 1\{\Delta_X({\underline{t}}(X))>0\}
\Gamma_\Delta^{{\underline{t}}(X)}(O)$.
Note $\underline\theta = \mathbb{E}\{\psi_L(O)\}$ by 
Corollary~\ref{cor:delta-pseudo-outcome} and the envelope theorem. 
Adding and subtracting $\underline{\psi}$, the estimation error decomposes as
\[
  \widehat{\underline\theta} - \underline\theta
  = (\mathbb{P}_n - P_0)\phi_{\underline\theta}
  + R_{1n} + R_{2n} + R_{3n},
\]
where
\begin{align*}
R_{1n} &= \mathbb{P}_n\!\left[
  \mathbbm 1\{\widehat\Delta_X(\hat{\underline{t}})>0\}
  \Bigl(\widehat\Gamma_\Delta^{\hat{\underline{t}}}(O)
        - \Gamma_\Delta^{\hat{\underline{t}}}(O)\Bigr)
\right], \\
R_{2n} &= \mathbb{P}_n\!\left[
  \Bigl(\mathbbm 1\{\widehat\Delta_X(\hat{\underline{t}})>0\}
       -\mathbbm 1\{\Delta_X(\underline{t})>0\}\Bigr)
  \Gamma_\Delta^{\hat{\underline{t}}}(O)
\right], \\
R_{3n} &= \mathbb{P}_n\!\left[
  \mathbbm 1\{\Delta_X(\underline{t})>0\}
  \Bigl(\Gamma_\Delta^{\hat{\underline{t}}}(O)
        - \Gamma_\Delta^{\underline{t}}(O)\Bigr)
\right].
\end{align*}
We show each remainder is $o_p(n^{-1/2})$.

There is also a deterministic value-loss component hidden in $R_{3n}$:
\begin{equation}
P_0\left[
\mathbbm 1\{\Delta_X({\underline{t}})>0\}
\left\{
\Gamma_\Delta^{\hat {\underline{t}}}(O)
-
\Gamma_\Delta^{{\underline{t}}}(O)
\right\}
\right]
\end{equation}
\begin{equation}
=
\mathbb E\left[
\mathbbm 1\{\Delta_X({\underline{t}})>0\}
\left\{
\Delta_X(\hat {\underline{t}})-\Delta_X({\underline{t}})
\right\}
\right].
\end{equation}
This is exactly the bias induced by evaluating the score at the estimated maximiser. Assumption~\ref{ass:max_stability} requires this term to be $o_p(n^{-1/2})$.

\noindent\textbf{Bounding $R_{1n}$: nuisance remainder.}
Since the indicator $\mathbbm 1\{\widehat\Delta_X(\hat{\underline{t}})>0\}$
is bounded by $1$,
\begin{equation}
  |R_{1n}|
  \le
  \sup_{t\in\mathcal T_n}
  \left|
    \mathbb P_n\!\left[
      \widehat\Gamma_\Delta^{\,t}(O)
      -\Gamma_\Delta^{\,t}(O)
    \right]
  \right|.
\end{equation}
The score $\Gamma_\Delta^{\,t}$ is Neyman-orthogonal (Lemma~\ref{lem:orthogonal_pseudo_outcome}): its pathwise derivative
with respect to each nuisance is zero at the truth, so the leading-order
sensitivity to nuisance errors cancels. What remains is a second-order term
that, after cross-fitting, can be bounded by products of $L_2$ estimation
errors. Specifically, expanding $\widehat\Gamma_\Delta^{\,t}-\Gamma_\Delta^{\,t}$
in each nuisance component and applying the Cauchy--Schwarz inequality yields
terms of the form
$\|\widehat\pi_a-\pi_a\|_2\|\widehat\mu_a-\mu_a\|_2$ and
$\|\widehat g_{a'}-g_{a'}\|_2\|\widehat\mu_a-\mu_a\|_2$,
each $o_p(n^{-1/2})$ by DML conditions.
Cross-fitting ensures these products are evaluated on an independent fold,
so no empirical process complexity is incurred.
Hence $R_{1n}=o_p(n^{-1/2})$.

\medskip
\noindent\textbf{Bounding $R_{2n}$: switch-change remainder.}
The remainder $R_{2n}$ arises because the estimated indicator
$\mathbbm 1\{\widehat\Delta_X(\hat{\underline{t}})>0\}$
may differ from the oracle indicator
$\mathbbm 1\{\Delta_X(\underline{t})>0\}$.
When the two indicators agree, the summand is zero; when they disagree,
the summand is $\pm\Gamma_\Delta^{\,\hat{\underline{t}}}(O)$.
Therefore
\[
  |R_{2n}|
  \le
  \mathbb P_n\!\left[
    \left|
      \mathbbm 1\{\widehat\Delta_X(\hat{\underline{t}})>0\}
      -\mathbbm 1\{\Delta_X(\underline{t})>0\}
    \right|
    \left|\Gamma_\Delta^{\,\hat{\underline{t}}}(O)\right|
  \right].
\]
By the law of large numbers and condition~\eqref{eq:switch_stability}, the population analogue of the 
right-hand side is $o_p(n^{-1/2})$, and the empirical average concentrates 
around it at rate $O_p(n^{-1/2})$. Hence $R_{2n} = o_p(n^{-1/2})$.

\medskip
\noindent\textbf{Bounding $R_{3n}$: maximiser-shift remainder.}
This remainder captures the effect of evaluating the oracle score at the
estimated maximiser $\hat{\underline{t}}$ rather than at the true maximiser
$\underline{t}$.
Even with perfect nuisance estimation, using $\hat{\underline{t}}$ introduces
a bias because $\hat{\underline{t}}$ is random and correlated with the data
used to form the score. We split $R_{3n}$ into a deterministic population
part and a stochastic centred part:
\begin{equation}
  R_{3n}
  =
  \underbrace{
    P_0\!\left[
      \mathbbm 1\{\Delta_X(\underline{t})>0\}
      \Bigl(
        \Gamma_\Delta^{\,\hat{\underline{t}}}-\Gamma_\Delta^{\,\underline{t}}
      \Bigr)
    \right]
  }_{B_n}
  +
  \underbrace{
    (\mathbb P_n-P_0)\!\left[
      \mathbbm 1\{\Delta_X(\underline{t})>0\}
      \Bigl(
        \Gamma_\Delta^{\,\hat{\underline{t}}}-\Gamma_\Delta^{\,\underline{t}}
      \Bigr)
    \right]
  }_{S_n}.
\end{equation}

\emph{Population part $B_n$.}
Since 
$\mathbb{E}[\Gamma_\Delta^t(O)\mid X] = \Delta_X(t)$ for any fixed $t$ 
(Corollary~\ref{cor:delta-pseudo-outcome}),
\[
  B_n = \mathbb{E}\!\left[\mathbbm 1\{\Delta_X(\underline{t})>0\}
         \bigl(\Delta_X(\hat{\underline{t}}) - \Delta_X(\underline{t})\bigr)\right].
\] 
This is a deterministic value-loss bias: it measures how much the bound
objective $\Delta_X(\cdot)$ deteriorates when the estimated maximiser
replaces the oracle. 
Since $\underline{t}$ maximises $\Delta_X(\cdot)$, the integrand is 
non-positive and
\[
  |B_n| \le \mathbb{E}\!\left[
    \bigl|\Delta_X(\hat{\underline{t}}) - \Delta_X(\underline{t})\bigr|
  \right] = o_p(n^{-1/2}),
\]
by condition~\eqref{eq:value_loss}.

\emph{Stochastic part $S_n$.}
Because
$\hat{\underline{t}}$ is computed on an independent cross-fitting fold, the
score difference
$\Gamma_\Delta^{\,\hat{\underline{t}}}-\Gamma_\Delta^{\,\underline{t}}$
is a function indexed by a random but asymptotically stable index.
Therefore $S_n = o_p(n^{-1/2})$ by condition~\eqref{eq:equicont}.
Hence $R_{3n}=B_n+S_n=o_p(n^{-1/2})$.

Therefore,
\begin{equation}
\widehat{\underline{\theta}}-\underline{\theta}
=
(\mathbb P_n-P_0)\phi_{\underline{\theta}}
+
o_p(n^{-1/2}).
\end{equation}
Equivalently,
\begin{equation}
\sqrt n(\widehat{\underline{\theta}}-\underline{\theta})
=
\frac{1}{\sqrt n}\sum_{i=1}^n
\phi_{\underline{\theta}}(O_i)
+
o_p(1).
\end{equation}
By the central limit theorem,
\begin{equation}
\sqrt n(\widehat{\underline{\theta}}-\underline{\theta})
\overset{d}{\longrightarrow}
\mathcal N(0,\mathbb V\{\phi_{\underline{\theta}}(O)\}).
\end{equation}

The same argument for the upper bound gives
\begin{equation}
\sqrt n(\widehat{\overline{\theta}}-\overline{\theta})
=
\frac{1}{\sqrt n}\sum_{i=1}^n
\phi_{\overline{\theta}}(O_i)
+
o_p(1).
\end{equation}
Consistency of the empirical variance estimator follows from $L_2$ consistency of the estimated influence functions:
\begin{equation}
\mathbb P_n
\left[
\{\widehat\phi_{\overline{\underline{\theta}}}(O)
-
\phi_{\overline{\underline{\theta}}}(O)\}^2
\right]
=o_p(1).
\end{equation}
Thus,
\begin{equation}
\frac{
\widehat{\overline{\underline{\theta}}}
-
\overline{\underline{\theta}}
}{
\widehat{\mathrm{se}}(\widehat{\overline{\underline{\theta}}})
}
\overset{d}{\longrightarrow}
\mathcal N(0,1).
\end{equation}
\end{proof}

\newpage

\section{Experimental Details}
\label{app:experimental_details}

This appendix provides the details needed to reproduce the experiments in Section~\ref{sec:experiments}. Appendix~\ref{app:dgp} specifies the DGPs, Appendix~\ref{app:oracle} derives the oracle quantities against which the
estimators are benchmarked, Appendix~\ref{app:additional_results} reports
results not shown in the main text, and 
Appendix~\ref{app:implementation} describes the model
architectures.

\subsection{Data-generating process (DGP)}
\label{app:dgp}
 
We benchmark the estimators on five data-generating processes that differ along three axes, namely the dimension of the covariate vector, the dependence structure among covariates, and the functional form of the nuisance functions. Three designs with \(3\), \(5\), and \(10\) covariates are used in Section~\ref{sec:experiments}, and two designs with nonlinear treatment, mediator, and outcome models are used in the additional experiments of Appendix~\ref{app:additional_results}. 

\textbf{Components.}
In every design the treatment and mediator are drawn from
\begin{equation}
\label{eq:propensities}
\pi(x)=\mathrm{expit}\{\alpha_A+\eta_A(x)\},
\qquad
p_a(x)=\mathrm{expit}\{\alpha_M+\beta_M a+\eta_M(x)\},
\end{equation}
and the potential outcomes are
\begin{equation}
\label{eq:outcome}
Y(a,m)=\eta_Y(X)+d(X_1)a+q(X_1)m+c\,am+\sigma_y U_b+\sigma_m m U_m,
\end{equation}
with \(U_b,U_m\sim N(0,1)\) independent of each other and of \(X\). The direct and
mediator effects are heterogeneous in \(X_1\),
\begin{equation}
\label{eq:dq}
d(x_1)=d_L+(d_R-d_L)\,\mathrm{expit}(kx_1),
\qquad
q(x_1)=q_L+(q_R-q_L)\,\mathrm{expit}(kx_1),
\end{equation}
with \(d_L>0>d_R\) and \(q_L<0<q_R\), so that treatment helps part of the population and
harms another part along each pathway. This is the regime in which path-specific FNA
carries information beyond the average effect, and it is the feature we hold fixed across
designs so that differences in estimator performance are attributable to dimension,
dependence, and functional form rather than to a change in the target. Counterfactual
mediators are generated from a uniform draw \(U_M\sim\mathrm{Unif}(0,1)\) shared across
arms,
\begin{equation}
\label{eq:mediator_coupling}
M(a')=\mathbbm 1\{U_M<p_{a'}(X)\},
\qquad
M=AM(1)+(1-A)M(0),
\end{equation}
and \(\beta_M>0\) gives \(p_1(x)>p_0(x)\) for every \(x\), hence \(M(1)\ge M(0)\) almost
surely. Table~\ref{tab:dgp_params} lists the parameters of
Eqs.~\eqref{eq:propensities}--\eqref{eq:dq}, which are common to all five designs.
 
\begin{table}[h]
\centering
\caption{\textbf{Parameter values common to all designs.}}
\label{tab:dgp_params}
\footnotesize
\begin{tabular}{llc}
\toprule
\textbf{Component} & \textbf{Parameter} & \textbf{Value} \\
\midrule
Treatment model
& \(\alpha_A\) intercept & \(0.20\) \\
\midrule
\multirow{2}{*}{Mediator model}
& \(\alpha_M\) intercept & \(-0.90\) \\
& \(\beta_M\) treatment effect on mediator & \(2.00\) \\
\midrule
\multirow{5}{*}{Outcome model}
& \(d_L,\,d_R\) direct effect limits & \(3.00,\ -0.75\) \\
& \(q_L,\,q_R\) mediator effect limits & \(-1.10,\ 1.20\) \\
& \(k\) transition sharpness & \(6.0\) \\
& \(c\) treatment-mediator interaction & \(0.10\) \\
& \(\sigma_y,\,\sigma_m\) noise scales & \(0.25,\ 0.15\) \\
\bottomrule
\end{tabular}
\end{table}

\subsubsection{Designs used in the main text}
\label{app:dgp_main}
 
\textbf{Three covariates.}
Two continuous covariates and one binary covariate that depends on them,
\begin{equation}
X_1,X_2\sim N(0,1),
\qquad
X_3\mid X_1,X_2\sim\mathrm{Bern}\{\mathrm{expit}(0.40X_1-0.30X_2)\},
\end{equation}
with \(X_1\) and \(X_2\) drawn independently.

\textbf{Five covariates.}
Three continuous and two binary covariates, drawn independently,
\begin{equation}
X_1,X_2,X_4\sim N(0,1),
\qquad
X_3,X_5\sim\mathrm{Bern}(0.5).
\end{equation}
This design raises the dimension of the nuisance regressions while keeping the covariate
law simple, which isolates dimension from dependence.
 
\textbf{Ten covariates.}
A layered design in which most covariates are generated from earlier ones, so that the
covariate law carries substantial dependence. With \(\varepsilon_j\sim N(0,1)\)
independent across \(j\),
\begin{equation}
\begin{aligned}
&X_1,X_2\sim N(0,1),
&&X_5\sim\mathrm{Bern}(0.5),\\
&X_3\sim\mathrm{Bern}\{\mathrm{expit}(0.40X_1-0.30X_2)\},
&&X_4=0.50X_1+0.30X_2+\varepsilon_4,\\
&X_6=0.60X_3-0.40X_5+\varepsilon_6,
&&X_7=0.30X_4+0.20X_6+\varepsilon_7,\\
&X_8\sim\mathrm{Bern}\{\mathrm{expit}(0.50X_2+0.40X_7)\},
&&X_9=0.40X_1-0.30X_8+\varepsilon_9,\\
&X_{10}\sim\mathrm{Bern}\{\mathrm{expit}(0.30X_4-0.50X_9)\}.
&&
\end{aligned}
\end{equation}
The dependence among covariates makes the nuisance regressions harder to fit but does not
enter the oracle quantities, since all of them condition on the realised \(X\).
 
For these three designs the index functions are linear, and
Table~\ref{tab:dgp_coefs} gives their coefficients.
 
\begin{table}[h]
\centering
\caption{\textbf{Index-function coefficients for the designs of
Section~\ref{sec:experiments}.} Entries are the coefficients of
\(\eta_A\), \(\eta_M\), and \(\eta_Y\) in Eqs.~\eqref{eq:propensities}
and~\eqref{eq:outcome}. A dash marks a covariate not present in the design.}
\label{tab:dgp_coefs}
\footnotesize
\setlength{\tabcolsep}{4pt}
\begin{tabular}{lccccccccc}
\toprule
& \multicolumn{3}{c}{\textbf{3 covariates}}
& \multicolumn{3}{c}{\textbf{5 covariates}}
& \multicolumn{3}{c}{\textbf{10 covariates}} \\
\cmidrule(lr){2-4}\cmidrule(lr){5-7}\cmidrule(lr){8-10}
& \(\eta_A\) & \(\eta_M\) & \(\eta_Y\)
& \(\eta_A\) & \(\eta_M\) & \(\eta_Y\)
& \(\eta_A\) & \(\eta_M\) & \(\eta_Y\) \\
\midrule
\(X_1\)    & \(0.35\)  & \(0.15\)  & \(0.10\)  & \(0.35\)  & \(0.15\)  & \(0.10\)  & \(0.35\)  & \(0.15\)  & \(0.10\)  \\
\(X_2\)    & \(-0.25\) & \(0.40\)  & \(0.30\)  & \(-0.25\) & \(0.40\)  & \(0.30\)  & \(-0.20\) & \(0.30\)  & \(0.25\)  \\
\(X_3\)    & \(0.50\)  & \(-0.30\) & \(-0.20\) & \(0.50\)  & \(-0.30\) & \(-0.20\) & \(0.40\)  & \(-0.25\) & \(-0.15\) \\
\(X_4\)    & --        & --        & --        & \(-0.15\) & \(0.20\)  & \(0.25\)  & \(-0.15\) & \(0.20\)  & \(0.20\)  \\
\(X_5\)    & --        & --        & --        & \(0.30\)  & \(-0.10\) & \(-0.15\) & \(0.25\)  & \(-0.10\) & \(-0.10\) \\
\(X_6\)    & --        & --        & --        & --        & --        & --        & \(-0.10\) & \(0.15\)  & \(0.15\)  \\
\(X_7\)    & --        & --        & --        & --        & --        & --        & \(0.30\)  & \(-0.20\) & \(-0.20\) \\
\(X_8\)    & --        & --        & --        & --        & --        & --        & \(-0.20\) & \(0.25\)  & \(0.10\)  \\
\(X_9\)    & --        & --        & --        & --        & --        & --        & \(0.15\)  & \(-0.15\) & \(0.25\)  \\
\(X_{10}\) & --        & --        & --        & --        & --        & --        & \(-0.25\) & \(0.10\)  & \(-0.15\) \\
\bottomrule
\end{tabular}
\end{table}
 
\subsubsection{Designs with nonlinear nuisance functions}
\label{app:dgp_nonlinear}
 
The two designs below use the covariate law of the three-covariate setting and replace the
linear index functions by nonlinear ones, in the treatment, mediator, and outcome models
simultaneously. The polynomial design introduces curvature and interaction, and the
sinusoidal design introduces bounded oscillation, so the two stress the nuisance learners
in different ways. Table~\ref{tab:dgp_nonlinear} gives the index functions and
Appendix~\ref{app:additional_results} reports the results.
 
\begin{table}[h]
\centering
\caption{\textbf{Index functions for the nonlinear designs.} The covariate law is that of
the three-covariate design, and all parameters in Table~\ref{tab:dgp_params} are
unchanged.}
\label{tab:dgp_nonlinear}
\footnotesize
\begin{tabular}{ll}
\toprule
\textbf{Design} & \textbf{Index functions} \\
\midrule
\multirow{3}{*}{Polynomial}
& \(\eta_A(x)=0.35x_1-0.25x_2+0.50x_3+0.15x_1^2-0.10x_2^2+0.20x_1x_2\) \\
& \(\eta_M(x)=0.15x_1+0.40x_2-0.30x_3-0.10x_1^2+0.20x_1x_2\) \\
& \(\eta_Y(x)=0.10x_1+0.30x_2-0.20x_3+0.15x_1^2-0.10x_1x_2\) \\
\midrule
\multirow{3}{*}{Sinusoidal}
& \(\eta_A(x)=0.55\sin x_1-0.40\cos x_2+0.50x_3\) \\
& \(\eta_M(x)=0.35\sin x_1+0.25\cos x_2-0.30x_3\) \\
& \(\eta_Y(x)=0.25\sin x_1+0.35\cos x_2-0.20x_3\) \\
\bottomrule
\end{tabular}
\end{table}
 
\subsection{Oracle quantities}
\label{app:oracle}
 
The quantities below are available in closed form up to an expectation over \(X\), and the
derivations use only \(d\), \(q\), \(p_a\), and conditional Gaussianity of the outcome, so
they apply to every design of Appendix~\ref{app:dgp}.
 
\textbf{Nested potential outcomes.}
Substituting Eq.~\eqref{eq:mediator_coupling} into Eq.~\eqref{eq:outcome},
\begin{equation}
\begin{aligned}
Y_{00}&:=Y(0,M(0))=\eta_Y(X)+q(X_1)M(0)+\sigma_yU_b+\sigma_mM(0)U_m,\\
Y_{10}&:=Y(1,M(0))=\eta_Y(X)+d(X_1)+\{q(X_1)+c\}M(0)+\sigma_yU_b+\sigma_mM(0)U_m,\\
Y_{11}&:=Y(1,M(1))=\eta_Y(X)+d(X_1)+\{q(X_1)+c\}M(1)+\sigma_yU_b+\sigma_mM(1)U_m.
\end{aligned}
\end{equation}
The pathway contrasts simplify to
\begin{equation}
\label{eq:contrasts}
Y_{10}-Y_{00}=d(X_1)+cM(0),
\qquad
Y_{11}-Y_{10}=\{M(1)-M(0)\}\{q(X_1)+c+\sigma_mU_m\},
\end{equation}
so the baseline \(\eta_Y\) and the shared noise \(U_b\) cancel from both contrasts. This
is what keeps the true FNA comparable across designs even though \(\eta_Y\) changes.
 
\textbf{Average effects.}
The natural direct, natural indirect, and total average effects are
\begin{equation}
\mathrm{NDE}=\E[d(X_1)+c\,p_0(X)],
\qquad
\mathrm{NIE}=\E[\{p_1(X)-p_0(X)\}\{q(X_1)+c\}],
\qquad
\mathrm{ATE}=\mathrm{NDE}+\mathrm{NIE}.
\end{equation}
 
\textbf{True path-specific FNA.}
Since \(M(1)\ge M(0)\) almost surely, Eq.~\eqref{eq:contrasts} gives
\begin{equation}
\fna_{\mathrm{dir}}
=
\E_X\big[
\{1-p_0(X)\}\mathbbm 1\{d(X_1)<0\}
+
p_0(X)\mathbbm 1\{d(X_1)+c<0\}
\big],
\end{equation}
\begin{equation}
\fna_{\mathrm{ind}}
=
\E_X\left[
\{p_1(X)-p_0(X)\}\,
\Phi\!\left(-\frac{q(X_1)+c}{\sigma_m}\right)
\right],
\end{equation}
\begin{equation}
\begin{split}
\fna_{\mathrm{tot}}
=
\E_X\bigg[
\{1-p_1(X)\}\mathbbm 1\{d(X_1)<0\}
+
\{p_1(X)-p_0(X)\}
\Phi\!\left(-\frac{d(X_1)+q(X_1)+c}{\sigma_m}\right)
\\
+
p_0(X)\mathbbm 1\{d(X_1)+c<0\}
\bigg].
\end{split}
\end{equation}
 
\textbf{Oracle conditional CDFs.}
Because the mediator is binary and the outcome is Gaussian conditional on \((A,M,X)\),
\begin{equation}
\label{eq:oracle_cdf}
F_{a,a'}(t\mid X)
=
\{1-p_{a'}(X)\}
\Phi\!\left(
\frac{t-\mu_0(X,a)}{\sigma_y}
\right)
+
p_{a'}(X)
\Phi\!\left(
\frac{t-\mu_1(X,a)}{\sqrt{\sigma_y^2+\sigma_m^2}}
\right),
\end{equation}
where \(\mu_0(X,a)=\eta_Y(X)+d(X_1)a\) and \(\mu_1(X,a)=\mu_0(X,a)+q(X_1)+ca\). Any
dependence among the covariates drops out here, since Eq.~\eqref{eq:oracle_cdf} conditions
on the realised \(X\). These oracle CDFs are used to compute the oracle covariate-assisted
bounds reported in Tables~\ref{tab:estimation_results}
and~\ref{tab:estimation_results_nonlinear}, and are never supplied to either estimator.
 
\textbf{Numerical evaluation.}
Because the mediator is binary, the integral over \(M\) is a two-term sum and is evaluated exactly, as written in Eq.~\eqref{eq:oracle_cdf}. For continuous $M$, Monte Carlo approximation is applied. Every conditional quantity above is therefore available in closed form at each realised covariate value. The remaining expectation over \(X\) is taken as a sample
average over the \(n\) simulated covariates within each replication, so that the oracle
covariate-assisted bounds
\begin{equation}
\underline\theta=\E_X\Big[\max_{t}\{F_U(t\mid X)-F_V(t\mid X)\}\Big],
\qquad
\overline\theta=1+\E_X\Big[\min_{t}\{F_U(t\mid X)-F_V(t\mid X)\}\Big],
\end{equation}
are computed by evaluating Eq.~\eqref{eq:oracle_cdf} on the threshold grid
\(\mathcal T_n\), optimising over \(t\) pointwise in \(X\), and averaging the resulting
per-individual bounds. Reported
oracle values are averages over the \(300\) replications, and the true path-specific FNA is
computed in the same way, from the jointly simulated counterfactual outcomes
\((Y_{00},Y_{10},Y_{11})\).
 
\subsection{Additional experiments under nonlinear nuisance functions}
\label{app:additional_results}
 \begin{table}[t]
\centering
\caption{\textbf{Finite-sample ($n=5000$) performance for estimating covariate-assisted FNA bounds under nonlinear DGPs.} Results: mean over 300 simulations. Bold indicates the better-performing estimator within each pathway.}
\label{tab:estimation_results_nonlinear}
\footnotesize
\setlength{\tabcolsep}{2pt}
\begin{tabular}{clccccccl}
\toprule
\textbf{DGP}
& \textbf{Pathway}
& \textbf{FNA}
& \textbf{Oracle}
& \textbf{Estimator}
& \textbf{Mean est.}
& \textbf{Bias}
& \(\mathbf{Coverage}\)
& \textbf{Width} \\
\midrule
\multirow{6}{*}{\rotatebox[origin=c]{90}{Sinusoidal}}
& \multirow{2}{*}{Direct}
& \multirow{2}{*}{\(0.410\)}
& \multirow{2}{*}{\([0.205,\,0.450]\)}
& Plug-in
& \([0.102,\,0.507]\)
& \((-0.103,\,+0.057)\)
& \(1.000\)
& \(0.406\) \\
&
&
&
& \textbf{Orthogonal}
& \(\mathbf{[0.137,\,0.469]}\)
& \(\mathbf{(-0.068,\,+0.019)}\)
& \(\mathbf{1.000}\)
& \(\mathbf{0.332}\) \\
\addlinespace[1pt]
\cmidrule(l){2-9}
& \multirow{2}{*}{Indirect}
& \multirow{2}{*}{\(0.223\)}
& \multirow{2}{*}{\([0.183,\,0.794]\)}
& Plug-in
& \([0.024,\,0.972]\)
& \((-0.158,\,+0.179)\)
& \(1.000\)
& \(0.948\) \\
&
&
&
& \textbf{Orthogonal}
& \(\mathbf{[0.145,\,0.836]}\)
& \(\mathbf{(-0.037,\,+0.043)}\)
& \(\mathbf{1.000}\)
& \(\mathbf{0.691}\) \\
\addlinespace[1pt]
\cmidrule(l){2-9}
& \multirow{2}{*}{Total}
& \multirow{2}{*}{\(0.220\)}
& \multirow{2}{*}{\([0.070,\,0.332]\)}
& Plug-in
& \([0.092,\,0.499]\)
& \((+0.022,\,+0.168)\)
& \(0.123\)
& \(0.407\) \\
&
&
&
& \textbf{Orthogonal}
& \(\mathbf{[0.055,\,0.356]}\)
& \(\mathbf{(-0.015,\,+0.025)}\)
& \(\mathbf{1.000}\)
& \(\mathbf{0.301}\) \\
\midrule
\midrule
\multirow{6}{*}{\rotatebox[origin=c]{90}{Polynomial}}
& \multirow{2}{*}{Direct}
& \multirow{2}{*}{\(0.410\)}
& \multirow{2}{*}{\([0.226,\,0.450]\)}
& Plug-in
& \([0.131,\,0.494]\)
& \((-0.095,\,+0.044)\)
& \(1.000\)
& \(0.363\) \\
&
&
&
& \textbf{Orthogonal}
& \(\mathbf{[0.160,\,0.464]}\)
& \(\mathbf{(-0.066,\,+0.014)}\)
& \(\mathbf{1.000}\)
& \(\mathbf{0.303}\) \\
\addlinespace[1pt]
\cmidrule(l){2-9}
& \multirow{2}{*}{Indirect}
& \multirow{2}{*}{\(0.223\)}
& \multirow{2}{*}{\([0.177,\,0.803]\)}
& Plug-in
& \([0.036,\,0.952]\)
& \((-0.141,\,+0.149)\)
& \(1.000\)
& \(0.916\) \\
&
&
&
& \textbf{Orthogonal}
& \(\mathbf{[0.121,\,0.853]}\)
& \(\mathbf{(-0.056,\,+0.050)}\)
& \(\mathbf{1.000}\)
& \(\mathbf{0.731}\) \\
\addlinespace[1pt]
\cmidrule(l){2-9}
& \multirow{2}{*}{Total}
& \multirow{2}{*}{\(0.220\)}
& \multirow{2}{*}{\([0.095,\,0.337]\)}
& Plug-in
& \([0.109,\,0.473]\)
& \((+0.015,\,+0.136)\)
& \(0.213\)
& \(0.364\) \\
&
&
&
& \textbf{Orthogonal}
& \(\mathbf{[0.075,\,0.373]}\)
& \(\mathbf{(-0.020,\,+0.036)}\)
& \(\mathbf{1.000}\)
& \(\mathbf{0.298}\) \\
\bottomrule
\multicolumn{9}{p{\textwidth}}{\footnotesize \textbf{Oracle} denotes the oracle covariate-assisted bounds; \textbf{Mean est.} denotes the average estimated bounds; \textbf{FNA} is the true FNA value. \textbf{Bias} is reported as lower/upper bound bias of the estimated bounds against the oracle bounds; \textbf{Coverage} is the fraction of conservative sets covering the oracle bounds; \textbf{Width} is the mean estimated interval width. For all cases, the conservative sets covered the true FNA in 100\% of simulations.}
\end{tabular}
\end{table}
Section~\ref{sec:experiments} varies the covariate dimension and the dependence among covariates while the nuisance functions stay linear on the index scale. Here we vary the functional form instead, using the polynomial and sinusoidal designs of Appendix~\ref{app:dgp_nonlinear}. Sample size, threshold grid, network architectures, cross-fitting scheme, and tuning are unchanged from the main experiments, so the comparison isolates the effect of nonlinear nuisance
functions on estimation.
 
Because \(d\) and \(q\) in Eq.~\eqref{eq:dq} depend on \(X_1\) alone, the true path-specific FNA is close to its value in the three-covariate design, at \(0.410\), \(0.223\), and \(0.220\) for the direct, indirect, and total pathways. The oracle covariate-assisted bounds do move, since they depend on the full conditional CDFs, so each estimator is still benchmarked against the oracle of its own design.
 
\textbf{Results.}
Table~\ref{tab:estimation_results_nonlinear} reports the same metrics as Table~\ref{tab:estimation_results}, averaged over \(300\) simulations. We make three observations. (1)~The orthogonal estimator has smaller bias at both endpoints, for every pathway and under both designs. The gain is largest for the indirect pathway. (2)~The total pathway separates the two estimators most sharply. Plug-in bias is positive at both endpoints, so the estimated interval is displaced upward rather than merely widened, and coverage of the oracle bounds falls to \(0.123\) and \(0.070\) under the sinusoidal and polynomial designs. The
orthogonal estimator keeps both biases below \(0.036\) in absolute value and retains full
coverage. (3)~The orthogonal intervals are also uniformly narrower, so the bias reduction is not bought with additional
conservatism. In every cell and for both estimators, the conservative set contains the
true bounds in all \(300\) simulations. \(\Rightarrow\) \textbf{Takeaway:} \emph{the
finite-sample advantage of the orthogonal estimator persists when the
nuisance functions are nonlinear in the covariates.}
\subsection{Implementation}~\label{app:implementation}

\textbf{Algorithm.}
Algorithm~\ref{alg:orthogonal_fna} summarizes the implementation of our proposed estimator. We use \(K\)-fold cross-fitting with index sets \(\mathcal I_1,\ldots,\mathcal I_K\). For each fold \(k\), we estimate the treatment propensity \(\pi_a(X)\), the mediator model \(g_a(M\mid X)\), and the conditional outcome CDF \(\mu_a(t,M,X)\) on the observations outside \(\mathcal I_k\), and evaluate the orthogonal pseudo-outcomes \(\widehat\Gamma_{a,a'}^t(O_i)\) for \(i\in\mathcal I_k\), all \(t\in\mathcal T_n\), and \((a,a')\in\{(0,0),(1,0),(1,1)\}\).

The second-stage regression is cross-fitted with the same folds. For each fold \(k\), we regress the pseudo-outcomes of the observations outside \(\mathcal I_k\) on their covariates,
\begin{equation}
\widehat F_{a,a'}^{(-k)}(t\mid x)
=
\widehat{\mathbb E}_{-k}
\left[
\widehat\Gamma_{a,a'}^t(O)\mid X=x
\right],
\end{equation}
and use \(\widehat F_{a,a'}^{(-k)}\) only for observations in \(\mathcal I_k\). Thus, the pseudo-outcome \(\widehat\Gamma_{a,a'}^t(O_i)\) never enters the second-stage fit that is evaluated at \(X_i\). Given these estimates, for \(i\in\mathcal I_k\), we compute
\begin{equation}
\widehat\Delta_{X_i}(t)=\widehat F_U^{(-k)}(t\mid X_i)-\widehat F_V^{(-k)}(t\mid X_i),
\end{equation}
select covariate-specific grid maximizers
\begin{equation}
\hat {\underline{t}}(X_i)\in\arg\max_{t\in\mathcal T_n}\widehat\Delta_{X_i}(t),
\qquad
\hat {\overline{t}}(X_i)\in\arg\max_{t\in\mathcal T_n}\{-\widehat\Delta_{X_i}(t)\},
\end{equation}
and evaluate the one-step estimators described in Section~\ref{sec:estimation}.

\emph{Remark.} Because the pseudo-outcomes used to train \(\widehat F_{a,a'}^{(-k)}\) are themselves cross-fitted, their nuisance estimates are trained partly on \(\mathcal I_k\). The maximizers \(\hat{\underline t}(X_i)\) and \(\hat{\overline t}(X_i)\) therefore depend on \(O_i\) only indirectly, through first-stage nuisance fits in which \(O_i\) is one of \(O(n)\) training observations, but which are never used as a second-stage training target. Hence, full independence, as in Assumption~\ref{ass:max_stability}, can still be obtained by nested cross-fitting, in which the pseudo-outcomes used for \(\widehat F^{(-k)}\) are recomputed with nuisances trained only outside \(\mathcal I_k\), at the cost of \(K(K-1)\) nuisance fits.

\begin{algorithm}[h]
\caption{Two-stage orthogonal estimation of path-specific FNA bounds}
\label{alg:orthogonal_fna}
\DontPrintSemicolon
\KwIn{Observed data $\{O_i=(X_i,A_i,M_i,Y_i)\}_{i=1}^n$, threshold grid $\mathcal T_n$, number of folds $K$.}
\KwOut{Estimates $\widehat{\underline{\theta}},\widehat{\overline{\theta}}$, standard errors, and conservative set.}

Split $\{1,\ldots,n\}$ into folds $\mathcal I_1,\ldots,\mathcal I_K$.\;

\tcp{Stage 1: cross-fitted pseudo-outcomes}
\For{each fold $k=1,\ldots,K$}{
Fit nuisance functions $\widehat\pi_a$, $\widehat g_a$, and $\widehat\mu_a$ on $\{O_j: j\notin\mathcal I_k\}$.\;
For all $t\in\mathcal T_n$ and $(a,a')\in\{(0,0),(1,0),(1,1)\}$, compute
$\widehat\Gamma_{a,a'}^t(O_i)$ for $i\in\mathcal I_k$.\;
}

\tcp{Stage 2: cross-fitted conditional nested CDFs}
\For{each fold $k=1,\ldots,K$}{
For each $(a,a')$, fit the second-stage regression on $\{(X_j,\widehat\Gamma_{a,a'}(O_j)): j\notin\mathcal I_k\}$:
\begin{equation*}
\widehat F_{a,a'}^{(-k)}(t\mid x)
=
\widehat{\mathbb E}_{-k}\{\widehat\Gamma_{a,a'}^t(O)\mid X=x\}.
\end{equation*}
For $i\in\mathcal I_k$, compute
$\widehat\Delta_{X_i}(t)=\widehat F_U^{(-k)}(t\mid X_i)-\widehat F_V^{(-k)}(t\mid X_i)$ and the grid maximisers
\begin{equation*}
\hat {\underline{t}}(X_i)\in\arg\max_{t\in\mathcal T_n}\widehat\Delta_{X_i}(t),
\qquad
\hat {\overline{t}}(X_i)\in\arg\max_{t\in\mathcal T_n}\{-\widehat\Delta_{X_i}(t)\}.
\end{equation*}
}

\tcp{One-step estimation and inference}
Compute one-step estimates:
\begin{equation*}
\widehat{\underline{\theta}}
=
\mathbb P_n\left[
\mathbbm 1\{\widehat\Delta_X(\hat {\underline{t}}(X))>0\}
\widehat\Gamma_\Delta^{\hat {\underline{t}}(X)}(O)
\right],
\qquad
\widehat{\overline{\theta}}
=
\mathbb P_n\left[
1+
\mathbbm 1\{-\widehat\Delta_X(\hat {\overline{t}}(X))>0\}
\widehat\Gamma_\Delta^{\hat {\overline{t}}(X)}(O)
\right].
\end{equation*}

Estimate influence functions:
\begin{equation*}
\widehat\phi_{\underline{\theta}}(O)
=
\mathbbm 1\{\widehat\Delta_X(\hat {\underline{t}}(X))>0\}
\widehat\Gamma_\Delta^{\hat {\underline{t}}(X)}(O)
-
\widehat{\underline{\theta}},
\qquad
\widehat\phi_{\overline{\theta}}(O)
=
1+
\mathbbm 1\{-\widehat\Delta_X(\hat {\overline{t}}(X))>0\}
\widehat\Gamma_\Delta^{\hat {\overline{t}}(X)}(O)
-
\widehat{\overline{\theta}}.
\end{equation*}

Calculate standard errors
\begin{equation*}
\widehat{\mathrm{se}}(\widehat{\underline{\theta}})
=
\sqrt{\mathbb V_n(\widehat\phi_{\underline{\theta}})/n},
\qquad
\widehat{\mathrm{se}}(\widehat{\overline{\theta}})
=
\sqrt{\mathbb V_n(\widehat\phi_{\overline{\theta}})/n}.
\end{equation*}

Return the conservative set for FNA
\begin{equation*}
    \left[
\max\{0,\ \widehat{\underline{\theta}}-z_{1-\alpha/2}\widehat{\mathrm{se}}(\widehat{\underline{\theta}})\},
\ 
\min\{1,\ \widehat{\overline{\theta}}+z_{1-\alpha/2}\widehat{\mathrm{se}}(\widehat{\overline{\theta}})\}
\right].
\end{equation*}
\end{algorithm}

The plug-in estimator uses the same nuisance components to estimate \(F_{a,a'}(t\mid X)\) through the mediation g-formula, but then directly plugs the estimated conditional CDFs into the conditional Makarov bounds without the final orthogonal score correction.

\textbf{Model architecture and training details.}

The propensity nuisance \(p(A\mid X)\) and mediator nuisance \(p(M\mid A,X)\) are estimated using shallow neural-network binary classifiers. The propensity model takes \(X\) as input and predicts \(A\), while the mediator model takes \((A,X)\) as input and predicts \(M\). Both models are trained using binary cross-entropy loss.

For the outcome CDF nuisance, $\mu_a(t,M,X)=\Pr(Y\le t\mid A=a,M,X)$,

we use a neural conditional CDF model trained with the continuous ranked probability score (CRPS). The model takes \((A,M,X)\) as input and outputs CDF values on the threshold grid
\(\mathcal T_n=\{t_1,\ldots,t_T\}\). Monotonicity in \(t\) is enforced architecturally, which yields a nondecreasing CDF estimate over the grid. The CRPS loss is approximated by the trapezoidal rule,
\begin{equation}
\mathcal L_{\mathrm{CRPS}}^{\mu}
=
\frac{1}{n}
\sum_{i=1}^n
\sum_{k=1}^{T}
w_k
\left\{
\widehat \mu(A_i,M_i,X_i;t_k)
-
\mathbbm 1(Y_i\le t_k)
\right\}^2,
\end{equation}
where \(w_k\) are trapezoidal integration weights over the threshold grid. This objective is a proper scoring rule for conditional distributions and targets the full conditional CDF.

For the second-stage regression, $X\mapsto F_{a,a'}(t\mid X)$,

we use the same CRPS-style conditional CDF architecture, but train it on pseudo-outcome vectors rather than binary threshold indicators. Specifically, for each observation \(i\), the second-stage target is a pseudo-outcome vector
\begin{equation}
\widehat\Gamma_i
=
\left(
\widehat\Gamma_i(t_1),\ldots,\widehat\Gamma_i(t_T)
\right),
\end{equation}
and the model jointly predicts
\begin{equation}
\widehat F_{a,a'}(t_1\mid X_i),\ldots,
\widehat F_{a,a'}(t_T\mid X_i).
\end{equation}
The implemented pseudo-outcome CRPS loss is
\begin{equation}
\mathcal L_{\mathrm{CRPS}}^{a,a'}
=
\frac{1}{n}
\sum_{i=1}^n
\sum_{k=1}^{T}
w_k
\left\{
\widehat F_{a,a'}(t_k\mid X_i)
-
\widehat\Gamma_i(t_k)
\right\}^2.
\end{equation}
Thus, the second-stage regression estimates the entire conditional CDF jointly across thresholds, rather than fitting separate regressions at each \(t_k\). 

Unless otherwise stated, all neural networks are trained with Adam using a fixed learning rate, weight decay, batch size, and number of epochs. Hyperparameters are held fixed across simulation repetitions and methods.

\textbf{Compute resources}
The implementation is written in PyTorch. PyTorch automatically uses a CUDA GPU when available; otherwise the experiments can be run on CPU with longer runtime.

The main computational cost comes from fitting nuisance models across cross-fitting folds, evaluating pseudo-outcomes over the threshold grid, and training the second-stage conditional CDF regressions. The memory requirement scales as $O(n|\mathcal T_n|)$
for storing pseudo-outcomes and conditional CDF estimates for each nested CDF pair. Holding the network architecture fixed, the dominant training cost scales approximately as
$O(K \times n \times |\mathcal T_n| \times \text{epochs})$,
up to constants depending on the neural-network architecture.

For the binary-mediator DGP, mediator integration is exact and does not require Monte Carlo sampling. For continuous mediators, the same implementation can use Monte Carlo integration, in which case the cost additionally scales with the number of mediator draws.

The reported experiments can be reproduced on a single standard GPU or on a modern multi-core CPU. We did not use large-scale distributed training. Preliminary experiments for debugging the DGP, checking oracle calculations, and validating the CDF learner used the same codebase and comparable per-run resources, but are not needed to reproduce the reported results.

\section{Practical Considerations}
\label{app:practical-considerations}

Our work is primarily foundational. We study how harm can be formalized, decomposed, and partially identified when treatment effects operate through multiple causal pathways. The proposed framework is intended to clarify what can and cannot be learned from data about path-specific harm, rather than to provide an off-the-shelf decision system for immediate deployment.

If such methods are used in practice, they should be applied cautiously and in settings where the causal assumptions, data quality, and domain-specific consequences of decisions can be carefully assessed. In particular, we recommend that practitioners prioritize reliability, ethical oversight, and safe use. Estimates of harm bounds should be interpreted as decision-support tools, not as definitive guarantees about individual outcomes. Since the relevant quantities involve counterfactual comparisons that are never observable, practical conclusions should account for both statistical uncertainty and the inherent ambiguity captured by partial identification.

We also emphasize that there is often no single universally correct notion of harm. Different applications may require different normative choices: for example, whether harm should be evaluated in aggregate, at the individual level, along specific causal pathways, or relative to particular baseline interventions. These choices may lead to different conclusions, not because one notion is necessarily right and another wrong, but because they encode different ethical and practical priorities. Our framework is therefore best viewed as a way to make these choices explicit and to analyze their implications transparently.

Before deployment in high-stakes domains such as healthcare, social policy, or automated decision-making, practitioners should engage domain experts, affected stakeholders, and ethics reviewers to determine which harm notion is appropriate, whether the required assumptions are credible, and how conservative the resulting decision rule should be. When uncertainty is substantial, we recommend erring on the side of caution, reporting sensitivity analyses, and avoiding automated use without human oversight.

\end{document}